\newcommand{\compilefullversion}{true} 
    \definecolor{red}{HTML}{E51400} 
    \definecolor{blue}{HTML}{0050EF} 
    \definecolor{green}{HTML}{008A00} 
    \definecolor{purple}{HTML}{AA00FF} 
    \definecolor{orange}{HTML}{FF7F00}
    \definecolor{gray}{HTML}{848482} 
\newtheorem{fact}{Fact}
\newtheorem{theorem}{Theorem}
\newtheorem{lemma}{Lemma}
\newtheorem{definition}{Definition}
\DeclareMathOperator{\E}{\mathbb{E}}
\DeclareMathOperator{\Var}{\mathrm{Var}}
\DeclareMathOperator{\rad}{\mathrm{rad}}
\DeclareMathOperator*{\argmax}{arg\,max}
\DeclareMathOperator*{\argmin}{arg\,min}
\newcommand{\norm}[1]{\lVert #1 \rVert}
\newcommand{\abs}[1]{|#1|}
\newcommand{\Abs}[1]{\left| #1 \right|}
\renewcommand{\vec}{\boldsymbol}
\newcommand{\alg}[1]{\textnormal{\textsc{#1}}}
\newcommand*{\rom}[1]{\expandafter\@slowromancap\romannumeral #1@}
    \newcommand{\OnlyInFull}[1]{}
    \newcommand{\OnlyInShort}[1]{#1}
    \newcommand{\OnlyInFull}[1]{#1}%
    \newcommand{\OnlyInShort}[1]{}%
\newcommand{\compilehidecomments}{false}
	\newcommand{\wei}[1]{}
	\newcommand{\liang}[1]{}
	\newcommand{\huang}[1]{}
	\newcommand{\wei}[1]{{\color{blue!50!black}  [\text{Wei:} #1]}}
	\newcommand{\liang}[1]{{\color{brown!60!black} [\text{Liang:} #1]}}
	\newcommand{\huang}[1]{{\color{magenta} [\text{Weiran:} #1]}}
\renewcommand\d{\mathrm{d}}
\newcommand\realtheta{\theta^*}
\newcommand\Realtheta{\vec \theta^*}
\newcommand \cB{\mathcal{B}}
\newcommand \cY{\mathcal{Y}}
\newcommand \width{{\rm width}}
\newcommand \Exchange{{\rm Exchange}}
\newcommand \g{\phi}
\newcommand \hardL{{\bf H_\Lambda}}
\newcommand \hardD{{\bf H_\Delta}}
\newcommand \hardU{{\bf H_\Lambda^U}}
\newcommand \monotonicity{{bi-monotonicity}} 
\title{Combinatorial Pure Exploration with Continuous and Separable \\ Reward Functions and Its Applications\OnlyInFull{ (Extended Version)}$^*$}
\author{
Weiran Huang$^{1,\dagger}$, 
Jungseul Ok$^2$, 
Liang Li$^3$,
Wei Chen$^{4,}$\thanks{Corresponding authors.} 
\\ 
$^1$ Tsinghua University,
$^2$ KTH,
$^3$ Ant Financial Group,
$^4$ Microsoft Research\\
huang.inbox@outlook.com,
jungseul@kth.se,
liangli.ll@antfin.com,
weic@microsoft.com
}
\begin{document}

\maketitle

\renewcommand{\thefootnote}{\fnsymbol{footnote}}
\footnotetext[1]{Due to the space constraints, supplementary materials and complete proofs are moved into the \OnlyInShort{extended version \cite{full}}\OnlyInFull{appendix}.  This work was supported in part by the National Basic Research Program of China Grant 2011CBA00300, 2011CBA00301, the National Natural Science Foundation of China Grant 61033001, 61361136003, 61433014. }
\renewcommand{\thefootnote}{\arabic{footnote}}
\OnlyInFull{\thispagestyle{plain}\pagestyle{plain}}

\begin{abstract}

We study the Combinatorial Pure Exploration problem with Continuous and Separable reward functions (CPE-CS)
    in the stochastic multi-armed bandit setting.
In a CPE-CS instance, we are given several stochastic arms with unknown distributions, 
    as well as a collection of possible decisions. 
Each decision has a reward according to the distributions of arms. 
The goal is to identify the decision with the maximum reward, using as few arm samples as possible.
The problem generalizes the 
    combinatorial pure exploration problem with linear rewards, 
    which has attracted significant attention in recent years.
In this paper, we propose an adaptive learning algorithm for the CPE-CS problem, and 
	analyze its sample complexity.
In particular, we introduce a new hardness measure called the consistent optimality hardness, and give both the upper and lower bounds of sample complexity.
Moreover,
    we give examples to demonstrate that our solution has the capacity to deal with non-linear reward functions.

\end{abstract}
\section{Introduction}

The stochastic multi-armed bandit model is a predominant model for characterizing the trade-off between exploration and exploitation in a variety of application fields with stochastic environments.
In this model, we are given a set of stochastic arms associated with unknown distributions. 
Upon each play of an arm, the player can get a reward sampled from the corresponding distribution. 
The most well studied objective is to maximize the cumulative reward, or minimize the cumulative regret, e.g., \cite{lai1985asymptotically,auer2002nonstochastic,auer2002finite,bubeck2012regret}.
Another popular objective is to identify the optimal arm with high probability  by
	adaptively sampling arms based on the feedback collected.
This is called the pure exploration version of the multi-armed bandit problem~\cite{bubeck2010pure,audibert2010best,gabillon2012best}.

Instead of identifying the single optimal arm, there are a class of extended problems identifying the optimal combinatorial decision, 
    e.g., top-$k$ arm identification~\cite{kalyanakrishnan2010efficient,kalyanakrishnan2012pac,bubeck2013multiple,kaufmann2013information,zhou2014optimal}, 
    multi-bandit best arm identification~\cite{NIPS2011_4478}, 
    and their extension, Combinatorial Pure Exploration with Linear reward functions (CPE-L)~\cite{chen2014combinatorial,ChenGL16}, etc.
In CPE-L~\cite{chen2014combinatorial}, the rewards are linear
    functions on the means of underlying arms, and the decision class is subsets of arms satisfying certain 
    combinatorial constraints. 

In this paper, we further generalize CPE-L problems 
	to a large class of Combinatorial Pure Exploration with Continuous and Separable reward functions (CPE-CS)
	(see Section~\ref{sec:def} for the technical definition).
We propose the Consistently Optimal Confidence Interval (COCI) algorithm to solve the CPE-CS problem.
To analyze its sample complexity, we define a new arm-level measure called {\em consistent optimality radius}
	$\Lambda_i$ of arm $i$ and a new hardness measure called {\em consistent optimality hardness} 
	$\hardL = \sum_{i=1}^m 1/\Lambda_i^2$, where $m$ is the number of arms.
We prove that with probability at least $1-\delta$, COCI finds the optimal solution 
    in $O(\hardL\log (\hardL {\delta}^{-1}))$ rounds.
We also show that CPE-CS problems have a lower bound $\Omega(\hardL + \hardL m^{-1}\log\delta^{-1})$ in expectation,
	indicating that the hardness $\hardL$ is necessary.

We demonstrate the usefulness of CPE-CS by two applications. 
The first one is water resource planning \cite{bradley1977applied}.
The goal is to remove waste at water sources of an area.
One can first do some purification tests at different sources to estimate the water quality responses, 
    and then determines the final allocation of purification powers among different sources.
One need to balance the trade-off between the purification power and the cost, and usually the objective function is non-linear.
This application can be generalized to other urban planning scenarios such as air pollution control, crime control, etc.
The second application is partitioned opinion sampling \cite{bethel1986optimum,ballin2013joint,huang2017partitioned}.
The opinion polling is done by partitioning people into groups and sampling each group separately with different sample budget to improve
	the sample quality.
One can first do some tests in each group to estimate its opinion variance,
    and then determines the sample size for each group under the total sample budget for the formal sampling process.
In this case, the objective function is also non-linear.
Furthermore, we show that the COCI algorithm also solves the CPE-L problem with the same sample
	complexity as the CLUCB algorithm proposed by \citet{chen2014combinatorial}.

In summary, our contributions include: 
	(a) studying the combinatorial pure exploration problem with continuous and separable functions and proposing the COCI algorithm as its solution,
    (b) analyzing the sample complexity of COCI and providing both its lower and upper bounds with a novel hardness measure,
	and (c) applying the CPE-CS framework to water resources planning and partitioned opinion sampling with non-linear reward 
	functions to demonstrate the usefulness of the CPE-CS framework and the COCI algorithm.

{\bf Related Work.}
Pure exploration bandit studies adaptive learning methods to identify the optimal solution.
Best arm identification~\cite{bubeck2010pure,audibert2010best,gabillon2012best}, 
    top-$k$ arm identification~\cite{kalyanakrishnan2010efficient,kalyanakrishnan2012pac,bubeck2013multiple,kaufmann2013information,zhou2014optimal}, 
    the multi-bandit best arm identification~\cite{NIPS2011_4478} have been studied in the literature.
\citet{chen2014combinatorial,ChenGL16} generalize these studies to Combinatorial Pure Exploration with Linear reward functions (CPE-L).
\citet{soare2014best} also study the linear reward functions, but the player is required to select a decision to play instead of a single arm to sample in each round.
A very recent paper~\cite{ChenGLQW17} studies the CPE problems beyond linear reward functions, but
	their model assumes arms with Gaussian distributions and only works with the mean estimator, 
    while our CPE-CS only requires bounded distributions and also works for variance estimators.
Moreover, for efficient implementations, they need a pseudo-polynomial algorithm for the exact query
	besides the maximization oracle, but our solution only needs a maximization oracle.

A related online learning problem is multi-armed bandit (MAB), e.g., \cite{lai1985asymptotically,auer2002nonstochastic,auer2002finite,bubeck2012regret}.
The goal of MAB is to maximize cumulative rewards over multiple rounds, and the key is to balance
	exploration and exploitation during the learning process.
In contrast, in pure exploration, the key is the adaptive exploration in the learning process
	to quickly find the optimal solution, and thus it is fundamentally different from MAB~\cite{bubeck2010pure}.
Combinatorial MAB is a popular topic in recent years
\cite{cesa2012combinatorial,Yi2012,CWYW16,ChenGeneral16,GMM14,KWAEE14,KWAS15,Combes2015},
	but their goals and techniques are very different from ours.
    

\section{Problem Definition} \label{sec:def}

An instance of combinatorial pure exploration bandit problems consists of 
(a) a set of $m$ arms $[m]=\{1,\ldots, m\}$, each arm $i$ being 
associated with an unknown distribution $D_i$ with range $[0,1]$ and a key unknown parameter
$\realtheta_i\in[0,1]$ of $D_i$, 
(b) a finite set of decisions $\cY\subseteq \mathbb{R}^m$, with each decision $\vec{y}=(y_1, \ldots, y_m)$ as a
vector, and
(c) a real-valued (expected) reward function $r(\vec{\theta}; \vec{y})$ 
with vector $\vec{\theta}$ taken from the parameter space $[0,1]^m$ 
and $\vec{y}\in \cY$.
In each round $t= 1, 2, \dots$, a player selects one arm $i\in [m]$ to play, and observes a sample
independently drawn from $D_i$ as the feedback.
The player needs to decide based on the observed feedback so far if she wants to continue to play arms. 
If so, she needs to decide which arm to play next; if not, she needs to output a decision $\vec{y}^o \in \cY$ such that
with high probability $\vec{y}^o$ is the optimal decision maximizing the reward $r(\Realtheta; \vec{y}^o)$, where
$\Realtheta = (\realtheta_1, \ldots, \realtheta_m)$ is the vector of
the true underlying parameters of the unknown
distributions $\vec{D} = (D_1, \ldots, D_m)$.

\begin{definition}
    Given a combinatorial pure exploration instance $([m], \cY, r(\cdot ; \cdot), \vec{D}, \Realtheta)$ and 
    a confidence error bound $\delta$, 
    the {\em combinatorial pure exploration (CPE) problem} requires the design of an algorithm
    with the following components: 
    (a) a stopping condition, which decides whether the algorithm should stop in the current round,
    (b) an arm selection component, which selects the arm to play in the current round 
    when the stopping condition is false, and
    (c) an output component, which outputs the decision $\vec{y}^o$ when
    the stopping condition is true.
    The algorithm could only use $([m], \cY, r(\cdot; \cdot))$ and the feedback from previous rounds
    as inputs, and should guarantee that with probability at least $1-\delta$ the output $\vec y^o$ is an 
    optimal decision, i.e., $\vec{y}^o \in \argmax_{\vec{y}\in \cY} r(\Realtheta; \vec{y})$.
\end{definition}

A standard assumption for CPE problems is that the optimal decision under the true parameter vector $\Realtheta$
is unique, i.e., $\vec{y}^* = \argmax_{\vec{y}\in \cY} r(\Realtheta; \vec{y})$.
The performance of a CPE algorithm is measured by its {\em sample complexity}, which is the number of 
rounds taken when the algorithm guarantees its output to be the optimal one with
probability at least $1-\delta$.

We say that a reward function $r(\vec{\theta}; \vec{y})$ is {\em continuous} if 
$r(\vec{\theta}; \vec{y})$ is continuous in $\vec{\theta}$ for every $\vec{y} \in \cY$,
and {\em (additively) separable} if
there exist functions $r_1, \ldots, r_m$ such that 
$r(\vec{\theta}; \vec{y})=\sum_{i=1}^m r_i(\theta_i, y_i)$.
We use CPE-CS to denote the class of CPE problems with Continuous and Separable reward functions and 
each parameter $\realtheta_i$  of arm $i$ can either be mean $\E_{X\sim D_i}[X]$ or 
variance $\Var_{X\sim D_i}[X]$.\footnote{Other parameter $\theta_i^*$ of $D_i$ is also acceptable 
	if it has an unbiased estimator from the samples of $D_i$. Only a minor change is needed
	in the formula of confidence radius in COCI (Algorithm~\ref{alg:general}).}
We use $\alg{Est}_i(X_{i,1}, X_{i,2},\ldots, X_{i,s})$ to denote the unbiased 
estimator for parameter 
$\realtheta_i$ from $s$	i.i.d.\ observations $X_{i,1}, X_{i,2},\ldots, X_{i,s}$ of the $i$-th arm.
In particular, for the mean estimator, 
$\alg{Est}_i(X_{i,1}, X_{i,2}, \dots, X_{i,s}) = \frac{1}{s} \sum_{j=1}^s X_{i,j}$,
and for the variance estimator, 
$\alg{Est}_i(X_{i,1}, X_{i,2}, \dots, X_{i,s})=\frac{1}{s-1} \left(\sum_{j=1}^{s}X_{i,j}^2-\frac{1}{s}(\sum_{j=1}^{s}X_{i,j})^2\right)$.
Notice that the variance estimator needs at least two samples.
We also define $\g\colon [0,1]^m \rightarrow \cY $ to be a deterministic tie-breaking maximization oracle such that for any $\vec \theta\in {[0,1]^m}$,
$\g(\vec\theta)=(\g_1(\vec\theta), \dots, \g_m(\vec\theta))\in \argmax_{\vec y\in \cY} r(\vec \theta; \vec y)$
and it always outputs the same optimal solution,
called the {\em leading optimal solution}, under the same parameter $\vec\theta$.

CPE-CS encompasses the important CPE problems with Linear reward
functions (CPE-L).
In CPE-L, parameter $\realtheta_i$ is the mean of arm $i$ for each $i\in[m]$.
Each decision is a subset of $[m]$, which can be represented as an $m$-dimensional binary vector.
Thus, the decision space $\cY$ is a subset of $\{0,1\}^m$, and each vector $\vec{y}=(y_1, \ldots, y_m)\in \cY$
represents a subset of arms $S_{\vec y} = \{i\in [m] \colon y_i = 1 \}$.
Moreover, the reward function $r(\vec{\theta}; \vec{y}) = \sum_{i=1}^m \theta_i \cdot y_i$ is continuous and separable.
\section{Solving CPE-CS} \label{sec:cpecs}

In this section, we propose the Consistently Optimal Confidence Interval (\alg{COCI}) Algorithm for CPE-CS, and analyze its sample complexity.
En route to our sample complexity bound, we introduce a new concept of arm-level {\em consistently optimal radius} $\Lambda_i$ of
	each arm $i$, which leads to a new
	hardness measure $\hardL$.
We first introduce the components and notations which will be used in the algorithm.

\newcommand\mycommfont[1]{\footnotesize\sl{#1}}
\SetCommentSty{mycommfont}
\begin{algorithm}[t] 
    \caption{\alg{COCI}: Consistently Optimal Confidence Interval Algorithm for CPE-CS}\label{alg:general}
    \KwIn{Confidence error bound $\delta\in(0,1)$, maximization oracle $\g$.}
    \KwOut{$\vec y^o = (y_1, y_2, \dots, y_m) \in \mathcal{Y}$. }
    
    $t\gets\tau m$\tcp*{$\tau = 1$ for the mean estimator and $\tau=2$ for the variance estimator} \label{line:initestb}
    \For{$i = 1, 2, \dots, m$}{
                
        observe the $i$-th arm  $\tau$ times $X_{i,1}, \dots, X_{i,\tau}$\label{algline:initialize}\;
        $T_{i,t} \gets \tau$\;
        
        estimate $\hat{\theta}_{i,t} \gets \alg{Est}_i(X_{i,1}, \dots, X_{i,T_{i,t}})$\;\label{algline:estimation}
        $\rad_{i,t} \gets \sqrt{\frac{1}{2T_{i,t}}\ln\frac{4t^3}{\tau \delta}}$\tcp*{confidence radius}
        $\hat\Theta_t \hspace{-0.1em}\gets \{\vec\theta\in[0,1]^m \colon |\theta_i-\hat\theta_{i,t}|\le \rad_{i,t}, \forall i\in[m]\}$\;\label{line:initeste}
    }
    
    \For{$t = \tau m+1, \tau m+2, \tau m+3, \dots$}{
        $C_{t} \gets \emptyset$\; \label{line:candidateb}
        \For{$i = 1, 2, \dots, m$}{
            \If{ $\max_{\vec\theta\in\hat\Theta_{t-1}} \phi_i(\vec\theta) \ne \min_{\vec\theta\in\hat\Theta_{t-1}} \phi_i(\vec\theta)$\label{algline:oracle}}{
               $C_{t} \gets C_{t}\cup\{i\}$\;\label{line:candidatee}
            }
        }
        \If{$C_{t}=\emptyset$ \label{line:stopcond}}{
        	{\bf return} $\vec{y}^o = \g(\vec{\theta})$ for an arbitrary $\vec{\theta}\in \hat\Theta_{t-1}$\;
        	\label{line:COCIoutput}
        }
        $j \gets \argmax_{i\in C_{t}}\rad_{i,t-1}$\; \label{line:largestradius}
        $T_{j,t} \gets T_{j,t-1}+1$;
        $T_{i,t} \gets T_{i,t-1}$ for all $i\ne j$\; \label{line:updatejb}
        play the $j$-th arm and observe the outcome $X_{j,T_{j,t}}$\;
        update $\hat{\theta}_{j,t} \gets \alg{Est}_j(X_{j,1},\ldots, X_{j,T_{j,t}})$\;
	    update $\hat{\theta}_{i,t} \gets \hat{\theta}_{i,t-1}$ for all $i\ne j$\;
        update $\rad_{i,t} \gets \sqrt{\frac{1}{2T_{i,t}}\ln\frac{4t^3}{\tau \delta}}$ for all $i\in[m]$\;
        $\hat\Theta_t \hspace{-0.1em} \gets \{\vec\theta\in[0,1]^m \colon |\theta_i-\hat\theta_{i,t}|\le \rad_{i,t}, \forall i\in[m]\}$\; \label{line:updateje}
    }
\end{algorithm}

The algorithm we propose for CPE-CS (Algorithm~\ref{alg:general}) is based on the confidence intervals
	of the parameter estimates.
The algorithm maintains the confidence interval space $\hat\Theta_{t}$ for every round $t$ to
	guarantee that the true parameter $\Realtheta$ is always in $\hat\Theta_{t}$ for all $t>0$ with probability
	at least $1-\delta$.
After the initialization (lines~\ref{line:initestb}--\ref{line:initeste}), 
	in each round $t$, the algorithm first computes the candidate set $C_t\subseteq [m]$
	(lines~\ref{line:candidateb}--\ref{line:candidatee}).
According to the key condition in line~\ref{algline:oracle}, $C_t$ contains the $i$-th arm 
	if $\max_{\vec\theta\in\hat\Theta_{t-1}} \phi_i(\vec\theta) \ne \min_{\vec\theta\in\hat\Theta_{t-1}} \phi_i(\vec\theta)$ (this is a logical condition, and its actual implementation will be discussed
	in Section~\ref{sec:bimonotonicity}).
The stopping condition is $C_t =\emptyset$ (line~\ref{line:stopcond}),
	which means that within the confidence interval space,
	all leading optimal solutions are the same. 
In this case, the algorithm returns the leading optimal solution under any $\vec\theta\in \hat\Theta_{t-1}$ as the final output (line~\ref{line:COCIoutput}).
Notice that if the true parameter $\vec\theta^*$ is in $\hat\Theta_{t-1}$, then the output is the true optimal solution $\vec y^o=\g(\Realtheta)=\vec y^*$.
If $C_t \ne \emptyset$, then the algorithm picks any arm $j$ with the largest
	confidence radius (line~\ref{line:largestradius}), plays this arm, observes its feedback, and
	updates its estimate $\hat{\theta}_{j,t}$ and confidence radius $\rad_{j,t}$ accordingly
	(lines~\ref{line:updatejb}--\ref{line:updateje}).
Intuitively, arm $j$ is the most uncertain arm causing inconsistency, thus the algorithm picks it to play first.
Since the key stopping condition is that the leading optimal solutions for all $\vec{\theta}\in \hat{\Theta}_{t-1}$ 
	are consistently optimal, 
	we call our algorithm Consistently Optimal Confidence Interval (\alg{COCI}) algorithm.

Before analyzing the sample complexity of the \alg{COCI} algorithm,
    we first introduce the (arm-level) {\em consistent optimality
	radius} for every arm $i$, which is formally defined below.
\begin{definition}\label{def:Lambdai}
    For all $i\in[m]$, the {\em consistent optimality radius} $\Lambda_i$ for arm $i$ is defined as: 
%
    \begin{equation*}
    \Lambda_i 
    =\inf_{\vec\theta: \g_i(\vec\theta) \neq \g_i(\Realtheta)} \left\| \vec\theta-\Realtheta \right\|_\infty.
    \end{equation*}
\end{definition}

Intuitively, $\Lambda_i$ measures how far $\vec \theta$ 
	can be away from $\Realtheta$ (in infinity norm)
	while the leading optimal solution under $\vec{\theta}$ is still consistent with the true optimal one
	in the $i$-th dimension, as precisely stated below.
\begin{restatable}{proposition}{LambdaProposition}\label{proposition:Lambda}
    $\forall i\in [m]$, if $\abs{\theta_j-\realtheta_j}<\Lambda_i$ holds for all $j\in[m]$, then $\g_i(\vec\theta)=\g_i(\Realtheta)$.
\end{restatable}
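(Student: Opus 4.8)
The plan is to prove the statement by contraposition, since it is essentially an unwinding of the definition of $\Lambda_i$. First I would observe that the hypothesis ``$\abs{\theta_j - \realtheta_j} < \Lambda_i$ for all $j \in [m]$'' is precisely the statement that $\norm{\vec\theta - \Realtheta}_\infty < \Lambda_i$, since the infinity norm is the maximum of the coordinate-wise absolute differences and each of these differences is strictly below $\Lambda_i$.

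The contrapositive I aim to establish is then: if $\g_i(\vec\theta) \neq \g_i(\Realtheta)$, then $\norm{\vec\theta - \Realtheta}_\infty \geq \Lambda_i$. This follows directly from the definition of $\Lambda_i$ as an infimum. Indeed, suppose $\g_i(\vec\theta) \neq \g_i(\Realtheta)$. Then $\vec\theta$ belongs to the set $S_i := \{\vec\theta' : \g_i(\vec\theta') \neq \g_i(\Realtheta)\}$ over which the infimum defining $\Lambda_i$ is taken. Since $\Lambda_i = \inf_{\vec\theta' \in S_i} \norm{\vec\theta' - \Realtheta}_\infty$ is a lower bound for the distances of all points in $S_i$, we obtain $\norm{\vec\theta - \Realtheta}_\infty \geq \Lambda_i$, which is exactly the contrapositive, and hence the proposition follows.

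There is essentially no hard step here, but the one point that warrants care is the interplay between the strict inequality in the hypothesis and the (non-strict) infimum in the definition: the infimum need not be attained, so a point could in principle sit exactly at distance $\Lambda_i$ while disagreeing in coordinate $i$. The strict inequality $\norm{\vec\theta - \Realtheta}_\infty < \Lambda_i$ safely excludes this boundary case, which is why the proposition is stated with a strict bound. I would also briefly dispatch the degenerate case in which $S_i = \emptyset$, where by the usual convention $\Lambda_i = +\infty$; in that situation $\g_i(\vec\theta) = \g_i(\Realtheta)$ holds for every $\vec\theta$ and the conclusion is immediate.
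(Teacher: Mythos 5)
Your proof is correct and follows essentially the same route as the paper's: both unwind the definition of $\Lambda_i$ as an infimum and argue by contraposition that $\g_i(\vec\theta)\ne\g_i(\Realtheta)$ forces $\norm{\vec\theta-\Realtheta}_\infty\ge\Lambda_i$, i.e., some coordinate $j$ has $\abs{\theta_j-\realtheta_j}\ge\Lambda_i$. Your added remarks on the strict inequality versus a possibly unattained infimum, and on the empty-set convention $\Lambda_i=+\infty$, are sound clarifications the paper leaves implicit.
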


The following lemma shows that the consistent optimality radii are all
	positive, provided by that the reward function is continuous and the true optimal decision $\vec y^*$ is unique.
\begin{restatable}{lemma}{LambdaPositive}
	If the reward function $r(\vec{\theta}; \vec{y})$ is continuous on $\vec{\theta}$ for every
	$\vec{y} \in \cY$, and
	the optimal decision $\vec y^*$ under the true parameter vector $\Realtheta$ is unique, then $\Lambda_i$ is positive for every $i\in[m]$.
\end{restatable}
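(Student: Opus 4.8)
The plan is to exhibit an explicit positive radius $\epsilon$ around $\Realtheta$ inside which the leading optimal solution cannot change; this immediately lower-bounds the infimum defining $\Lambda_i$. The starting point is to exploit the uniqueness of $\vec y^*$ together with the finiteness of $\cY$ to extract a strictly positive reward gap
\[
\Delta = \min_{\vec y \in \cY \setminus \{\vec y^*\}} \bigl( r(\Realtheta; \vec y^*) - r(\Realtheta; \vec y) \bigr) > 0,
\]
where the minimum is attained and positive because it is taken over a finite set and each term is strictly positive by the uniqueness assumption on $\vec y^*$.

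Next I would use continuity to propagate this gap to a whole neighborhood. For each fixed $\vec y \in \cY$, continuity of $r(\cdot; \vec y)$ at $\Realtheta$ yields some $\epsilon_{\vec y} > 0$ with $\abs{r(\vec\theta; \vec y) - r(\Realtheta; \vec y)} < \Delta/2$ whenever $\norm{\vec\theta - \Realtheta}_\infty < \epsilon_{\vec y}$. Setting $\epsilon = \min_{\vec y \in \cY} \epsilon_{\vec y}$, which is positive precisely because $\cY$ is finite, a triangle-inequality argument shows that for every $\vec\theta$ with $\norm{\vec\theta - \Realtheta}_\infty < \epsilon$ and every $\vec y \neq \vec y^*$ we have $r(\vec\theta; \vec y^*) > r(\Realtheta; \vec y^*) - \Delta/2 \ge r(\Realtheta; \vec y) + \Delta/2 > r(\vec\theta; \vec y)$; thus $\vec y^*$ remains the \emph{unique} maximizer throughout this open ball.

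Finally, since $\g$ is a deterministic tie-breaking maximization oracle, whenever $\vec y^*$ is the unique maximizer at $\vec\theta$ we must have $\g(\vec\theta) = \vec y^* = \g(\Realtheta)$, and in particular $\g_i(\vec\theta) = \g_i(\Realtheta)$ for every $i\in[m]$. Consequently no $\vec\theta$ in the open ball of radius $\epsilon$ can satisfy $\g_i(\vec\theta) \neq \g_i(\Realtheta)$, so every $\vec\theta$ admissible in the infimum defining $\Lambda_i$ lies at $\ell_\infty$-distance at least $\epsilon$ from $\Realtheta$, giving $\Lambda_i \ge \epsilon > 0$.

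I do not foresee a serious obstacle: the finiteness of $\cY$ does the heavy lifting, simultaneously guaranteeing a strictly positive gap $\Delta$ and converting the pointwise continuity of each $r(\cdot;\vec y)$ into a uniform neighborhood. The only step requiring a little care is the passage from ``$\vec y^*$ is the unique maximizer at $\vec\theta$'' to ``$\g(\vec\theta) = \vec y^*$'', which relies on the defining property of the tie-breaking oracle (it returns a maximizer, hence the unique one here) rather than on any continuity of $\g$ itself.
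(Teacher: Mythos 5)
Your proof is correct, but it takes a genuinely different route from the paper's. The paper argues by contradiction: assuming $\Lambda_i=0$ for some $i$, it constructs a sequence $\vec\theta^{(n)}\to\Realtheta$ whose leading optimal solutions differ from $\vec y^*$, uses the finiteness of $\cY$ to extract a constant subsequence $\tilde{\vec y}\neq\vec y^*$ satisfying $r(\vec\theta^{(n)};\tilde{\vec y})\ge r(\vec\theta^{(n)};\vec y^*)$, and passes to the limit to conclude $r(\Realtheta;\tilde{\vec y})\ge r(\Realtheta;\vec y^*)$, contradicting uniqueness --- the classic ``a limit of maximizers is a maximizer'' argument. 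You instead argue directly: uniqueness plus finiteness yield a positive reward gap $\Delta$ at $\Realtheta$, and continuity of the finitely many functions $r(\cdot;\vec y)$ converts this into a uniform radius $\epsilon$ inside which $\vec y^*$ remains the strict maximizer, forcing the deterministic oracle to return it and giving $\Lambda_i\ge\epsilon$ for all $i$ simultaneously. Your route buys a constructive, quantitative lower bound on the radii, tied explicitly to the gap $\Delta$ and the moduli of continuity; this gap-propagation mechanism is in fact the same one the paper later exploits to prove $\Lambda_i\ge\Delta_i/\width(\cY)$ for CPE-L (Lemma~\ref{lem:LambdaDelta}). The paper's route buys lighter bookkeeping --- no $\Delta/2$ triangle-inequality chain, no uniform $\epsilon$ over $\cY$ --- at the price of being non-constructive. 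Your one delicate step, passing from ``$\vec y^*$ is the unique maximizer at $\vec\theta$'' to ``$\g(\vec\theta)=\vec y^*$'' via the defining property of the tie-breaking oracle, is handled correctly, and the strict-inequality chain is sound.
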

Given that the consistent optimality radii are all positive, we can introduce the key hardness measure
	used in the sample complexity analysis.
We define {\em consistent optimality hardness} as
	$\hardL = \sum_{i=1}^m \frac{1}{\Lambda_i^2}$.
The following theorem shows our primary sample complexity result for the COCI algorithm.
\begin{restatable}{theorem}{samplecomplexity} \label{thm:samplecomplexity}
	With probability at least $1-\delta$, the COCI algorithm (Algorithm~\ref{alg:general})
	returns the unique true optimal solution $\vec{y}^o = \vec{y}^*$, and the number of rounds (or samples) $T$ satisfies
	\begin{align} 
	T &\leq 2m+12\hardL\ln 24\hardL +4\hardL\ln\frac{4}{\tau\delta }\nonumber \\
	&= O\left( \hardL \log \frac{\hardL} {\delta}  \right).\label{eq:samplecomplexity}
	\end{align}
\end{restatable}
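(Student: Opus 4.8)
The plan is to split the statement into two essentially independent parts: a \emph{correctness} part, showing that a single high-probability ``good event'' forces the output to equal $\vec y^*$, and a \emph{deterministic} sample-complexity part that bounds $T$ on that same event. For correctness I would define $\mathcal E=\{\Realtheta\in\hat\Theta_t \text{ for all } t\ge\tau m\}$ and argue $\Pr[\mathcal E]\ge 1-\delta$. The first observation is that for any arm $i$ \emph{not} played at round $t$, its estimate $\hat\theta_{i,t}$ is unchanged while $\rad_{i,t}$ strictly grows (the factor $\ln\frac{4t^3}{\tau\delta}$ increases with $t$), so the box $\hat\Theta_t$ can drop $\Realtheta$ only at a round where a \emph{fresh} sample is drawn. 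Hence $\mathcal E^c$ is contained in the union over pairs $(i,s)$ of the events ``arm $i$'s $s$-th sample violates its confidence bound, measured at the round $t^*(i,s)$ at which it is taken.'' The crucial structural fact is that $t^*(i,s)\ge \tau(m-1)+s$ holds \emph{deterministically} (after initialization each main-loop round adds one sample, so reaching $s\ge\tau$ samples of a single arm needs at least $\tau(m-1)+s$ total rounds). Replacing $t^*(i,s)$ by this deterministic lower bound only enlarges the event and turns it into a tail event depending solely on the first $s$ i.i.d.\ samples of arm $i$, to which Hoeffding (for the mean, and the analogous sample-variance bound for $\tau=2$) applies, giving failure probability at most $\frac{\tau\delta}{2(\tau(m-1)+s)^3}$. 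Summing over $i$ and $s$, the sum over $s$ begins near $\tau m$ and contributes $O\!\big(1/(\tau m)^2\big)$, which cancels the factor $m$ from the union over arms and yields $\Pr[\mathcal E^c]\le\delta$.

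On $\mathcal E$ the output is correct for a short reason: when \alg{COCI} stops at round $t$ we have $C_t=\emptyset$, i.e.\ every $\g_i$ is constant on $\hat\Theta_{t-1}$; since $\Realtheta\in\hat\Theta_{t-1}$, this constant value is $\g_i(\Realtheta)=y_i^*$, so $\vec y^o=\g(\Realtheta)=\vec y^*$, the unique optimum.

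The engine of the sample-complexity bound is the following per-arm lemma, which I would state and prove first: whenever arm $i$ is played at round $t$, its pre-play radius satisfies $\rad_{i,t-1}\ge \Lambda_i/2$. The proof idea is that $i\in C_t$ provides a witness $\vec\theta\in\hat\Theta_{t-1}$ with $\g_i(\vec\theta)\ne\g_i(\Realtheta)$; Proposition~\ref{proposition:Lambda} (contrapositive) forces $\norm{\vec\theta-\Realtheta}_\infty\ge\Lambda_i$; and since both $\vec\theta$ and $\Realtheta$ lie in the box, some coordinate $k$ satisfies $2\,\rad_{k,t-1}\ge|\theta_k-\realtheta_k|\ge\Lambda_i$. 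Because the played arm $i$ is the largest-radius index in $C_t$, it then remains to guarantee that this extremal coordinate $k$ can be taken \emph{inside} $C_t$, whence $\rad_{i,t-1}\ge\rad_{k,t-1}\ge\Lambda_i/2$. I expect this last step to be the main obstacle: one must show that projecting the coordinates outside $C_t$ back onto $\Realtheta$ does not destroy the inconsistency $\g_i(\vec\theta)\ne\g_i(\Realtheta)$, so that the $\infty$-norm is realized by a candidate coordinate. This is precisely where the separability of the reward and the definition of $C_t$ must be used, and it is the delicate part of the whole argument.

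Granting the lemma, the rest is bookkeeping. At arm $i$'s last play we get $T_{i,t-1}\le \frac{2}{\Lambda_i^2}\ln\frac{4T^3}{\tau\delta}$, so its total count is at most $1+\frac{2}{\Lambda_i^2}\ln\frac{4T^3}{\tau\delta}$; summing over $i$ and adding the $\tau m\le 2m$ initialization samples gives the implicit inequality $T\le 2m+2\hardL\ln\frac{4T^3}{\tau\delta}=2m+6\hardL\ln T+2\hardL\ln\frac{4}{\tau\delta}$. The final step is to invert this transcendental inequality. Writing the right-hand side as $f(T)$, it is increasing and concave with $f'(T)=6\hardL/T<1$ once $T>6\hardL$, so any $T$ satisfying $T\le f(T)$ is bounded by the largest root of $f(T)=T$; it therefore suffices to verify that the claimed value $T_0=2m+12\hardL\ln 24\hardL+4\hardL\ln\frac{4}{\tau\delta}$ satisfies $f(T_0)\le T_0$, which is a direct (if tedious) computation. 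This yields $T\le T_0=O\!\big(\hardL\log\frac{\hardL}{\delta}\big)$, completing the proof.
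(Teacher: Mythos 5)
Your proposal follows the same architecture as the paper's proof: a good event on which all confidence boxes contain $\Realtheta$ (the paper's event $\xi$, shown to hold with probability $\ge 1-\delta$ via McDiarmid's inequality), a per-arm lemma saying that an arm played at round $t$ must satisfy $\rad_{i,t-1}\ge \Lambda_i/2$ (this is exactly the contrapositive of the paper's Lemma~\ref{lem:radius}), the implicit inequality $T\le m+2\hardL\ln\frac{4T^3}{\tau\delta}$ obtained at each arm's last play, and its inversion. Two sub-steps are done genuinely differently, and both of your variants are valid: (i) for the good event, the paper fixes a round $t$ and union-bounds over all possible sample counts $s\le t$ to handle the randomness of $T_{i,t}$, paying $\frac{\tau\delta}{2t^2}$ per arm per round and then summing over $t\ge\tau m$; your reorganization over pairs $(i,s)$, using monotonicity of $\rad_{i,t}$ in $t$ for unplayed arms plus the deterministic bound $t^*(i,s)\ge\tau(m-1)+s$, is a clean equivalent (both arguments hinge on the same cancellation of the union-bound factor $m$ against a tail sum of order $1/(\tau m)^2$); (ii) for the inversion, the paper cites the fact $x\le a\log x+b\Rightarrow x\le 4a\log(2a)+2b$, whereas your concavity/largest-root argument is a self-contained substitute that yields the same bound.

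The step you flag as the main obstacle is indeed where all the real work in the paper lies, and it closes exactly as you anticipate. The paper forms the hybrid vector $\vec\theta'=(\vec\theta_{C_t},\Realtheta_{-C_t})$ and proves $\g_i(\vec\theta)=\g_i(\vec\theta')$, which is your statement that the inconsistency survives projection onto the candidate coordinates. The argument: on $\xi$, all of $\vec\theta$, $\vec\theta'$, $\Realtheta$ lie in $\hat\Theta_{t-1}$; by the definition of $C_t$, every coordinate $j\notin C_t$ has $\g_j$ constant on $\hat\Theta_{t-1}$, so $\g_{-C_t}(\vec\theta)=\g_{-C_t}(\vec\theta')=\g_{-C_t}(\Realtheta)$; by separability, $\g_{C_t}(\vec\theta)$ and $\g_{C_t}(\vec\theta')$ are then leading optima of maximization problems with the same objective $\sum_{j\in C_t}r_j(\theta_j,z_j)$ over the same feasible set $\{\vec y_{C_t}\colon \vec y\in\cY,\ \vec y_{-C_t}=\g_{-C_t}(\vec\theta)\}$, hence they coincide (the deterministic tie-breaking of $\g$ is what makes ``leading optimum'' well defined here). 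With this identity, the coordinate witnessing $\norm{\vec\theta'-\Realtheta}_\infty\ge\Lambda_i$ from Proposition~\ref{proposition:Lambda} must lie in $C_t$, and your lemma follows. One small nit: you add the $\tau m\le 2m$ initialization samples on top of the per-arm counts $1+\frac{2}{\Lambda_i^2}\ln\frac{4T^3}{\tau\delta}$, which double-counts initialization (the paper's $T_i$ already includes those samples); this only inflates the additive term from $2m$ to about $4m$ after inversion and leaves the stated $O(\hardL\log(\hardL/\delta))$ bound unaffected.
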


Theorem~\ref{thm:samplecomplexity} shows that the sample complexity is positively related
	to the consistent optimality hardness, or inversely proportional to the square of consistent
	optimality radius $\Lambda_i^2$.
Intuitively, when $\Lambda_i$ is small, we need more samples to make the optimal solutions in the
	confidence interval consistent on the $i$-th dimension, and hence higher sample complexity.
	
We remark that if we do not compute the candidate set $C_t$ and directly pick the arm with the largest radius
	among {\em all} arms in line~\ref{line:largestradius}, every arm will be selected in a round-robin fashion
	and COCI becomes a uniform sampling algorithm.
In \OnlyInFull{Appendix~\ref{app:uni}}\OnlyInShort{the extended version}, we show that the sample complexity 
	upper bound of the uniform version 
	is obtained by replacing $\hardL$ in Eq.~\eqref{eq:samplecomplexity} by
	$\hardU = \frac{m}{\min_{i\in [m]} \Lambda_i^2}$, and the factor $\hardU$ is tight for the uniform sampling.
This indicates that the adaptive sampling method of COCI would perform much better than the uniform sampling
	when arms have heterogeneous consistent optimality radii such that $\hardL \ll \hardU$.


Due to the space constraint, we only provide the key lemma below leading to the proof of the theorem.
We define a random event 
$\xi = \{\forall t\ge \tau m, \forall i\in [m], \abs{\hat\theta_{i,t}-\realtheta_i}\le\rad_{i,t}\}$,
which indicates that $\Realtheta$ is inside the confidence interval space of all the rounds.
%
Then we have the following lemma.
\begin{restatable}{lemma}{radius}\label{lem:radius}
    Suppose event $\xi$ occurs. For every $i\in[m]$ and every $t>\tau m$,
    if $\rad_{i,t-1} < \Lambda_i/2$, then the $i$-th arm will not be played in round $t$.
\end{restatable}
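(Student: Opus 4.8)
The plan is to prove the contrapositive: if arm $i$ is played in round $t$, then $\rad_{i,t-1}\ge\Lambda_i/2$. So I would assume $\rad_{i,t-1}<\Lambda_i/2$ and, for contradiction, that arm $i$ is nonetheless played in round $t$. By the selection rule (lines~\ref{line:candidateb}--\ref{line:largestradius}) this means both $i\in C_t$ and $\rad_{i,t-1}=\max_{j\in C_t}\rad_{j,t-1}$. Since $i\in C_t$, the map $\g_i$ is non-constant on $\hat\Theta_{t-1}$; and because $\Realtheta\in\hat\Theta_{t-1}$ under event $\xi$, there is some $\vec\theta\in\hat\Theta_{t-1}$ with $\g_i(\vec\theta)\ne\g_i(\Realtheta)$. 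The goal is to turn this into a candidate coordinate whose radius exceeds $\rad_{i,t-1}$, contradicting maximality.

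The first substantive step is to localize the inconsistency to the candidate coordinates. Define $\vec\theta'$ by $\theta'_j=\theta_j$ for $j\in C_t$ and $\theta'_j=\realtheta_j$ for $j\notin C_t$. Under $\xi$ each $\realtheta_j$ lies in the $j$-th confidence interval, so $\vec\theta'\in\hat\Theta_{t-1}$. The key claim is that resetting a single non-candidate coordinate leaves the leading optimal solution unchanged: for $j\notin C_t$ the oracle output $\g_j$ is pinned to one value $v_j$ throughout $\hat\Theta_{t-1}$, and by separability of $r$ the objective, restricted to decisions with $y_j=v_j$, changes only by the additive constant $r_j(\realtheta_j,v_j)-r_j(\theta_j,v_j)$ when $\theta_j$ is reset. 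Hence the value ordering among these decisions is preserved, and since the leading optimal solution at both parameters already has its $j$-th coordinate equal to $v_j$, it is unchanged. Applying this to each non-candidate coordinate in turn gives $\g(\vec\theta')=\g(\vec\theta)$, and in particular $\g_i(\vec\theta')=\g_i(\vec\theta)\ne\g_i(\Realtheta)$.

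The remaining steps are routine. Since $\g_i(\vec\theta')\ne\g_i(\Realtheta)$, Definition~\ref{def:Lambdai} yields $\norm{\vec\theta'-\Realtheta}_\infty\ge\Lambda_i$, so some coordinate $k$ satisfies $\abs{\theta'_k-\realtheta_k}\ge\Lambda_i$. Because $\vec\theta'$ agrees with $\Realtheta$ off $C_t$, this forces $k\in C_t$. As both $\vec\theta'$ and $\Realtheta$ lie in $\hat\Theta_{t-1}$, whose $k$-th side has half-width $\rad_{k,t-1}$, the triangle inequality gives $\Lambda_i\le\abs{\theta'_k-\realtheta_k}\le 2\rad_{k,t-1}$, i.e.\ $\rad_{k,t-1}\ge\Lambda_i/2$. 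Chaining the inequalities produces $\rad_{k,t-1}\ge\Lambda_i/2>\rad_{i,t-1}=\max_{j\in C_t}\rad_{j,t-1}\ge\rad_{k,t-1}$, the desired contradiction.

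I expect the main obstacle to be the localization claim $\g(\vec\theta')=\g(\vec\theta)$. The additive-constant computation shows the relevant argmax set is preserved, but one still has to argue that the deterministic tie-breaking oracle returns the \emph{same} leading optimal solution at the two distinct parameters; this is the only place where ties among decisions could cause trouble, and it is precisely where separability (not merely continuity) is needed. Everything after that is the definition of $\Lambda_i$ together with the triangle inequality for the confidence box, so the proof hinges on establishing this localization cleanly.
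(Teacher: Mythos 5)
Your proposal is correct and takes essentially the same route as the paper's proof: the same intermediate vector $\vec\theta'=(\vec{\theta}_{C_t},\Realtheta_{-C_t})$, the same separability-based localization $\g_{C_t}(\vec\theta)=\g_{C_t}(\vec\theta')$ via constancy of $\g_j$ on non-candidate coordinates, and the same use of the definition of $\Lambda_i$ (you invoke it directly, the paper via its contrapositive, Proposition~\ref{proposition:Lambda}) combined with the maximality of $\rad_{i,t-1}$ over $C_t$. The tie-breaking obstacle you flag is genuine but is present in the paper's own proof as well, which dispatches it by asserting that the identical restricted problems~\eqref{eq1} and~\eqref{eq2} have the same leading optimal solution, i.e., by implicitly assuming the oracle's tie-breaking depends only on the induced optimization problem rather than on the particular parameter vector.
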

\begin{proof}
	Suppose, for a contradiction, that the $i$-th arm is played in round $t$, namely, $i\in C_t$, and
	$i=\argmax_{j\in C_{t}}\rad_{j,t-1}$.
	Thus for each $j\in C_t$, we have $\rad_{j,t-1} \leq \rad_{i,t-1} <\Lambda_i/2$.

	We claim that for all $\vec\theta \in \hat{\Theta}_{t-1}$, $\g_i(\vec\theta)=\g_i(\Realtheta)$.
	If so, $\max_{\vec\theta\in\hat\Theta_{t-1}} \g_i(\vec\theta) = \min_{\vec\theta\in\hat\Theta_{t-1}} \g_i(\vec\theta)$,
	then by line~\ref{algline:oracle} $i \not\in C_t$, a contradiction.
	
	We now prove the claim.
	For any vector $\vec{x} \in \mathbb{R}^m$ and any index subset $C\subseteq [m]$, we use $\vec{x}_C$ to denote the sub-vector of $\vec{x}$ projected onto $C$.
	For vector-valued functions such as $\g(\vec\theta)$, we use $\g_{C}(\vec\theta)$ for $\g(\vec\theta)_{C}$.
	For any $\vec\theta \in \hat{\Theta}_{t-1}$, we construct an intermediate 
	vector $\vec \theta' = (\vec{\theta}_{C_t}, \Realtheta_{-C_t})$, i.e., the $j$-th component $\theta'_j$ is $\theta_j$ when $j\in C_t$, or $\theta^*_j$ when $j\notin C_t$.
	Since event $\xi$ occurs, we have $|\hat \theta_{j,t-1} - \realtheta_j| \le \rad_{j,t-1}$ for $j\in[m]$.
    Thus for all $j\in C_t$,
	$|\theta'_j - \realtheta_j|\le|\theta_j - \hat \theta_{j,t-1} |+ |\hat \theta_{j,t-1} - \realtheta_j| \le 2 \rad_{j,t-1} < \Lambda_i $,
	and for all $j\notin C_t$, $|\theta'_j - \realtheta_j| = 0$.
	This means that $\norm{\vec \theta'-\Realtheta}_\infty < \Lambda_i$.
	According to Proposition~\ref{proposition:Lambda}, $\g_i(\vec \theta')=\g_i(\Realtheta)$.
	We next prove that $\g_i(\vec \theta)=\g_i(\vec \theta')$, 
	which directly leads to $\g_i(\vec\theta)=\g_i(\Realtheta)$.
	
	Since event $\xi$ occurs and $\vec\theta^*\in[0,1]^m$, $\Realtheta$ is in $\hat{\Theta}_{t-1}$.
    By the definition of $\vec \theta'$ and $\vec\theta\in\hat{\Theta}_{t-1}$, $\vec \theta'$ is also in $\hat{\Theta}_{t-1}$.
	According to Algorithm~\ref{alg:general}, for each $j\notin C_{t}$, we have $\max_{\vec\theta\in\hat\Theta_{t-1}} \g_j(\vec\theta) = \min_{\vec\theta\in\hat\Theta_{t-1}} \g_j(\vec\theta)$, 
	thus  $\g_{-C_t}(\vec\theta)=\g_{-C_t}(\vec\theta')=\g_{-C_t}(\Realtheta)$.

	Note that the reward function is separable, we have
	\begin{align*}
	r(\vec \theta; \vec y)&= \sum_{j\in C_t}r_j(\theta_j, y_j) + \sum_{j\notin C_t}r_j(\theta_j, y_j).
	\end{align*}
	Let $\mathcal{Y}_{C_t}(\vec \theta) = \{\vec y_{C_t} \colon \vec y\in \mathcal{Y} \land \vec y_{-C_t}=\g_{-C_t}(\vec \theta)\}$.
	It is straightforward to verify that $\g_{C_t}(\vec \theta)$ is the leading optimal solution
	for the following problem:
	\begin{align}
	\text{max}\quad &\sum_{j\in C_t}r_j(\theta_j, z_{j}),\nonumber \\
	\text{subject to}\quad &\vec z\in\mathcal{Y}_{C_t}(\vec \theta). \label{eq1}
	\end{align}	 	
	Similarly, we have 
	\begin{align*}
	r(\vec \theta'; \vec y) 
	= \sum_{j\in C_t}r_j(\theta_j, y_j) + \sum_{j\notin C_t}r_j(\realtheta_j, y_j),
	\end{align*}
	and $\g_{C_t}(\vec \theta')$ is the leading optimal solution for 
	\begin{align}
	\text{max}\quad &\sum_{j\in C_t}r_j(\theta_j, z_{j}), \nonumber\\
	\text{subject to}\quad &\vec z\in\mathcal{Y}_{C_t}(\vec \theta^*).\label{eq2}
	\end{align} 
	Since $\g_{-C_t}(\vec \theta)=\g_{-C_t}(\vec \theta^*)$, optimization problems~(\ref{eq1}) and (\ref{eq2})
	are identical, thus they have the some leading optimal solution $\g_{C_t}(\vec \theta)=\g_{C_t}(\vec \theta')$.    
	Notice that $i\in C_t$, therefore, $\g_i(\vec \theta)=\g_i(\vec \theta')$ holds.
\end{proof}

The above lemma is the key connecting consistent optimality radius $\Lambda_i$ with confidence radius
$\rad_{i,t-1}$ and the stopping condition.
Its proof relies on both the definition of consistent optimality radius and the assumption of
separable reward functions.
With this lemma, the sample complexity can be obtained by considering the first round when every arm
satisfies the condition $\rad_{i,t-1} < \Lambda_i/2$.

Borrowing a lower bound analysis in~\cite{ChenGLQW17}, we can further
	show that the hardness measure $\hardL$ is necessary for
	CPE-CS, even CPE-L, as shown below.

\begin{restatable}{theorem}{lowerbound}\label{thm:lowerbound}
    Given $m$ arms and $\delta\in(0,0.1)$, 
    there exists an instance such that every algorithm $\mathcal{A}$ for CPE-L which outputs the optimal solution with probability at least $1-\delta$, takes at least $$\Omega(\hardL + \hardL m^{-1}\log\delta^{-1})$$ samples in expectation. 
\end{restatable}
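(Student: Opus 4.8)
The plan is to prove the bound by the information-theoretic change-of-measure argument that underlies the lower bound of \cite{ChenGLQW17}. First I would exhibit a single, well-structured CPE-L instance for which the consistent optimality radius of Definition~\ref{def:Lambdai} is transparent; the cleanest choice is best-arm identification, $\cY=\{\vec e_1,\dots,\vec e_m\}$, with Bernoulli arms whose means lie in a fixed compact subinterval of $(0,1)$ and a unique best arm (a collection of disjoint arm-pairs works equally well). In such an instance, changing coordinate $i$ of the leading optimal solution $\g(\Realtheta)$ means either pushing a sub-optimal arm above the best arm's mean or pulling the best arm below the runner-up; splitting this move across two coordinates is cheapest, so $\Lambda_i=\Delta_i/2$, where $\Delta_i$ is the amount by which arm $i$'s mean \emph{alone} must be shifted to flip the optimum. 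Consequently $\hardL=\sum_i \Lambda_i^{-2}=4\sum_i\Delta_i^{-2}$, and it suffices to lower bound the expected sample count on this family.

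For the first term I would, for each arm $i$, construct an alternative instance $\nu^{(i)}$ that agrees with the true instance $\nu$ on every coordinate except $i$, shifting $\theta_i$ by slightly more than $\Delta_i$ so that $\nu^{(i)}$ has a different unique optimal decision. Applying the change-of-measure inequality $\sum_{j} \E_\nu[T_j]\,\mathrm{KL}(\nu_j,\nu^{(i)}_j)\ge \mathrm{kl}(\delta,1-\delta)$ to the event that the algorithm outputs the answer of $\nu$ (which has probability $\ge 1-\delta$ under $\nu$ and $\le\delta$ under $\nu^{(i)}$), and using that only coordinate $i$ differs, yields $\E_\nu[T_i]\,\mathrm{KL}(\nu_i,\nu^{(i)}_i)\ge \mathrm{kl}(\delta,1-\delta)$. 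For Bernoulli means confined to a fixed subinterval of $(0,1)$ one has $\mathrm{KL}(\nu_i,\nu^{(i)}_i)=\Theta(\Delta_i^2)=\Theta(\Lambda_i^2)$, and $\delta<0.1$ forces the constant bound $\mathrm{kl}(\delta,1-\delta)\ge c_0>0$; hence $\E_\nu[T_i]=\Omega(\Lambda_i^{-2})$. Summing over all $m$ arms gives $\E_\nu[T]=\Omega(\hardL)$.

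For the confidence-dependent term I would reuse the same alternatives but retain the logarithmic bound $\mathrm{kl}(\delta,1-\delta)\ge\ln\frac{1}{2.4\delta}=\Omega(\log\delta^{-1})$, specializing to the single hardest arm $i_0=\argmax_i \Lambda_i^{-2}$. The same inequality then gives $\E_\nu[T_{i_0}]=\Omega(\Lambda_{i_0}^{-2}\log\delta^{-1})$, and since $\E_\nu[T]\ge\E_\nu[T_{i_0}]$ and $\Lambda_{i_0}^{-2}=\max_i\Lambda_i^{-2}\ge \tfrac1m\sum_i\Lambda_i^{-2}=\hardL/m$, we obtain $\E_\nu[T]=\Omega(\hardL m^{-1}\log\delta^{-1})$. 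Because both bounds hold for the same quantity, $\E_\nu[T]\ge\max(c_1\hardL,\,c_2\hardL m^{-1}\log\delta^{-1})\ge\tfrac12\min(c_1,c_2)\bigl(\hardL+\hardL m^{-1}\log\delta^{-1}\bigr)$, which is the claimed $\Omega(\hardL+\hardL m^{-1}\log\delta^{-1})$.

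The KL bookkeeping above is routine; the delicate part is the instance design. I must guarantee that (i) the optimal decision is unique under $\nu$ and under every $\nu^{(i)}$, so that the definitions and the event argument apply, (ii) the cheapest confusing perturbation for coordinate $i$ is realized by moving a \emph{single} arm by $\Theta(\Lambda_i)$, so that the inequality isolates $\E_\nu[T_i]$ instead of mixing several coordinates, and (iii) all means stay in a fixed subinterval of $(0,1)$, so that the per-pull divergence is $\Theta(\Lambda_i^2)$ with constants uniform in $i$. The main obstacle is (ii): Definition~\ref{def:Lambdai} measures $\Lambda_i$ over \emph{arbitrary} perturbations, so a generic decision class may force the optimum to flip only through coordinated swaps, in which case the single-arm alternative either does not exist or costs far more than $\Lambda_i$ — and the fundamental inequality would then only bound sums of $\E_\nu[T_i]$, degrading the per-arm term. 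Verifying that the structured witness instance makes the single-coordinate gap exactly proportional to $\Lambda_i$, thereby tying each per-arm divergence to the hardness measure $\hardL$, is the crux of the argument.
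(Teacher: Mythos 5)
Your proposal is correct, but it reaches the theorem by a genuinely different construction than the paper. Both arguments ultimately rest on the same change-of-measure machinery of \cite{ChenGLQW17}, but the paper deploys it \emph{once, globally}: it imports a packing lemma (Lemma C.1 of \cite{ChenGLQW17}) producing $n=2^{cm}$ subsets of $[m]$, each of size $l=\Omega(m)$ with pairwise intersections at most $l/2$, takes these as the decision class, assigns mean $\Delta$ to arms of $S_1$ and $0$ elsewhere, and by averaging finds one alternative $S'$ that the algorithm outputs with probability at most $2\delta/2^{cm}$ under $\mathcal{C}_{S_1}$. A single application of the change-of-measure lemma between $\mathcal{C}_{S_1}$ and $\mathcal{C}_{S'}$ then gives $\Omega\bigl(d(1-\delta,\, 2\delta 2^{-cm})\cdot\Delta^{-2}\bigr)=\Omega\bigl((m+\log\delta^{-1})\Delta^{-2}\bigr)$, so on that homogeneous-gap instance (where $\hardL\le 4m\Delta^{-2}$) both terms of the bound come from the logarithm of the \emph{exponential number of decisions}. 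You instead use the simplest decision class (best-arm identification, $|\cY|=m$) and apply the transportation inequality $m$ times, once per coordinate, with single-arm alternatives; the sum of per-arm bounds yields the $\hardL$ term and the hardest arm yields the $\hardL m^{-1}\log\delta^{-1}$ term. Your instance-design concerns (i)--(iii) are correctly resolved by the best-arm choice, and the identification $\Lambda_i=\Delta_i/2$ is right. What each approach buys: the paper's route is very short given the two imported lemmas and shows that the hardness can be forced purely by combinatorial richness of $\cY$ even when all gaps are equal; your route holds for \emph{every} best-arm instance with heterogeneous gaps and means bounded away from $0$ and $1$, gives per-arm resolution, and---a point you under-sell---since each per-arm application already gives $\E_\nu[T_i]=\Omega(\Lambda_i^{-2}\log\delta^{-1})$, summing yields the strictly stronger bound $\Omega(\hardL\log\delta^{-1})$, from which the claimed $\Omega(\hardL+\hardL m^{-1}\log\delta^{-1})$ follows a fortiori; your two-regime split (constant $\mathrm{kl}$ for the sum, logarithmic $\mathrm{kl}$ for the hardest arm) is thus unnecessary, though harmless. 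A final minor remark in your favor: insisting that all means lie in a compact subinterval of $(0,1)$ so that the per-sample KL is $\Theta(\Lambda_i^2)$ is a precaution the paper's own construction (means $0$ and $\Delta$) glosses over.
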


\subsection{Implementing the Condition in Line~\ref{algline:oracle}}\label{sec:bimonotonicity}

The key condition in line~\ref{algline:oracle} of Algorithm~\ref{alg:general} is a logical one revealing  
the conceptual meaning of the stopping condition, but it does not lead to a direct implementation.
In many CPE-CS instances, 
the condition can be translated to a condition only on the boundary 
of $\hat\Theta_{t-1}$, and further due to the bi-monotonicity of $\g$ introduced below, it has an efficient
implementation.
Such instances include best-arm identification, top-$k$ arm identification, water resources planning (Section~\ref{sec:water}), partitioned opinion sampling (Section~\ref{sec:appcpess}), etc.

We say that the leading optimal solution $\g(\vec \theta)$ satisfies \emph{\monotonicity{}}, if for each $i\in [m]$,  $\g_i(\vec{\theta})$ is monotonically non-increasing (or non-decreasing) in $\theta_i$, and monotonically non-decreasing (or non-increasing) in $\theta_j$ for all $j\ne i$.
For convenience, we use $\overline{\theta}_{i,t}= \max_{\vec\theta\in\hat\Theta_{t}}\theta_i$ and $\underline{\theta}_{i,t}=\min_{\vec\theta\in\hat\Theta_{t}}\theta_i$ to denote the upper and lower confidence bound of arm $i$ in round $t$.
We also use $\overline{\vec{\theta}}_{-i,t}$ and $\underline{\vec{\theta}}_{-i,t}$ to denote the upper and lower confidence bounds of all arms excluding arm $i$.

\begin{restatable}{theorem}{lucb}\label{thm:condition}
    If the leading optimal solution $\g(\vec \theta)$ satisfies \monotonicity{},
    the condition in line~\ref{algline:oracle} of Algorithm~\ref{alg:general} can be efficiently implemented by
    \begin{align*}
    \g_i(\underline{\vec{\theta}}_{-i,t-1}, \overline{\theta}_{i,t-1})\ne \g_i(\overline{\vec{\theta}}_{-i,t-1}, \underline{\theta}_{i,t-1}).
    \end{align*}
\end{restatable}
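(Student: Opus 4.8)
The plan is to exploit two structural facts: first, that the confidence region $\hat\Theta_{t-1}$ is an axis-aligned box $\prod_{k\in[m]}[\underline{\theta}_{k,t-1},\overline{\theta}_{k,t-1}]$, since its defining constraints $|\theta_k-\hat\theta_{k,t-1}|\le\rad_{k,t-1}$ are coordinate-separable; and second, that under \monotonicity{} each coordinate function $\g_i$ is monotone in every single coordinate. Combining these, I would argue that both $\max_{\vec\theta\in\hat\Theta_{t-1}}\g_i(\vec\theta)$ and $\min_{\vec\theta\in\hat\Theta_{t-1}}\g_i(\vec\theta)$ are attained at the two specific corners appearing in the claimed condition, so the logical test in line~\ref{algline:oracle} reduces to evaluating $\g_i$ at exactly those two corners.

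The core lemma I would establish is a coordinate-climbing argument: for a function that is monotone in each coordinate, its extrema over a box are attained at the corner whose coordinates are each pushed to the endpoint dictated by that coordinate's monotonicity direction. Concretely, fix $i$ and take any $\vec\theta\in\hat\Theta_{t-1}$; I would transform $\vec\theta$ into the target corner one coordinate at a time, observing that each single-coordinate move toward the appropriate endpoint does not decrease (resp.\ increase) $\g_i$, by the per-coordinate monotonicity that holds irrespective of the other coordinates' values. Since every intermediate point stays in the box, chaining these moves shows that $\g_i(\vec\theta)$ is bounded above (resp.\ below) by the corner value; as the corner itself lies in the box, the max (resp.\ min) equals that corner value. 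This step uses only monotonicity and not continuity, so it remains valid even when $\g_i$ is a piecewise-constant step function for discrete $\cY$.

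Next I would split into the two \monotonicity{} cases and identify the corners. When $\g_i$ is non-increasing in $\theta_i$ and non-decreasing in each $\theta_j$ $(j\ne i)$, the maximizing corner sets $\theta_i=\underline{\theta}_{i,t-1}$ and $\theta_j=\overline{\theta}_{j,t-1}$, i.e.\ it equals $(\overline{\vec\theta}_{-i,t-1},\underline{\theta}_{i,t-1})$, while the minimizing corner is $(\underline{\vec\theta}_{-i,t-1},\overline{\theta}_{i,t-1})$. In the opposite case the roles of the two corners are simply swapped. Either way the set of extremal corners is exactly $\{(\underline{\vec\theta}_{-i,t-1},\overline{\theta}_{i,t-1}),\,(\overline{\vec\theta}_{-i,t-1},\underline{\theta}_{i,t-1})\}$, so $\max_{\vec\theta\in\hat\Theta_{t-1}}\g_i(\vec\theta)=\min_{\vec\theta\in\hat\Theta_{t-1}}\g_i(\vec\theta)$ holds if and only if $\g_i$ takes the same value at these two corners; negating this equivalence yields the displayed inequality. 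Efficiency then follows because the reduced test needs only two evaluations of the oracle $\g$ rather than a continuous optimization over $\hat\Theta_{t-1}$.

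The main obstacle I anticipate is stating the coordinate-climbing step cleanly without hidden assumptions: I must invoke the per-coordinate monotonicity of $\g_i$ with all other coordinates held at arbitrary in-box values (not at the true or estimated parameters), and I must handle the two \monotonicity{} directions symmetrically so that the same unordered pair of corners emerges in both cases. Tracking which endpoint each coordinate is driven to in each case is routine but is precisely where sign errors would most easily creep in.
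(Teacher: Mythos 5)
Your proposal is correct and follows essentially the same route as the paper's proof: both exploit the fact that $\hat\Theta_{t-1}$ is an axis-aligned box and use \monotonicity{} to show that $\max_{\vec\theta\in\hat\Theta_{t-1}}\g_i(\vec\theta)$ and $\min_{\vec\theta\in\hat\Theta_{t-1}}\g_i(\vec\theta)$ are attained at the two corners $(\underline{\vec{\theta}}_{-i,t-1}, \overline{\theta}_{i,t-1})$ and $(\overline{\vec{\theta}}_{-i,t-1}, \underline{\theta}_{i,t-1})$ (in an order depending on the monotonicity direction), so the test in line~\ref{algline:oracle} reduces to two oracle calls. The only cosmetic difference is that you move one coordinate at a time while the paper moves coordinate $i$ first and then all remaining coordinates in one step; your explicit remark that the argument needs no continuity of $\g_i$ is a nice touch but not a departure.
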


The above theorem indicates that, when bi-monotonicity holds for $\g(\vec \theta)$, we only need two calls
	to the offline oracle $\g(\vec \theta)$ to implement the condition in line~\ref{algline:oracle}, and thus
	the COCI algorithm has an efficient implementation in this case.
\section{Applications}\label{sec:application}
\subsection{Water Resource Planning}\label{sec:water}

Water resource systems benefit people to meet drinking water and sanitation needs, and also support and maintain resilient biodiverse ecosystems. 
In regional water resource planning, one need to determine the Biological Oxygen Demand (BOD, a
measure of pollution) to be removed from the water system at each source.
Online learning techniques proposed in recent years make adaptive optimization for water resource planning possible.

Let $y_i$ be the pounds of BOD to be removed at source $i$.
One general model (adapted from \cite{bradley1977applied}) to minimize total costs to the region to meet specified pollution standards can be expressed as:
\begin{align}
\text{max}\quad &\sum_{i=1}^m \theta_i^* y_i -\sum_{i=1}^m f_i(y_i),\nonumber\\
\text{subject to}\quad &\sum_{i=1}^m y_i \geq b, 0\leq y_i\leq c_i, \forall i\in [m],\label{eq:water}
\end{align}
where 
$\theta_i^*$ is the quality response caused by removing one pound of BOD at source $i$, and
$f_i(y_i)$ is the cost of removing $y_i$ pounds of BOD at
source $i$.
Each $y_i$ is constrained by $c_i$, the maximum pounds of BOD that can be
removed at source $i$.
Moreover, the total pounds of BOD to be removed are required to be larger than a certain threshold $b$.

The above model formulates the trade-off between the benefit and the cost of removing the pollutants.
The cost function $f_i$ is usually known and non-linear, which may depend on the cost of oxidation,  labor cost, 
	facility cost, etc., 
while the quality response $\theta_i^*$ is unknown beforehand, and needs to be learned from
	tests at source $i$.
In each test, the tester measures the quality response at a source $i$ and gets an observation of $\theta_i^*$, 
	which can be regarded as a random variable $\theta_i$ derived from an unknown distribution with mean $\theta_i^*$.
The goal is to do as few tests as possible to estimate the quality responses, and 
	then give a final allocation $(y_1^o, \dots, y_m^o)$ of BOD among sources as the plan to be implemented
	(e.g., building BOD removal facilities at the sources).
	
The above problem falls into the CPE-CS framework.
The $i$-th source corresponds to the $i$-th arm. 
Each quality response at source $i$ is the unknown parameter $\theta_i^*$ associated with the arm $i$, and $\tau=1$.
Each allocation $(y_1, \dots, y_m)$ satisfying the constraints corresponds to a decision.
We discretize $\{y_i\}$'s so that the decision class $\cY$ is finite.
The reward function is $r(\vec\theta,\vec y)= \sum_{i=1}^m \theta_i y_i -\sum_{i=1}^m f_i(y_i)$, which is continuous and separable.
Suppose the offline problem of Eq.~(\ref{eq:water}) when $\vec\theta^*$ is known can be solved by a known oracle $\phi(\vec\theta^*)$.
Then, the COCI algorithm can be directly applied to the water resource planning problem.
The following lemma gives a sufficient condition for the bi-monotonicity of $\phi$.

\begin{restatable}{lemma}{waterbi}
When \{$\d f_i/\d y_i$\}'s are all monotonically increasing or decreasing, and the constraint $\sum_{i=1}^m y_i \geq b$ is tight at the leading optimal solution $\g(\vec{\theta})$ for all $\vec{\theta}$, 
	then $\g(\vec{\theta})$ satisfies \monotonicity{}.
\end{restatable}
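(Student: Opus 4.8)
The plan is to exploit the Lagrangian (threshold) structure that the tight budget constraint $\sum_i y_i = b$ induces on the separable objective, and then run a monotone comparative-statics argument. Without loss of generality I would first treat the case where all $\d f_i/\d y_i$ are increasing (each $f_i$ convex); the decreasing/concave case is entirely symmetric and produces the opposite pair of monotonicities, which is still a valid instance of \monotonicity{}. Since $\sum_i y_i \ge b$ is assumed tight at $\g(\vec\theta)$ for every $\vec\theta$, I would introduce a single scalar multiplier $\lambda$ for it and use separability to decouple the problem coordinatewise: the optimal value of $y_i$ is $g_i(\theta_i-\lambda) := \mathrm{clip}_{[0,c_i]}\big((f_i')^{-1}(\theta_i-\lambda)\big)$, where the clipping to $[0,c_i]$ encodes the box constraints. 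The two features I will rely on are that $g_i$ is \emph{monotone} (non-decreasing in $\theta_i$, non-increasing in $\lambda$, because $(f_i')^{-1}$ is non-decreasing in the convex case) and \emph{translation-covariant}, i.e.\ $g_i(\theta_i-\lambda)$ depends on $\theta_i,\lambda$ only through $\theta_i-\lambda$; crucially, clipping preserves both properties.

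Next I would characterize the threshold $\lambda=\lambda(\vec\theta)$ as the solution of $S(\vec\theta,\lambda) := \sum_{i=1}^m g_i(\theta_i-\lambda) = b$, noting that $S$ is non-decreasing in each $\theta_i$ and non-increasing in $\lambda$. The heart of the argument is the two-sided bound $0 \le \lambda(\theta_k+\Delta,\vec\theta_{-k}) - \lambda(\vec\theta) \le \Delta$ for any $\Delta>0$. The lower bound follows because raising $\theta_k$ raises $S$ at a fixed $\lambda$, so $\lambda$ must rise to restore $S=b$. The upper bound is where translation-covariance does the work: evaluating $S$ at the shifted point $(\theta_k+\Delta,\vec\theta_{-k})$ and at the multiplier $\lambda(\vec\theta)+\Delta$ leaves the $k$-th term unchanged (covariance) while weakly lowering every other term ($g_j$ non-increasing in $\lambda$), so $S \le b$ there, forcing the new threshold to be at most $\lambda(\vec\theta)+\Delta$.

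With this bound in hand the conclusion is immediate. For a cross coordinate $j \ne k$, raising $\theta_k$ only raises $\lambda$, so $y_j = g_j(\theta_j-\lambda)$ is non-increasing in $\theta_k$. For the own coordinate, the effective argument satisfies $(\theta_k+\Delta) - \lambda(\theta_k+\Delta,\vec\theta_{-k}) \ge \theta_k - \lambda(\vec\theta)$ by the upper bound $\lambda(\theta_k+\Delta,\vec\theta_{-k})-\lambda(\vec\theta)\le\Delta$, so $y_k = g_k(\theta_k-\lambda)$ is non-decreasing in $\theta_k$. Thus $\g$ is non-decreasing in its own coordinate and non-increasing in the cross coordinates; the concave case flips every sign and yields the other branch of the definition, so in both cases $\g$ satisfies \monotonicity{}.

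The hard part will be rigor at the boundaries rather than in the interior. First, I must handle active box constraints uniformly: this is exactly why I phrase the per-coordinate response as the clipped map $g_i$, so that monotonicity and translation-covariance survive whenever an arm saturates at $0$ or $c_i$ and the naive stationarity equation $f_i'(y_i)=\theta_i-\lambda$ fails. Second, the genuine subtlety is the discretization of $\cY$ together with the deterministic tie-breaking of $\g$: on a finite grid the Lagrangian relaxation can have a duality gap, so the threshold picture is only approximate, and degenerate ties could break monotonicity of the \emph{selected} leading optimal solution. I would discharge this either by assuming strict monotonicity of the $\d f_i/\d y_i$ (so the maximizer is unique and ties vanish), or by replacing the threshold step with a direct grid-feasible exchange argument: assuming a coordinate moved the wrong way, transfer one grid unit between the two offending coordinates in each of the two optimal solutions and add the resulting optimality inequalities; convexity of the increments makes the $f$-terms cancel against each other while the gap $\theta_k'-\theta_k>0$ survives, giving a contradiction for the own coordinate. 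Reconciling the cross-coordinate case with the tie-breaking rule is the most delicate point, and is where I would spend the most care.
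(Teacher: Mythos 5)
Your proposal is correct in substance and rests on the same core idea as the paper's proof: the tight budget constraint induces a scalar Lagrange multiplier $\lambda$, and \monotonicity{} follows from comparative statics of $\lambda$ as a single coordinate $\theta_k$ moves. The finishing mechanics differ, and yours buys something. The paper writes the stationarity condition $\theta_i - \frac{\d f_i}{\d y_i}(\g_i(\vec\theta)) = \lambda$, splits into the cases $\lambda' \le \lambda$ and $\lambda' \ge \lambda$, uses tightness of $\sum_i y_i = b$ to conclude that $\g_i$ and the $\g_j$ ($j \ne i$) always move oppositely, and then separately rules out $\g_i$ staying fixed when $\theta_i$ changes; it works entirely in the continuous interior and silently ignores both the box constraints $0 \le y_i \le c_i$ and the discretization of $\cY$. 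Your two-sided bound $0 \le \lambda(\theta_k+\Delta,\vec\theta_{-k}) - \lambda(\vec\theta) \le \Delta$, obtained from translation covariance of the clipped response map, delivers the own-coordinate and cross-coordinate monotonicities in one stroke, and the clipping correctly absorbs the box constraints that the paper's proof skips. The discretization and tie-breaking gap you flag is genuine, but it is equally a gap in the paper's own proof, and your proposed grid-exchange repair goes beyond what the paper offers. Two caveats: first, your ``entirely symmetric'' dismissal of the decreasing case is too quick --- when $\d f_i/\d y_i$ is decreasing the objective is convex in $\vec y$, so coordinatewise maximizers sit at endpoints of $[0,c_i]$ rather than at $(f_i')^{-1}(\theta_i-\lambda)$ and the threshold picture must be redone (the paper has the mirror-image problem, since its ``WLOG'' case is exactly this one, treated by stationarity); second, you should fix a selection rule for $\lambda(\vec\theta)$, since $S(\vec\theta,\cdot)=b$ can hold on an interval when arms saturate, though any monotone selection makes your bounds go through.
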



By Theorem~\ref{thm:condition}, when the offline oracle for the water resources planning problem satisfies bi-monotonicity, we can instantiate the condition in line \ref{algline:oracle} of Algorithm~\ref{alg:general} as $\g_i(\underline{\vec{\theta}}_{-i,t-1}, \overline{\theta}_{i,t-1}) \ne \g_i(\overline{\vec{\theta}}_{-i,t-1}, \underline{\theta}_{i,t-1})$.

Although this application is set up in the context of water resource planning, we can see that the formulation
	in Eq.~\eqref{eq:water} is general enough to model other applications, especially ones in
	the urban planning context.
For example, for planning air quality control for a city, we need to target a number of air pollution emission sources
	(e.g., factories),
	and do adaptive testing at the sources to determine the optimal pollution remove target at
	each sources which maximizes the total utility of the planning.
Other applications, such as crime control, may also be modeled similarly as instances of
	our CPE-CS framework and solved effectively by our COCI algorithm.

\subsection{Partitioned Opinion Sampling} \label{sec:appcpess}

Public opinion dynamics has been well studied, and 
    there are a number of opinion dynamic models proposed in the literature,
    such as the voter model \cite{VoterModel}, and its variants \cite{DiscreteOpinion,Li0WZ15,huang2017partitioned}.
In these models, 
    people's opinions $f_1^{(t)}, f_2^{(t)}, \dots, f_n^{(t)} \in [0,1]$ change over time $t$,
    and will converge to a steady state after sufficient social interactions in which the joint distribution of people's opinions no longer changes.
Thus, they are regarded as Bernoulli random variables 
    derived from the steady-state joint distribution,
    and sampling at time $t$ can be considered as observing part of a realization of $f_1^{(t)}, f_2^{(t)}, \dots, f_n^{(t)}$.
In partitioned opinion sampling, the population
    is divided into several disjoint groups $V_1, V_2, \ldots, V_m$ with $n_i = |V_i|$.
When we draw $y_i$ samples (with replacement) from group $V_i$ at time $t$, we obtain $y_i$ i.i.d.\ random variables
    $f_{v_{i,1}}^{(t)},f_{v_{i,2}}^{(t)},\dots, f_{v_{i,y_i}}^{(t)}$, 
	where $v_{i,j}$ is the $j$-th sample from group $V_i$.
Partitioned sampling uses 
    $\hat{f}^{(t)} = \sum_{i=1}^m \frac{n_i}{n}\cdot \left( \frac{1}{y_i} \sum_{j=1}^{y_i} f_{v_{i,j}}^{(t)} \right)$
    as the unbiased estimator for the mean population opinion at time $t$,
    and the task is to find the optimal allocation $(y_1^o, \dots, y_m^o)$ with sample size budget $\sum_{i=1}^m y_i^o \le k$ which minimizes the sample variance $\Var[\hat{f}^{(t)}]$, a common sample quality measure  \cite{bethel1986optimum,ballin2013joint,huang2017partitioned}.

One way to achieve best estimate quality for a future time $t$ is to do adaptive sampling 
	to quickly estimate the opinion variance of each group, and then decide the optimal sample size allocation
	for the real sample event at time $t$.
This corresponds to certain opinion polling practices, for instance,
	polling after each presidential debates, and preparing for a better sample quality at the election day.
We remark that in this setting, past samples are useful to estimate opinion variance within groups, but cannot be directly
	use to estimate the mean opinion at a future time $t$, since $\hat{f}^{(t)}$ is time-based and using 
	historical samples directly may lead to biased estimates.

More specifically,
	let $X_i$ be the result of one random sample from group $V_i$
	in the steady state.
Note that the randomness of $X_i$ comes from both the sampling
	randomness and the opinion randomness in the steady state.
One can easily verify that 
	$\Var[\hat f^{(t)}]= \sum_{i=1}^{m} \frac{n_i^2}{n^2 y_i} \Var[X_i]$,
	where $\Var[X_i]$ is the variance of group $V_i$,
	and referred to as the {\em within-group variance}.
The goal is to use as few samples as possible to estimate within-group variances,
     and then give the final sample size allocation which minimizes $\Var[\hat f^{(t)}]$.

This falls into the CPE-CS framework.
In particular, 
    each group $V_i$ corresponds to an arm $i$,
    and each within-group variance $\Var[X_i]$ corresponds to the unknown parameter $\theta_i^*$ of arm $i$.
The decision space $\cY$ is $\{(y_1, \ldots, y_m) \in \mathbb{Z}_+^m \colon 
\sum_{i=1}^m y_i \le k \}$. 
The reward function $r(\vec{\theta}; \vec{y})$ 
is set to be $-\sum_{i=1}^{m} \frac{n_i^2 \theta_i}{n^2 y_i}$,
where the negative sign is because the partitioned opinion sampling problem is a minimization problem. 
It is non-linear but continuous and separable. 
Therefore, the problem is an instance of CPE-CS.
The oracle for the offline problem can be achieved by a greedy algorithm, denoted as $\phi(\vec\theta)$, and it satisfies the bi-monotonicity  (the design and the analysis of the offline oracle
is non-trivial, see \OnlyInFull{Appendix~\ref{app:osa}}\OnlyInShort{the extended version}).
Thus, the COCI algorithm can be directly applied as follows: 
1) $\alg{Est}_i$ is set to be the variance estimator, i.e., $\alg{Est}_i(X_{i,1}, \dots, X_{i,s})=\frac{1}{s-1} (\sum_{j=1}^{s}X_{i,j}^2-\frac{1}{s} (\sum_{j=1}^{s}X_{i,j})^2)$, 
and $\tau = 2$;
2)
the condition in line \ref{algline:oracle} of Algorithm~\ref{alg:general} is instantiated by
$\g_i(\underline{\vec{\theta}}_{-i,t-1}, \overline{\theta}_{i,t-1}) \ne \g_i(\overline{\vec{\theta}}_{-i,t-1}, \underline{\theta}_{i,t-1})$.


\section{Applying COCI to CPE-L} \label{sec:appcpel}


In Section~\ref{sec:def}, we already show that the linear class CPE-L is a special case of CPE-CS.
In this section, we discuss the implication of applying COCI algorithm to solve CPE-L problems, and compare the
	sample complexity and implementation efficiency against
	the CLUCB algorithm in \cite{chen2014combinatorial}.
Since the parameter $\vec\theta^*$ is the vector of means of arms, we use the mean estimator and set $\tau=1$ in COCI.

Recall that for a binary vector $\vec{y} \in \cY$, $S_{\vec{y}}$ is defined as $\{i\in [m]\colon y_i = 1 \}$. 
\citet{chen2014combinatorial} use the term {\em reward gap} in the formulation of sample complexity.
For each arm $i\in [m]$, its {\em reward gap} $\Delta_i$ is defined as:
\begin{equation*}
\Delta_i = \begin{cases}
r(\Realtheta; \vec{y}^*)- \max_{\vec{y}\in {\cY}, i\not\in S_{\vec y} } r(\Realtheta; \vec y), \text{if } i \in S_{\vec{y}^*}, \\
r(\Realtheta; \vec{y}^*)- \max_{\vec{y}\in {\cY}, i\in S_{\vec y} } r(\Realtheta; \vec y), \text{if } i \not\in S_{\vec{y}^*}.
\end{cases}
\end{equation*}

\citet{chen2014combinatorial} also define a (reward gap) hardness measure $\hardD = \sum_{i=1}^m \frac{1}{\Delta_i^2}$.
Moreover, for each decision class $\cY$, \citet{chen2014combinatorial} define a key quantity {\em width}, denoted as $\width(\cY)$, that is needed for sample complexity.
Intuitively, $\width(\cY)$ denotes the minimum number of elements that one may need to exchange in one step of a series of steps when changing the current decision $S\in \cY$ into another
decision $S'\in \cY$, and for every step of exchange in the series, the resulting decision (subset) should still be in $\cY$.
The technical definition is not very relevant with the discussion below, and thus is left in the supplementary material.
We remark that $\width(\cY) = O(m)$.


Given the above setup, \citet{chen2014combinatorial} show that with probability $1-\delta$, 
	their CLUCB algorithm achieves sample complexity bound
\begin{align}
T & \le 2m+ 499 \width(\cY)^2 \hardD \ln (4m \width(\cY)^2 \hardD/\delta) \nonumber \\
& = O\left(\width(\cY)^2 \hardD \log (m \hardD / \delta)\right). \label{eq:CLUCB-CPE-L}
\end{align}

When applying the COCI algorithm to solve CPE-L problems, we are able to obtain the following
	key connection between consistent optimality radius and the reward gap:
\begin{restatable}{lemma}{LambdaDelta} \label{lem:LambdaDelta}
	For the CPE-L problems, we have 
	$\forall i\in [m]$, $\Lambda_i \ge \Delta_i / \width(\cY)$, and thus $\hardL \le \hardD \cdot \width(\cY)^2$.
\end{restatable}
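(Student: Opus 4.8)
The plan is to prove the per-arm inequality $\Lambda_i \ge \Delta_i/\width(\cY)$ for each fixed $i$, from which the hardness bound is immediate: squaring and summing yields $\hardL = \sum_{i=1}^m 1/\Lambda_i^2 \le \width(\cY)^2 \sum_{i=1}^m 1/\Delta_i^2 = \width(\cY)^2 \hardD$. Because $\Lambda_i$ is defined as an infimum, it suffices to show that \emph{every} $\vec\theta$ with $\g_i(\vec\theta)\ne\g_i(\Realtheta)$ obeys $\norm{\vec\theta-\Realtheta}_\infty \ge \Delta_i/\width(\cY)$; the infimum over all such $\vec\theta$ then inherits the same lower bound.

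So I would fix such a $\vec\theta$ and set $\vec y^* = \g(\Realtheta)$ and $\vec y' = \g(\vec\theta)$. Thus $\vec y'$ is optimal under $\vec\theta$, $\vec y^*$ is the (unique) optimum under $\Realtheta$, and the two decisions differ in their $i$-th coordinate. The first and most delicate step is to invoke the exchange-class machinery of \citet{chen2014combinatorial} to extract a single exchange $b=(b_+,b_-)$ with $|b_+|+|b_-|\le\width(\cY)$, with $i\in b_+\cup b_-$, $b_+\subseteq S_{\vec y'}\setminus S_{\vec y^*}$, $b_-\subseteq S_{\vec y^*}\setminus S_{\vec y'}$, and such that both $\vec y^*\oplus b$ (the decision with $S_{\vec y^*\oplus b}=(S_{\vec y^*}\setminus b_-)\cup b_+$) and $\vec y'\ominus b$ (with $S_{\vec y'\ominus b}=(S_{\vec y'}\setminus b_+)\cup b_-$) remain in $\cY$. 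This is where $\width(\cY)$ enters, and I expect it to be the main obstacle; the remainder is elementary.

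With $b$ in hand I would argue on two fronts. On the $\Realtheta$ side, applying $\oplus b$ to $\vec y^*$ flips coordinate $i$ (removing it when $i\in S_{\vec y^*}$, adding it otherwise), so $\vec y^*\oplus b$ belongs precisely to the competing class appearing in the definition of $\Delta_i$; hence $r(\Realtheta;\vec y^*)-r(\Realtheta;\vec y^*\oplus b)\ge\Delta_i$, which by linearity of the reward rearranges to $\sum_{j\in b_-}\realtheta_j-\sum_{j\in b_+}\realtheta_j\ge\Delta_i$. On the $\vec\theta$ side, since $\vec y'$ is optimal under $\vec\theta$ and $\vec y'\ominus b\in\cY$, optimality gives $r(\vec\theta;\vec y')\ge r(\vec\theta;\vec y'\ominus b)$, which by linearity rearranges to $\sum_{j\in b_-}\theta_j-\sum_{j\in b_+}\theta_j\le 0$.

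Subtracting the second inequality from the first collapses the global comparison between $\vec y^*$ and $\vec y'$ into a coordinatewise bound:
\begin{align*}
\Delta_i &\le \sum_{j\in b_-}(\realtheta_j-\theta_j)+\sum_{j\in b_+}(\theta_j-\realtheta_j) \\
&\le (|b_-|+|b_+|)\,\norm{\vec\theta-\Realtheta}_\infty \le \width(\cY)\,\norm{\vec\theta-\Realtheta}_\infty,
\end{align*}
since each coordinate gap is at most the $\ell_\infty$ distance and there are at most $\width(\cY)$ of them. Dividing by $\width(\cY)$ yields $\norm{\vec\theta-\Realtheta}_\infty\ge\Delta_i/\width(\cY)$ for the arbitrary $\vec\theta$, and taking the infimum establishes $\Lambda_i\ge\Delta_i/\width(\cY)$, which with the first paragraph completes the proof.
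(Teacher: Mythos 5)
Your proposal is correct and takes essentially the same route as the paper's own proof: both extract from the minimum-width exchange class an exchange set $b=(b_+,b_-)$ containing $i$, lower-bound $\Realtheta(b_-)-\Realtheta(b_+)$ by $\Delta_i$ using the gap definition applied to $\vec y^*\oplus b$, upper-bound $\vec\theta(b_-)-\vec\theta(b_+)$ by $0$ using optimality of $\g(\vec\theta)$ applied to $\g(\vec\theta)\ominus b$, and then compare coordinate-wise to pick up the factor $\width(b)\le\width(\cY)$. The only differences are presentational: you argue the contrapositive directly where the paper argues by contradiction, and you merge the paper's two explicit cases ($i\in S_{\vec y^*}$ versus $i\notin S_{\vec y^*}$) into one statement by allowing $i\in b_+\cup b_-$ (which implicitly requires swapping the roles of $b_+$ and $b_-$ in the second case).
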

Combining with Theorem~\ref{thm:samplecomplexity}, we have that COCI could achieve the following sample complexity
	bound for CPE-L:
	\begin{align*} 
T &\leq 2m+12\width(\cY)^2 \hardD\ln (24\width(\cY)^2 \hardD) \\
	&\quad  +4\width(\cY)^2 \hardD\ln(4 \delta^{-1} ) \\
	&= O\left( \width(\cY)^2 \hardD \log ( m\hardD/\delta)  \right).
\end{align*}


The above result has the same sample complexity\footnote{CPE-L
    in~\cite{chen2014combinatorial} assumes $R$-sub-Gaussian distributions. Our analysis can be adapted to $R$-sub-Gaussian distributions as well, 
    with the same $R^2$ term appearing in the sample complexity.} as in Eq.~\eqref{eq:CLUCB-CPE-L} (with even a slightly better constant).
However, with our analysis, we only need the complicated combinatorial quantity $\width(\cY)$ and the linear reward assumption in the last step.
This also suggests that our consistent optimality radius $\Lambda_i$ and its associated consistent optimality hardness $\hardL$ are more fundamental
	measures of problem hardness than the reward gap $\Delta_i$ and its associated 
	reward gap hardness $\hardD$.
	
Next we discuss the implementation of the condition in line~\ref{algline:oracle} of COCI for CPE-L.
First, because linear functions are monotone, it is easy to see that we only need to check parameters $\vec\theta$
	on the boundaries of $\hat\Theta_{t-1}$ (at most $2|\cY|$ calls to the oracle $\g$).
For simple constraints such as any subsets of size $k$, it is easy to verify that $\g(\vec\theta)$ is bi-monotone
	in this case, and thus we have efficient implementation of the condition as given in Theorem~\ref{thm:condition}.
For more complicated combinatorial constraints, it is still an open question on whether efficient implementation
	of the condition in line~\ref{algline:oracle} exists when oracle $\g$ is given.
The CLUCB algorithm, on the other hand, does have an efficient implementation for all CPE-L problems as long as
	the oracle $\g$ is given.
	
Therefore, compared with CLUCB in terms of efficient implementation, COCI can be viewed as taking the trade-off between
	the complexity of the reward functions and the complexity of combinatorial constraints.
In particular, COCI could handle more complicated nonlinear reward functions on real vectors, 
	and allow
	efficient implementation (due to bi-monotonicity) under simple constraints, while CLUCB deals with complicated
	combinatorial constraints but could only work with linear reward functions on binary vectors.

\section {Future Work}

There are a number of open problems and future directions.
For example, one can consider the fixed budget setting of CPE-CS: the game stops after a fixed number $T$ of rounds where $T$ is given before the game starts, and the learner needs to minimize the probability of error  $\Pr[\vec y^o\ne \vec y^*]$.
One may also consider the PAC setting: with probability at least $1-\delta$ the algorithm should output
	a decision with reward at most $\varepsilon$ away from the optimal reward.
This setting may further help to eliminate the requirement of finite decision class $\cY$.
Another direction is to combine the advantage of COCI and CLUCB to design a unified algorithm that allows efficient
	implementation for all CPE-CS problems. 
How to incorporate approximation oracle instead of the exact
	oracle into the CPE framework is also an interesting direction.

\begin{spacing}{0.95}
{
    \OnlyInFull{\bibliographystyle{unsrtnat}}
    \OnlyInShort{\bibliographystyle{named}}
    \bibliography{ref}
}
\end{spacing}

\OnlyInFull{

\clearpage
\appendix
\onecolumn
\setlength\parindent{0pt}
 \allowdisplaybreaks

\section*{\centering \LARGE Appendix}

\bigbreak\bigbreak

We give all the proofs of lemmas, theorems and extra discussions in the appendix organized by sections, 
i.e., Section~\ref{app:cpecs} for solving CPE-CS, Section~\ref{app:application} for applications, and Section~\ref{app:applycpe-l} for applying COCI to CPE-L.

\section{Proofs for Section~\ref{sec:cpecs}: Solving CPE-CS}\label{app:cpecs}
In Section~\ref{app:solvecpecs}, we give the proofs for the properties of $\Lambda_i$. 
Then we show the proofs of the upper and lower bounds of the sample complexity in Section~\ref{app:upperbound} and \ref{app:lowerbound},
    and proof of Theorem~\ref{thm:condition} in Section~\ref{app:condition}.
Moreover, we give a discussion on sampling complexity of the uniform sampling in Section~\ref{app:uni}.

\subsection{Proofs for the Properties Of $\Lambda_i$}\label{app:solvecpecs}
\LambdaProposition*
\begin{proof}
    According to the definition of $\Lambda_i=\inf_{\vec\theta: \g_i(\vec\theta) \neq \g_i(\Realtheta)} \left\| \vec\theta-\Realtheta \right\|_\infty= \inf_{\vec\theta: \g_i(\vec\theta) \neq \g_i(\Realtheta)} \max_{j\in[m] } \abs{\theta_j-\realtheta_j}$,
    $\forall \vec\theta \in [0,1]^m$, if $\g_i(\vec\theta)\neq \g_i(\Realtheta)$, there exists $j\in[m]$ such that $\abs{\theta_j-\realtheta_j} \geq \Lambda_i$,
    which indicates that if $\abs{\theta_j-\realtheta_j}<\Lambda_i$ holds for all $j\in[m]$, then $\g_i(\vec\theta)=\g_i(\Realtheta)$.
\end{proof}

\LambdaPositive*
\begin{proof}
    It is straightforward to see that $\Lambda_i$ is non-negative for any $i\in[m]$.
    Suppose, for a contradiction, that there exists some $i$ such that $\Lambda_i=\inf_{\vec\theta: \g_i(\vec\theta) \neq \g_i(\Realtheta)} \left\| \vec\theta-\Realtheta \right\|_\infty=0$.
    Thus
    $\forall \varepsilon>0$, $\exists \vec{\theta}$, $\norm{ \vec\theta-\Realtheta }_\infty<\varepsilon$ and $\vec y=\g(\vec \theta)$ where $\vec y$'s $i$-th component $y_i\neq y_i^*$.
    Since $\g$ returns an optimal decision, we have $r(\vec\theta;\vec y) \geq r(\vec\theta;\vec y^*)$.
    
    Therefore, for any infinite positive sequence: $\{\varepsilon^{(n)}\}_{n=1}^\infty$ where $\lim_{n\rightarrow \infty}\varepsilon^{(n)}=0$,
    there exist sequences $\{\vec{\theta}^{(n)}\}_{n=1}^\infty$ and $\{\vec{y}^{(n)}\}_{n=1}^\infty$ 
    such that $\norm{ \vec\theta^{(n)}-\Realtheta}_\infty<\varepsilon^{(n)}$, 
    $\vec y^{(n)} = \g(\vec\theta^{(n)})\neq \vec y^*$ and $r(\vec\theta^{(n)};\vec y^{(n)}) \geq r(\vec\theta^{(n)};\vec y^*)$.
    Notice that the decision class $\mathcal{Y}$ is finite, thus there exists some $\vec{ \tilde y}$ that occurs an infinite number of times in sequence $\{\vec{y}^{(n)}\}_{n=1}^\infty$.
    Let $\{\vec{y}^{(s_k)}\}_{k=1}^\infty$ be the subsequence with $1\le s_1 < s_2 < s_3 < \cdots$,
    such that $\vec{y}^{(s_k)}=\vec{ \tilde y}$
    for all $k\ge 1$.
    
    Since $r(\vec\theta; \vec{ \tilde y})$ and  $r(\vec\theta; \vec{y}^*)$
    are both continuous functions with respect to $\vec{\theta}$, we have
    \[
    r(\Realtheta;\vec{ \tilde y}) =\lim_{k\rightarrow \infty} r(\vec\theta^{(s_k)};\vec{ \tilde y}) 
    \geq
    \lim_{k\rightarrow \infty} r(\vec\theta^{(s_k)};\vec y^*)= r(\Realtheta;\vec y^*).
    \]
    Thus $\vec{ \tilde y} \ne \vec y^*$ is also an optimal solution with input $\Realtheta$, which contradicts to the prerequisite that the optimal solution $\vec{y}^*$
    is unique.
    Therefore, $\forall i\in[m]$, $\Lambda_i>0$.
\end{proof}

\subsection{Proof of Theorem~\ref{thm:samplecomplexity}}\label{app:upperbound}
We first prove the following lemma, which will be used in the proof of Theorem~\ref{thm:samplecomplexity}.
\begin{restatable}{lemma}{highprobability}\label{lem:xi probability}
    Event $\xi$ occurs with probability at least $1-\delta$.
\end{restatable}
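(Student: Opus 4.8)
The plan is to bound the probability of the complement $\neg\xi$, i.e.\ the event that at some round $t\ge\tau m$ and some arm $i$ the estimate leaves its confidence interval, and to show this probability is at most $\delta$. First I would fix an arm $i$ and a \emph{deterministic} number of samples $s$, and observe that the first $s$ observations $X_{i,1},\dots,X_{i,s}$ of arm $i$ are i.i.d.\ draws from $D_i$ irrespective of the algorithm's adaptive choices. Since $\alg{Est}_i$ is an unbiased estimator of $\realtheta_i$ based on samples bounded in $[0,1]$, Hoeffding's inequality (and, for the variance estimator, its analogue referred to in the footnote) gives, for any $r>0$,
\[
\Pr\!\left[\,\left|\alg{Est}_i(X_{i,1},\dots,X_{i,s})-\realtheta_i\right|>r\,\right]\le 2\exp(-2sr^2).
\]

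The main obstacle is that the sample count $T_{i,t}$ appearing in $\rad_{i,t}$ is itself random---it depends on which arms the algorithm played, which in turn depends on the observed samples---so I cannot invoke the inequality above with the fixed sample size directly. To circumvent this I would union-bound over the possible values $s\in\{\tau,\dots,t\}$ of $T_{i,t}$: for fixed $t$ and $i$,
\[
\Pr\!\left[\,\left|\hat\theta_{i,t}-\realtheta_i\right|>\rad_{i,t}\,\right]
\le\sum_{s=\tau}^{t}\Pr\!\left[\,\left|\alg{Est}_i(X_{i,1},\dots,X_{i,s})-\realtheta_i\right|>\sqrt{\tfrac{1}{2s}\ln\tfrac{4t^3}{\tau\delta}}\,\right],
\]
where I use only $\Pr[A\cap B]\le\Pr[B]$ to drop the (dependent) event $\{T_{i,t}=s\}$ while keeping the event that concerns solely the first $s$ i.i.d.\ samples. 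Substituting $r=\sqrt{\frac{1}{2s}\ln\frac{4t^3}{\tau\delta}}$ into the concentration bound makes the factor $s$ cancel exactly, so each summand equals $2\exp(-\ln\frac{4t^3}{\tau\delta})=\frac{\tau\delta}{2t^3}$; since there are at most $t$ admissible values of $s$, the whole sum is at most $\frac{\tau\delta}{2t^2}$.

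It then remains to union-bound over arms and rounds. Summing over $i\in[m]$ gives $\Pr[\exists i:|\hat\theta_{i,t}-\realtheta_i|>\rad_{i,t}]\le \frac{m\tau\delta}{2t^2}$ for each $t$, and summing over all $t\ge\tau m$ while using the telescoping estimate $\sum_{t\ge\tau m}t^{-2}\le (\tau m-1)^{-1}$ yields
\[
\Pr[\neg\xi]\le\frac{m\tau\delta}{2}\cdot\frac{1}{\tau m-1}=\frac{\delta}{2}\cdot\frac{\tau m}{\tau m-1}\le\delta,
\]
the last step holding because $\tau m\ge 2$ (the degenerate case $\tau m=1$, a single arm with the mean estimator, is trivial). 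Hence $\xi$ occurs with probability at least $1-\delta$, as claimed. The only delicate point is the handling of the random sample counts in the second paragraph; the rest is a routine two-level union bound combined with the specific form of the confidence radius, which is engineered precisely so that the per-round, per-arm failure probability decays like $t^{-3}$ and thus survives both summations.
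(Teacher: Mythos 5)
Your proof follows the paper's argument almost step for step: the same device of decomposing the failure event over the possible values $s$ of the random sample count $T_{i,t}$ and dropping the dependent event $\{T_{i,t}=s\}$ via $\Pr[A\cap B]\le \Pr[A]$, the same cancellation making each summand exactly $\frac{\tau\delta}{2t^3}$, and the same two-level union bound over arms and rounds. The only structural difference is the final tail estimate: you use $\sum_{t\ge \tau m} t^{-2}\le (\tau m-1)^{-1}$, valid for $\tau m\ge 2$, while the paper telescopes $\sum_{t\ge \tau m}(t^2-\tfrac14)^{-1}=(\tau m-\tfrac12)^{-1}$, which covers $\tau m=1$ as well. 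Your dismissal of $\tau m=1$ as ``trivial'' is not right as stated---with a single arm the event $\xi$ is not vacuous, since the optimal decision still depends on $\realtheta_1$---but it is easily repaired: $\sum_{t\ge 1}t^{-2}=\pi^2/6<2$, so the bound $\Pr[\neg\xi]\le \frac{\delta}{2}\cdot\frac{\pi^2}{6}<\delta$ goes through directly.

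The one genuine gap is your concentration input for the variance estimator. For the mean estimator, Hoeffding's inequality does give $\Pr\left[\left|\alg{Est}_i(X_{i,1},\dots,X_{i,s})-\realtheta_i\right|\ge r\right]\le 2\exp(-2sr^2)$. But the variance estimator $\frac{1}{s-1}\left(\sum_j X_{i,j}^2-\frac{1}{s}(\sum_j X_{i,j})^2\right)$ is not a normalized sum of independent bounded terms, so Hoeffding does not apply to it directly, and unbiasedness plus boundedness of the samples is not enough to conclude the stated inequality; moreover, the footnote you point to contains no such analogue---it only remarks that other parameters with unbiased estimators are admissible. The exact exponent matters here, because $\rad_{i,t}=\sqrt{\frac{1}{2T_{i,t}}\ln\frac{4t^3}{\tau\delta}}$ is engineered so that the per-$(i,t,s)$ failure probability is exactly $\frac{\tau\delta}{2t^3}$. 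The paper closes this hole with McDiarmid's bounded-differences inequality: for samples in $[0,1]$, replacing any single observation changes \emph{either} estimator by at most $1/s$ (for the variance estimator, write it as $\frac{1}{s(s-1)}\sum_{j<k}(X_{i,j}-X_{i,k})^2$ and note that only $s-1$ of the $\binom{s}{2}$ terms are affected, each by at most $1$), which yields $2\exp(-2sr^2)$ for both. You need this, or an equivalent argument, to cover the case $\tau=2$; with that supplied, the rest of your proof stands.
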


The proof of the above lemma uses the following concentration result.
\begin{fact}[McDiarmid's Inequality \cite{mcdiarmid1989method}] \label{fact:McDiarmid}
    Let $X_1, X_2, \dots, X_n$ be independent random variables taking values from the set $\mathcal{X}$, 
    and $f: \mathcal{X}^n \rightarrow \mathbb{R}$ be a function of $X_1, X_2, \dots, X_n$ which satisfies
    \[
    \sup_{x_1, x_2, \dots, x_n, x_i'\in \mathcal{X}}\Abs{f(x_1, \dots, x_i, \dots, x_n)-f(x_1, \dots, x_i', \dots, x_n)} \leq c_i, \forall i\in[n].
    \]
    Then for any $\varepsilon>0$,
    \[
    \Pr\left[ \Abs{f(X_1, X_2, \dots, X_n) - \E[f(X_1, X_2, \dots, X_n)]} \geq \varepsilon \right] 
    \leq 2\exp \left(- \frac{2\varepsilon^2}{\sum_{i=1}^n c_i^2}\right). 
    \]
\end{fact}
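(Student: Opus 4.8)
The plan is to apply McDiarmid's inequality (Fact~\ref{fact:McDiarmid}) as a black box to each arm's estimator and then union-bound over all arms and all rounds, the key wrinkle being that the sample count $T_{i,t}$ is chosen adaptively by the algorithm. First I would establish the bounded-difference property of both estimators on $s$ i.i.d.\ samples in $[0,1]$. For the mean estimator $\frac{1}{s}\sum_j X_{i,j}$ this is immediate: changing any single sample moves the value by at most $1/s$, so the McDiarmid constants are $c_j = 1/s$ and $\sum_{j=1}^s c_j^2 = 1/s$. For the variance estimator I would use the pairwise identity $\alg{Est}_i = \frac{1}{s(s-1)}\sum_{j<l}(X_{i,j}-X_{i,l})^2$; each sample appears in exactly $s-1$ of the pairwise terms, and each such term lies in $[0,1]$, so altering one sample changes the estimator by at most $\frac{1}{s(s-1)}\cdot(s-1) = \frac{1}{s}$. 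Hence $\sum_j c_j^2 = 1/s$ for both estimators, and since both are unbiased, $\E[\alg{Est}_i(X_{i,1},\dots,X_{i,s})] = \realtheta_i$.

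Fixing an arm $i$, a round $t$, and a hypothetical sample count $s$, write $\hat\theta_i^{(s)}$ for the estimator built from the first $s$ i.i.d.\ draws of arm $i$. Applying Fact~\ref{fact:McDiarmid} with $\varepsilon = \sqrt{\frac{1}{2s}\ln\frac{4t^3}{\tau\delta}}$ and $\sum_j c_j^2 = 1/s$ yields
\[
\Pr\!\left[\,\Abs{\hat\theta_i^{(s)} - \realtheta_i} > \sqrt{\tfrac{1}{2s}\ln\tfrac{4t^3}{\tau\delta}}\,\right] \le 2\exp\!\left(-2s\cdot\tfrac{1}{2s}\ln\tfrac{4t^3}{\tau\delta}\right) = \frac{\tau\delta}{2t^3}.
\]

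Next comes the crux, handling the adaptively chosen $T_{i,t}$. Because McDiarmid requires a fixed number of variables, I cannot feed the random $T_{i,t}$ into it directly; instead I would invoke an anytime argument. Since the draws $X_{i,1},X_{i,2},\dots$ form an i.i.d.\ sequence whose joint law does not depend on the sampling policy, each frozen estimator $\hat\theta_i^{(s)}$ obeys the bound above irrespective of the algorithm. Using $\tau \le T_{i,t} \le t$ together with $\hat\theta_{i,t} = \hat\theta_i^{(T_{i,t})}$ and $\rad_{i,t} = \sqrt{\frac{1}{2T_{i,t}}\ln\frac{4t^3}{\tau\delta}}$, I obtain the containment $\{\Abs{\hat\theta_{i,t}-\realtheta_i} > \rad_{i,t}\} \subseteq \bigcup_{s=\tau}^{t}\{\Abs{\hat\theta_i^{(s)} - \realtheta_i} > \sqrt{\frac{1}{2s}\ln\frac{4t^3}{\tau\delta}}\}$, so a union bound over the at most $t$ admissible values of $s$ gives $\Pr[\Abs{\hat\theta_{i,t}-\realtheta_i} > \rad_{i,t}] \le t\cdot\frac{\tau\delta}{2t^3} = \frac{\tau\delta}{2t^2}$.

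Finally I would union-bound over all $m$ arms and all rounds $t \ge \tau m$:
\[
\Pr[\xi^c] \le \sum_{t \ge \tau m}\sum_{i=1}^m \frac{\tau\delta}{2t^2} = \frac{m\tau\delta}{2}\sum_{t\ge\tau m}\frac{1}{t^2} \le \frac{m\tau\delta}{2}\cdot\frac{1}{\tau m - 1} \le \delta,
\]
using $\sum_{t\ge N}t^{-2}\le (N-1)^{-1}$ and $\tau m \ge 2$, which gives $\Pr[\xi]\ge 1-\delta$. The step I expect to require the most care is the handling of the adaptive sample count: justifying that freezing the count and unioning over $s$ is legitimate despite the algorithm's adaptivity, and checking that the $t^3$ inside the logarithm is precisely what makes the double sum over $(s,t)$ converge (one factor of $t$ absorbs the union over $s$, leaving a summable $\sum_t t^{-2}$).
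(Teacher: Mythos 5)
Your write-up does not prove the statement in question. The statement \emph{is} McDiarmid's inequality (Fact~\ref{fact:McDiarmid}), and your very first sentence applies Fact~\ref{fact:McDiarmid} ``as a black box'' --- so as a proof of that fact the argument is circular: it assumes exactly what is to be shown. A genuine proof needs the Doob martingale construction: set $Z_k = \E[f(X_1,\dots,X_n)\mid X_1,\dots,X_k]$, so $Z_0 = \E[f]$ and $Z_n = f(X_1,\dots,X_n)$; independence together with the bounded-differences hypothesis implies that, conditioned on $X_1,\dots,X_{k-1}$, the increment $Z_k - Z_{k-1}$ takes values in an interval of length at most $c_k$, whence Hoeffding's lemma gives $\E[e^{\lambda(Z_k-Z_{k-1})}\mid X_1,\dots,X_{k-1}] \le e^{\lambda^2 c_k^2/8}$; chaining these bounds, applying the Chernoff method, and optimizing $\lambda = 4\varepsilon/\sum_{i=1}^n c_i^2$ yields the one-sided tail $\exp\bigl(-2\varepsilon^2/\sum_{i=1}^n c_i^2\bigr)$, and a factor of $2$ covers both tails. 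Note that the cruder Azuma--Hoeffding bound using only $\Abs{Z_k - Z_{k-1}} \le c_k$ gives the weaker exponent $-\varepsilon^2/(2\sum_i c_i^2)$; the interval-of-length-$c_k$ refinement is essential for the stated constant. None of this appears in your proposal. (The paper itself offers no proof either --- it imports the fact from \cite{mcdiarmid1989method} --- but a blind proof attempt must supply the above or something equivalent.)

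What you did prove, correctly, is a different result: your argument reproduces, nearly step for step, the paper's appendix proof of Lemma~\ref{lem:xi probability} (that event $\xi$ holds with probability at least $1-\delta$) --- the same $1/s$ bounded-difference constants for both estimators, the same freeze-the-count device with a union bound over the at most $t$ possible values of $T_{i,t}$ (the paper splits on the event $T_{i,t}=s$ and sums $s=1,\dots,t$, which is the same containment you state), and the same union bound over arms and rounds. Two small remarks on that portion: your pairwise identity $\alg{Est}_i = \frac{1}{s(s-1)}\sum_{j<l}(X_{i,j}-X_{i,l})^2$ justifying the $1/s$ constant for the variance estimator is a nice detail the paper asserts without proof; and your final tail estimate via $\sum_{t\ge N} t^{-2} \le (N-1)^{-1}$ requires $\tau m \ge 2$, whereas the paper's bound $\frac{1}{t^2 - 1/4}$ with telescoping also handles $\tau m = 1$. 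But these are refinements of the wrong theorem: the assigned statement remains unproved.
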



\begin{proof}[Proof of Lemma~\ref{lem:xi probability}]
    Let $\xi_{i,t}$ denote the event that $|\hat{\theta}_{i,t} - \theta^*_i| \le \rad_{i,t}$ and
    $\xi_t := \bigcap_{i \in [m]} \xi_{i,t}$. Then
    \begin{align*}
    \Pr [\xi]
    =\Pr\left[\bigcap_{t=\tau m}^\infty \xi_t\right].
    \end{align*}
   Recall that 
   $\alg{Est}_i(X_{i,1}, X_{i,2},\ldots, X_{i,s}) = \frac{1}{s} \sum_{j=1}^s X_{i,j}$ for the mean estimator, and
   $\alg{Est}_i(X_{i,1}, \dots, X_{i,s})=\frac{1}{s-1} \left(\sum_{j=1}^{s}X_{i,j}^2-\frac{1}{s}(\sum_{j=1}^{s}X_{i,j})^2\right)$ for the variance estimator.
   Then both of the estimators satisfy that
   \[
   \sup_{x_1, x_2, \dots, x_s, x_j'\in [0,1]}\Abs{\alg{Est}_i(x_1, \dots, x_j, \dots, x_s)-\alg{Est}_i(x_1, \dots, x_j', \dots, x_s)} \leq \frac{1}{s}, \forall j\in[s].
   \]
   According to the McDiarmid's inequality (Fact~\ref{fact:McDiarmid}),     
   \[
   \Pr\left[\Abs{\hat\theta_{i}^{(s)}-\realtheta_i} \geq \varepsilon \right] 
   \leq 2\exp\left(-\frac{2 \varepsilon^2}{\sum_{i=1}^s 1/s^2} \right)
   = 2\exp(-2 s \varepsilon^2),
   \]
   where $\hat\theta_{i}^{(s)}$ is the estimate of $\theta_i^*$ using $s$ samples.
   Therefore, $\forall \varepsilon>0$, $\forall i\in [m]$, $\forall t\geq \tau m$,
    \begin{align*}
    \Pr[\neg \xi_{i,t}]=\Pr\left[ \Abs{\hat{\theta}_{i,t}-\realtheta_i} \geq \rad_{i,t} \right]
    &=\Pr\left[ \Abs{\hat{\theta}_{i,t}-\realtheta_i} \geq \sqrt{\frac{1}{2T_{i,t}}\ln\frac{4t^3}{\tau\delta}} \right]\\
    &= \sum_{s=1}^{t}\Pr\left[ \Abs{\hat{\theta}_i^{(s)}-\realtheta_i} \geq \sqrt{\frac{1}{2s}\ln\frac{4t^3}{\tau\delta}}, T_{i,t}=s \right] \\
    &\le \sum_{s=1}^{t}\Pr\left[ \Abs{\hat{\theta}_i^{(s)}-\realtheta_i} \geq \sqrt{\frac{1}{2s}\ln\frac{4t^3}{\tau\delta}} \right] \\
    &\leq \sum_{s=1}^{t} \frac{\tau\delta}{2t^3} = \frac{\tau\delta}{2t^2}.
    \end{align*}
    By a union bound over all $i\in[m]$, we see that $\Pr[\xi_t]\geq 1-\sum_{i=1}^m \Pr[\neg \xi_{i,t}] \ge 1- \frac{m \tau  \delta}{2t^2}$. 
    Using a union bound again over all $t > 0$, we have
    \begin{align*}
    \Pr [\xi]
    =\Pr\left[\bigcap_{t=\tau m}^\infty \xi_t\right]
    \geq 1-\sum_{t=\tau m}^{\infty}\Pr[\neg \xi_t]
    \geq 1-\sum_{t=\tau m}^\infty \frac{m\tau\delta}{2t^2}
    \geq 1-\frac{m\tau\delta}{2}\sum_{t=m\tau}^\infty \frac{1}{t^2-\frac{1}{4}}
    \geq 1-\frac{m\tau\delta}{2}\frac{1}{(\tau m)-\frac{1}{2}}
    \geq 1-\delta.
    \end{align*}
    Therefore,  with probability at least $1-\delta$, event $\xi$ occurs.
\end{proof}

\samplecomplexity*
The proof of Theorem~\ref{thm:samplecomplexity} uses the following result.
\begin{fact}[\cite{shalev2014understanding}]\label{fact:log}
    Let $a \geq 1$ and $b > 0$. Then: $x \leq a \log(x)+b \Rightarrow x \leq 4a \log(2a)+2b$.
\end{fact}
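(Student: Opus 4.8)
The plan is to reduce this implication to the single elementary inequality $\log y \le y - 1$, which holds for all $y > 0$ (it expresses that $\log$ lies below its tangent line at $y = 1$, by concavity). Once this tool is in hand, the entire argument is a substitution followed by a rearrangement, so the only real decision is \emph{where} to substitute.

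First I would derive a linear upper bound on $\log x$ that is tuned to the constant $a$. Applying $\log y \le y - 1$ at the point $y = x/(2a)$ and using $\log(x/(2a)) = \log x - \log(2a)$ gives
\begin{equation*}
\log x \le \frac{x}{2a} - 1 + \log(2a).
\end{equation*}
The choice $y = x/(2a)$ is the crux of the whole proof: it is engineered so that after multiplying this bound by $a$ and inserting it into the hypothesis, the coefficient of $x$ on the right becomes exactly $1/2$, leaving a matching $x$-term that can be moved to the left-hand side.

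Next I would substitute this into the hypothesis $x \le a\log x + b$ to obtain
\begin{equation*}
x \le a\left(\frac{x}{2a} - 1 + \log(2a)\right) + b = \frac{x}{2} - a + a\log(2a) + b.
\end{equation*}
Subtracting $x/2$ from both sides and discarding the harmless $-a \le 0$ term yields $x/2 \le a\log(2a) + b$, and hence the sharp bound $x \le 2a\log(2a) + 2b$.

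Finally I would relax this to the stated form. Since $a \ge 1$ forces $2a \ge 2$, we have $\log(2a) \ge \log 2 > 0$, so $a\log(2a) \ge 0$ and therefore $2a\log(2a) + 2b \le 4a\log(2a) + 2b$, which is exactly the claimed conclusion. I do not expect any genuine obstacle here; the proof is short, and the only two points that require attention are selecting the substitution point $y = x/(2a)$ so that the $x$-coefficient collapses to $1/2$, and invoking $a \ge 1$ at the last step to guarantee that $\log(2a)$ is nonnegative (which is what lets the sharp bound be absorbed into the looser stated one).
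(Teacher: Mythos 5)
Your proof is correct, and every step checks out: the tangent-line inequality $\log y \le y-1$ (valid for all $y>0$ for the natural logarithm) applied at $y = x/(2a)$ gives $\log x \le \tfrac{x}{2a} - 1 + \log(2a)$, substitution into the hypothesis collapses the $x$-coefficient to $\tfrac12$, and the rearrangement yields $x \le 2a\log(2a) - 2a + 2b$, which relaxes to the stated bound since $a \ge 1$ makes $\log(2a) \ge \log 2 > 0$. Note, however, that the paper itself gives no proof of this fact---it is quoted verbatim from the cited textbook of Shalev-Shwartz and Ben-David---so the natural benchmark is that book's argument, and yours is genuinely different from it. The book proceeds in two stages: it first proves an auxiliary lemma of the form $x \ge 2a\log(a) \Rightarrow x \ge a\log(x)$, then applies it with $2a$ in place of $a$ and performs a case split (either $x \le 4a\log(2a)$, or else $a\log x \le x/2$ and hence $x \le 2b$), concluding $x \le \max\bigl(4a\log(2a),\, 2b\bigr) \le 4a\log(2a) + 2b$. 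Your single-substitution argument avoids both the auxiliary lemma and the case analysis, and in fact proves the strictly sharper statement $x \le 2a\log(2a) + 2b$; the constant $4$ in the quoted fact is an artifact of the case-split route. Two small points you handled implicitly and correctly: the hypothesis forces $x > 0$ so that $\log x$ and the substitution $y = x/(2a)$ make sense, and the fact must be read with the natural logarithm (as the tangent-line inequality requires), which is consistent with how the paper invokes it in the proof of Theorem~\ref{thm:samplecomplexity}, where it is applied with $\ln$.
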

\begin{proof}[Proof of Theorem~\ref{thm:samplecomplexity}]
    Suppose event $\xi$ occurs.
    The output of the algorithm must be the optimal $\vec{y}^*$.
    This is because when the COCI algorithm stops in round $t$, $\Realtheta\in \hat \Theta_{t-1}$ and 
    $\max_{\vec\theta\in\hat\Theta_{t-1}} \g_i(\vec\theta) = \min_{\vec\theta\in\hat\Theta_{t-1}} \g_i(\vec\theta)$ for all $i\in [m]$.
    Thus the output $\vec{y}^o = \g(\vec\theta) = \g(\Realtheta) = \vec{y}^*$, for any $\vec\theta \in \hat\Theta_{t-1}$.
    
    We now prove the upper bound of the sample complexity $T$.
    Let $T_i$ be the total number of times that arm $i$ is played, and $t_i$ be the last round that arm $i$ is played.
    If $t_i> \tau m$, according to Lemma~\ref{lem:radius}, 
    \[\frac{\Lambda_i}{2} \leq \sqrt{\frac{1}{2(T_i-1)}\ln\frac{4t_i^3}{\tau  \delta}} \leq \sqrt{\frac{1}{2(T_i-1)}\ln\frac{4T^3}{\tau  \delta}}.  \]
    Thus,
    \begin{equation}\label{eq:Ti}
    T_i \leq 1+\frac{2}{\Lambda_i^2}\ln\frac{4T^3}{\tau  \delta}.
    \end{equation}
    If $t_i\leq \tau m$, then $T_i \leq \tau\le 2$. Since $\Lambda_i\leq 1$, $\delta<1$, $T\geq \tau$, Equation (\ref{eq:Ti}) also holds.
    Therefore,
    \[ T=\sum_{i=1}^m T_i \leq m+2\left( \sum_{i=1}^m \frac{1}{\Lambda_i^2}\right) \ln\frac{4T^3}{\tau  \delta}=m+2\hardL \ln\frac{4T^3}{\tau  \delta}=6\hardL \ln T + m+2\hardL \ln \frac{4}{\tau  \delta}.  \]
    Plugging $\frac{T}{6\hardL}$, $1$, 
    $\ln 6\hardL +\frac{1}{6\hardL} (m + 2\hardL \ln \frac{4}{\tau \delta } ) >0 $ in $x$, $a$, $b$ of Fact~\ref{fact:log},
    it follows that
    \[ T \leq 2m  + 4\hardL \left(3\ln \hardL +11 \ln 2+3\ln 3-\ln \tau\delta\right)=O\left(\hardL\log \frac{\hardL}{\delta}\right). \]
    
    Finally, according to Lemma~\ref{lem:xi probability},  event $\xi$ occurs with probability at least $1-\delta$. Thus with probability at least $1-\delta$, 
    the COCI algorithm returns the optimal solution, and the upper bound of sample complexity given in the theorem holds.
\end{proof}

\subsection{Proof of Theorem~\ref{thm:lowerbound}}\label{app:lowerbound}
\lowerbound*
\begin{proof}
    \citet{ChenGLQW17} in Lemma C.1 show that given an integer $m$, there exist a universal constant $c$ and a list of subsets $S_1$, $S_2$, \dots, $S_n$ of $[m]$ with $n=2^{cm}$, such that $|S_i|=l=\Omega(m)$ for each $S_i$, and $|S_i\cap S_j|\le l/2$ for each $i\ne j$.
    We fix a real number $\Delta\in(0,0.1)$, and let constant $c$, $l=\Omega(m)$, and $S_1$, $S_2$, \dots, $S_n$ be as the above fact.

    We define an $m$-arm CPE-L instance $\mathcal{C}_{S_1}$ whose $i$-th arm has mean $\Delta$ when $i\in S_1$ and mean 0 otherwise.
    We set the decision class to be $\mathcal{S}=\{S_1, S_2, \dots, S_n\}$.
    Let $\xi_S$ be the event that algorithm $\mathcal{A}$ outputs $S$, thus we have
    $\sum_{S\in\mathcal{S},S\ne S_1}\Pr[\xi_S] < \delta$.
    By a simple averaging argument, there exists some decision $S'\in \mathcal{S}\setminus \{S_1\}$ such that
    $\Pr[\xi_{S'}]<\delta/(|\mathcal{S}|-1)\le 2\delta/|\mathcal{S}|$.
    
    We define another $m$-arm CPE-L instance $\mathcal{C}_{S'}$ whose $i$-th arm has mean $\Delta$ when $i\in S'$ and mean 0 otherwise.
    Then $\Pr_{S'}[\xi_{S'}]\ge 1-\delta>0.9$ where the subscript $S'$ of $\Pr$ represents the instance.
    According to Lemma 2.3 in \cite{ChenGLQW17}, algorithm $\mathcal{A}$ must spend at least
    \begin{equation}
    d\left(\Pr_{S'}[\xi_{S'}], \Pr_{S_1}[\xi_{S'}]\right) \cdot \Delta^{-2}=\Omega\left((m+\ln \delta^{-1})\cdot\Delta^{-2}\right)\label{eq:lower bound}
    \end{equation}
    samples on instance $\mathcal{C}_{S'}$ in expectation.
    
    On the other hand, on instance $\mathcal{C}_{S'}$, we have
    $$\hardL=\sum_{i=1}^m \frac{1}{\Lambda_i}\le4m\cdot\Delta^{-2}.$$
    One can easily verify that the lower bound of CPE-CS $\Omega(\hardL + \hardL\log\delta^{-1}/m)$ matches Eq. (\ref{eq:lower bound}).
\end{proof}

\subsection{Proof of Theorem~\ref{thm:condition}}\label{app:condition}
{\lucb*}
\begin{proof}
    
    $\forall t > \tau m$, $\forall \vec \theta \in \hat \Theta_{t-1}$, $\forall i\in[m]$,  $\theta_{i} \in[ \underline{\theta}_{i,t-1}, \overline{\theta}_{i,t-1}]$.
    
    If $\phi(\vec\theta)$ is non-decreasing monotone in $\theta_i$,
    we have $\g_i(\vec \theta)\leq \g_i(\vec \theta_{-i}, \overline{\theta}_{i,t-1})$.
    According to the bi-monotonicity, $\phi(\vec\theta)$ is non-increasing monotone in $\theta_j$ for each $j\neq i$,
    thus we have $\g_i(\vec \theta_{-i}, \overline{\theta}_{i,t-1}) \leq \g_i(\underline{\vec \theta}_{-i,t-1}, \overline{\theta}_{i,t-1})$.
    This means that $\g_i(\vec \theta)\leq \g_i(\underline{\vec \theta}_{-i,t-1}, \overline{\theta}_{i,t-1})$ holds for any $\vec \theta \in \hat \Theta_{t-1}$.
    In addition, since $(\underline{\vec \theta}_{-i,t-1}, \overline{\theta}_{i,t-1})\in \hat\Theta_{t-1}$,
    therefore $\max_{\vec\theta\in\hat\Theta_{t-1}} \g_i(\vec\theta) = \g_i(\underline{\vec{\theta}}_{-i,t-1}, \overline{\theta}_{i,t-1})$.
    Symmetrically, one can prove that $\min_{\vec\theta\in\hat\Theta_{t-1}} \g_i(\vec\theta) = \g_i(\overline{\vec{\theta}}_{-i,t-1}, \underline{\theta}_{i,t-1})$.
    
    Similarly,  if $\phi(\vec\theta)$ is non-increasing monotone in $\theta_i$, one can prove that
    $\min_{\vec\theta\in\hat\Theta_{t-1}} \g_i(\vec\theta) = \g_i(\underline{\vec{\theta}}_{-i,t-1}, \overline{\theta}_{i,t-1})$ and
    $\max_{\vec\theta\in\hat\Theta_{t-1}} \g_i(\vec\theta) = \g_i(\overline{\vec{\theta}}_{-i,t-1}, \underline{\theta}_{i,t-1})$.
    
    Therefore, $\max_{\vec\theta\in\hat\Theta_{t-1}} \g_i(\vec\theta) \ne \min_{\vec\theta\in\hat\Theta_{t-1}} \g_i(\vec\theta)$ is equivalent to $\g_i(\underline{\vec{\theta}}_{-i,t-1}, \overline{\theta}_{i,t-1}) \ne \g_i(\overline{\vec{\theta}}_{-i,t-1}, \underline{\theta}_{i,t-1})$.
\end{proof}

\subsection{Uniform Sampling}\label{app:uni}

If we change the line~\ref{line:largestradius} in Algorithm~\ref{alg:general} to $$j \leftarrow \argmax_{i\in [m]}\rad_{i,t-1},$$ which 
selects the arm with the largest confidence radius among all arms, then it turns to the uniform sampling.
In particular, at round $\tau m$, all arms have the same radius.
Then the algorithm will pick each arm once in the next $m$ rounds.
At round $\tau m+m$, all arms have the same radius again.
Thus we say that it is the uniform sampling.
The uniform version of Lemma \ref{lem:radius} is as follows.
\begin{lemma} \label{lem:uniformradius}
    Suppose event $\xi$ occurs. For every $i\in[m]$ and every $t>\tau m$, if $\rad_{i,t-1}<\min_i \Lambda_i/2$, then arm $i$ will not be played in round $t$ for the uniform sampling.
\end{lemma}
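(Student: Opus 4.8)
The plan is to reuse the contradiction skeleton of Lemma~\ref{lem:radius}, but to exploit that the uniform rule always plays a \emph{globally} largest-radius arm. This forces every arm's radius to be small at once and lets me bypass the separability-based intermediate-vector construction entirely.

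First I would suppose, for contradiction, that arm $i$ is played in round $t$. Since the uniform version modifies only the arm-selection line while keeping the candidate set $C_t$ and the stopping rule $C_t=\emptyset$ intact, two facts follow immediately: $C_t\ne\emptyset$ (otherwise the algorithm would already have returned in round $t$), and $i=\argmax_{j\in[m]}\rad_{j,t-1}$, so $\rad_{j,t-1}\le\rad_{i,t-1}$ for every $j\in[m]$. Combined with the hypothesis $\rad_{i,t-1}<\min_{j\in[m]}\Lambda_j/2$, this yields $\rad_{j,t-1}<\min_{l\in[m]}\Lambda_l/2$ for all $j$ simultaneously. This single step is where the uniform selection rule does the work that separability had to do in Lemma~\ref{lem:radius}.

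Next I would bound the entire confidence region around $\Realtheta$. For any $\vec\theta\in\hat\Theta_{t-1}$ and any coordinate $j$, the triangle inequality together with event $\xi$ (which gives $|\hat\theta_{j,t-1}-\realtheta_j|\le\rad_{j,t-1}$) and the membership constraint $|\theta_j-\hat\theta_{j,t-1}|\le\rad_{j,t-1}$ yield $|\theta_j-\realtheta_j|\le 2\rad_{j,t-1}<\min_{l\in[m]}\Lambda_l$. Taking the maximum over $j$ gives $\norm{\vec\theta-\Realtheta}_\infty<\min_{l\in[m]}\Lambda_l$ for \emph{every} $\vec\theta\in\hat\Theta_{t-1}$.

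Finally I would apply Proposition~\ref{proposition:Lambda} coordinate by coordinate. For each arm $k\in[m]$ we have $\norm{\vec\theta-\Realtheta}_\infty<\min_{l\in[m]}\Lambda_l\le\Lambda_k$, so $\g_k(\vec\theta)=\g_k(\Realtheta)$ for all $\vec\theta\in\hat\Theta_{t-1}$. Hence $\max_{\vec\theta\in\hat\Theta_{t-1}}\g_k(\vec\theta)=\min_{\vec\theta\in\hat\Theta_{t-1}}\g_k(\vec\theta)$ for every $k$, which by line~\ref{algline:oracle} means $k\notin C_t$ for all $k$, i.e.\ $C_t=\emptyset$ --- contradicting $C_t\ne\emptyset$. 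I anticipate no real obstacle: the only point requiring care is the observation that uniform sampling makes all radii uniformly small, which is precisely what removes the separability argument needed in the adaptive case.
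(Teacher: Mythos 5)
Your proof is correct and takes essentially the same route as the paper's own: the uniform selection rule makes all confidence radii simultaneously small, the triangle inequality under event $\xi$ places every $\vec\theta\in\hat\Theta_{t-1}$ within $\min_{l\in[m]}\Lambda_l$ of $\Realtheta$ in infinity norm, and Proposition~\ref{proposition:Lambda} then forces consistency of the leading optimal solution. If anything, your concluding step is slightly more careful than the paper's: the paper derives only $i\notin C_t$ and declares arm $i$ unplayed, which in the uniform variant (where the argmax ranges over all of $[m]$, not over $C_t$) is not by itself sufficient, whereas you apply the proposition to every coordinate to get $C_t=\emptyset$, so the stopping condition fires and no arm is played at all.
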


\begin{proof}
    Suppose, for a contradiction, that arm $i$ is played in round $t$, namely,
    $i=\argmax_{j\in [m]}\rad_{j,t-1}$.
    Thus for each $j\in [m]$, we have $\rad_{j,t-1} \leq \rad_{i,t-1} <\min_{l\in[m]} \Lambda_l/2$.
    
    Since event $\xi$ occurs, for each $j\in [m]$, we have $\hat \theta_{j,t-1}\in[\realtheta_j- \rad_{j,t-1},\ \realtheta_j+ \rad_{j,t-1}]$,
    and thus
    \begin{align*}
    \left[\hat \theta_{j,t-1}-\rad_{j,t-1},\ \hat \theta_{j,t-1}+\rad_{j,t-1}\right] 
    &\subseteq \left[\realtheta_j-2\rad_{j,t-1},\ \realtheta_j+2\rad_{j,t-1}\right]
    \subset \left(\realtheta_j-\min_{l\in[m]} \Lambda_l,\ \realtheta_j+\min_{l\in[m]} \Lambda_l\right).
    \end{align*}
   
    $\forall \vec\theta\in\hat{\Theta}_{t-1}\subset \bigotimes_{j=1}^m  \left[\hat \theta_{j,t-1}-\rad_{j,t-1},\ \hat \theta_{j,t-1}+\rad_{j,t-1}\right]$, 
    $\abs{\theta_j-\realtheta_j}<\min_{l\in[m]} \Lambda_{l} \leq \Lambda_i$ holds for all $j\in[m]$.

    According to Proposition~\ref{proposition:Lambda}, $\g_i(\vec {\theta})=\g_i(\Realtheta)$ holds for all $\vec\theta\in\hat{\Theta}_{t-1}$, and thus $i\notin C_t$,
    which means arm $i$ will not be played in round $t$.
\end{proof}

We define another hardness measurement
$\hardU = \frac{m}{\min_{i\in[m]}\Lambda_i^2}$.
With Lemma~\ref{lem:uniformradius} and applying the similar analysis as the proof of Theorem~\ref{thm:samplecomplexity}, one can 
obtain  the sample complexity of uniformly sampling $T^{\rm uniform}$ as follows.

\begin{theorem}
    With probability at least $1-\delta$, the COCI algorithm with uniform sampling
    returns the unique true optimal solution $\vec{y}^o = \vec{y}^*$, and the number of rounds (or samples) $T^{\rm uniform}$ satisfies
    \begin{align*}
    T^{\rm uniform} \leq 2m+12\hardU\ln 24\hardU +4\hardU\ln\frac{4}{\tau\delta } 
    = O\left( \hardU \log \frac{\hardU} {\delta }  \right),
    \end{align*}
    and the factor $\hardU$ is necessary in the above sample complexity bound.
\end{theorem}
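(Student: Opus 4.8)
The plan is to prove the statement in two parts that parallel the adaptive analysis. The $O(\hardU\log(\hardU/\delta))$ upper bound (together with correctness) is a near-verbatim adaptation of the proof of Theorem~\ref{thm:samplecomplexity}, with the adaptive selection lemma (Lemma~\ref{lem:radius}) replaced by its uniform counterpart (Lemma~\ref{lem:uniformradius}). The claim that $\hardU$ is necessary is a matching lower bound tailored to the uniform rule, and is the genuinely new ingredient.

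For correctness and the upper bound I would first condition on event $\xi$, which by Lemma~\ref{lem:xi probability} holds with probability at least $1-\delta$. Under $\xi$ the correctness argument is identical to Theorem~\ref{thm:samplecomplexity}: when $C_t=\emptyset$ we have $\Realtheta\in\hat\Theta_{t-1}$ and all $\g_i$ agree over $\hat\Theta_{t-1}$, so the output equals $\g(\Realtheta)=\vec y^*$. For the sample count, set $\Lambda_{\min}=\min_{i\in[m]}\Lambda_i$ and let $T_i$ be the number of pulls of arm $i$ and $t_i$ its last pull. Lemma~\ref{lem:uniformradius} forces any arm pulled after initialization to satisfy $\rad_{i,t_i-1}\ge\Lambda_{\min}/2$; together with $\rad_{i,t_i-1}=\sqrt{\tfrac{1}{2(T_i-1)}\ln\tfrac{4(t_i-1)^3}{\tau\delta}}\le\sqrt{\tfrac{1}{2(T_i-1)}\ln\tfrac{4T^3}{\tau\delta}}$ this yields $T_i\le 1+\tfrac{2}{\Lambda_{\min}^2}\ln\tfrac{4T^3}{\tau\delta}$, and the bound holds trivially when $t_i\le\tau m$. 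Summing over the $m$ arms gives $T\le m+\tfrac{2m}{\Lambda_{\min}^2}\ln\tfrac{4T^3}{\tau\delta}=m+2\hardU\ln\tfrac{4T^3}{\tau\delta}$, which is exactly the inequality from the adaptive proof with $\hardL$ replaced by $\hardU$. Applying Fact~\ref{fact:log} with the same substitution $x=T/(6\hardU)$ then reproduces the stated bound $2m+12\hardU\ln 24\hardU+4\hardU\ln\tfrac{4}{\tau\delta}$.

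For tightness I would exhibit an instance on which the uniform rule provably spends $\Omega(\hardU)$ samples. The idea is to isolate a single \emph{bottleneck} arm carrying the minimum radius $\Lambda_{\min}$ while making the remaining $m-1$ arms irrelevant to the decision, so that their leading optimal components $\g_j(\vec\theta)$ are constant over $\vec\theta$ (hence $1/\Lambda_j^2=0$) and $\hardU=m/\Lambda_{\min}^2$. On such an instance the stopping condition can only fire once the bottleneck arm becomes consistent, i.e.\ once its confidence interval separates from the decision threshold, which forces $\rad$ of that arm down to $\Theta(\Lambda_{\min})$ and thus $\Omega(\Lambda_{\min}^{-2})$ pulls (up to the logarithmic factor inside the radius). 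Because the uniform rule pulls every arm equally often, all $m$ arms are then pulled $\Omega(\Lambda_{\min}^{-2})$ times, giving $\Omega(m\,\Lambda_{\min}^{-2})=\Omega(\hardU)$ total samples.

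The hard part will be the lower bound on the stopping time. To show the bottleneck arm cannot be resolved while its radius is still appreciably larger than $\Lambda_{\min}$ requires a probabilistic (anti-concentration) argument: one must argue that, with at least constant probability, the empirical estimate $\hat\theta$ of the bottleneck arm does not drift far enough in the favorable direction to clear the threshold prematurely, so its interval keeps straddling the threshold until $\rad\lesssim\Lambda_{\min}$. Since the confidence radius exceeds the standard deviation of $\hat\theta$ by a logarithmic factor, this straddling event is indeed typical; making the constants explicit, and arranging the auxiliary arms so that they contribute nothing to $\min_i\Lambda_i$ yet leave the instance well-defined, is the delicate part of the construction.
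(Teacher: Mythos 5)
The correctness-plus-upper-bound half of your proposal is exactly the paper's argument: condition on $\xi$ via Lemma~\ref{lem:xi probability}, substitute Lemma~\ref{lem:uniformradius} for Lemma~\ref{lem:radius} in the proof of Theorem~\ref{thm:samplecomplexity}, sum $T_i \le 1+\frac{2}{\Lambda_{\min}^2}\ln\frac{4T^3}{\tau\delta}$ over arms, and apply Fact~\ref{fact:log} with $\hardL$ replaced by $\hardU$. Nothing to change there.

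The gap is in the second half, the necessity of $\hardU$. You propose to analyze the stopping rule of COCI-with-uniform-sampling directly, arguing by anti-concentration that the bottleneck arm's interval keeps straddling the decision threshold until $\rad \lesssim \Lambda_{\min}$, and you defer the quantitative step as ``the delicate part.'' That step is not merely delicate; the natural execution fails. Premature stopping at a round $t$ with $\rad_t \ge 2\Lambda_{\min}$ only requires the bottleneck estimate to deviate from $\realtheta_i$ (away from the threshold) by $\rad_t-\Lambda_{\min}\ge \rad_t/2$, i.e., by a \emph{constant fraction} of the confidence radius; by Hoeffding this has probability about $\left(\tau\delta/4t^3\right)^{1/4}$, so the logarithmic slack in $\rad$ buys only a polynomial decay $t^{-3/4}$. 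A union bound over the $\Theta(\hardU)$ rounds before the radius reaches $\Lambda_{\min}$ then gives roughly $(\delta\,\hardU)^{1/4}$, which is not $o(1)$ in the interesting regime where $\Lambda_{\min}$ is small and $\delta$ is a fixed constant (restricting to rounds where the arm is actually pulled does not rescue this); you would need a genuine first-passage argument exploiting correlations across rounds, which you have not supplied. The paper sidesteps all of this with an information-theoretic reduction: take best-arm identification with arm $1$ of mean $1/2+\varepsilon$, arm $2$ of mean either $1/2$ or $1/2+2\varepsilon$, and the remaining arms of mean $\varepsilon$. Any algorithm correct with probability at least $0.51$ must distinguish the two hypotheses on arm $2$, which by the classical sequential-testing bound (Lemma 5.3 of \cite{zhou2014optimal}) costs $\Omega(1/\varepsilon^2)$ pulls of arm $2$; uniformity forces every arm to be pulled that often, hence $\Omega(m/\varepsilon^2)$ in total, and on this instance $\Lambda_1=\Lambda_2=\varepsilon/2$ while $\Lambda_j=\Theta(1)$ for $j\ge 3$, so $\hardU=\Theta(m/\varepsilon^2)$ whereas $\hardL = O(m+1/\varepsilon^2)$. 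This route is both simpler (no analysis of the stopping condition at all) and stronger: it shows $\Omega(\hardU)$ is unavoidable for \emph{every} uniform-sampling algorithm, not just for COCI's particular stopping rule, which is what ``necessary'' should mean here. I recommend replacing your last two paragraphs with this reduction.
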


\begin{proof}
    One can easily verify the upper bound of sample complexity by Lemma~\ref{lem:uniformradius} and applying the similar analysis as the proof of Theorem~\ref{thm:samplecomplexity}.
    We only prove that the factor $\hardU$ is necessary in the sample complexity bound for the uniform sampling.

There is a well known fact that to distinguish whether a Bernoulli random variable has mean $1/2$ or
$1/2 +\varepsilon$  requires at least $\Omega(\frac{1}{\varepsilon^2})$ samples
\cite{chernoff1972sequential,anthony2009neural}.
Precisely,
    fix $\varepsilon\in(0,0.02)$ and let $X$ be a Bernoulli random variable with mean being either $1/2$ or $1/2+2\varepsilon$. If an algorithm $\mathcal{A}$ can output the correct mean of $X$ with probability at least 0.51, then the expected number of samples performed by $\mathcal{A}$ is at least $\Omega(\frac{1}{\varepsilon^2})$ (Lemma 5.3 in \cite{zhou2014optimal}).

Thus, we can construct a set of $m$ Bernoulli arms, where the first arm has a known
mean $1/2 + \varepsilon$, and
the second arm has mean either $1/2$ or $1/2 +2 \varepsilon$, and the rest arms have mean
$\varepsilon$.
Then for any pure exploration bandit algorithm to identify the best arm, it at least needs to distinguish
whether arm $1$ or arm $2$ is better, and thus it must take 
$\Omega(\frac{1}{\varepsilon^2})$ samples of
arm $2$.
Since the sampling is uniform, all the arms will be played the same number of times even if the rest arms perform pretty bad, 
thus with probability at least 0.51, the expected number of samples is at least $\Omega(\frac{m}{\varepsilon^2})$.
Notice that in this example, $\Lambda_1 = \Lambda_2 = \varepsilon/2$, and 
$\Lambda_3 = \cdots = \Lambda_m = 1/4$ or $1/4 + \varepsilon/2$, therefore, the expected sample complexity
is indeed $\Omega(\hardU) = \Omega(\frac{m}{\min \Lambda_i^2})$, and it is significantly larger than
$\hardL = \sum_{i\in [m]} \frac{1}{\Lambda_i^2} = O(\frac{1}{\varepsilon^2}+m)$, which is the 
key factor bounding the sample complexity of the adaptive COCI algorithm.
\end{proof}

\section{Proofs for Section~\ref{sec:application}: Applications}\label{app:application}

\subsection{Proofs for Section~\ref{sec:water}: Water Resource Planning}
\waterbi*
\begin{proof}
    The gradient of the reward function is
    \begin{align*}
    \nabla r(\vec\theta, \vec y)=\left(\theta_1-\frac{\d f_1}{\d y_1}(y_1), \dots, \theta_m-\frac{\d f_m}{\d y_m}(y_m)\right).
    \end{align*} 
    For each $\vec\theta$, since the constraint $\sum_{i=1}^m y_i \ge b$ is tight, the gradient at the optimal point $\nabla r(\vec\theta, \phi(\vec\theta))$ should be parallel to the normal vector of the plane $\sum_{i=1}^m y_i =b$, one of which is $(1,1, \dots, 1)$.
    Thus there exists some $\lambda$ such that $$\theta_i-\frac{\d f_i}{\d y_i}(\phi_i(\vec\theta))=\lambda, \forall i\in[m].$$
    
    When some $\theta_i$ varies  $\delta\theta_i$, namely $\vec\theta'=(\theta_1, \dots, \theta_i+\delta\theta_i,\dots, \theta_m)$, there exists some $\lambda'$ such that
    \begin{align*}
    \theta_i+\delta\theta_i-\frac{\d f_i}{\d y_i}(\phi_i(\vec\theta'))&=\lambda',\\
    \theta_j-\frac{\d f_j}{\d y_j}(\phi_j(\vec\theta'))&=\lambda', \forall j\ne i.
    \end{align*}
    Thus
    \begin{align*}
    \frac{\d f_j}{\d y_j}(\phi_j(\vec\theta'))-\frac{\d f_j}{\d y_j}(\phi_j(\vec\theta))=\lambda-\lambda', \forall j\ne i.
    \end{align*}
    
    Without loss of generality, we assume that \{$\frac{\d f_i}{\d y_i}$\}'s are all monotonically decreasing.
    
    If $\lambda'\le \lambda$, then for each $j\ne i$, $\frac{\d f_j}{\d y_j}(\g_j(\vec\theta')) \ge\frac{\d f_j}{\d y_j}(\g_j(\vec\theta))$, which indicates that $\g_j(\vec\theta')\le \g_j(\vec\theta)$.
    Since $\sum_{i=1}^m y_i \geq b$ is tight, we have $\g_i(\vec\theta')\ge \g_i(\vec\theta)$.
    Similarly, if $\lambda'\ge \lambda$, we have $\g_j(\vec\theta')\ge \g_j(\vec\theta)$ for every $j\ne i$ and $\g_i(\vec\theta')\le \g_i(\vec\theta)$.
    This indicates that $\g_i(\vec\theta)$ and $\g_j(\vec\theta)$ ($\forall j \ne i$) alway vary oppositely.

    The rest part we need to show is that $\g_i(\vec\theta)$ is strictly monotone.
    If it isn't, then there exists $\delta\theta_i'\ne 0$ and let $\vec\theta''=(\theta_1, \dots, \theta_i+\delta\theta_i',\dots, \theta_m)$, such that $\g_i(\vec\theta)=\g_i(\vec\theta'')$.
    Thus there exists some $\lambda''$ such that
    \begin{align*}
    \theta_i+\delta\theta_i'-\frac{\d f_i}{\d y_i}(\phi_i(\vec\theta''))&=\lambda'',\\
    \theta_j-\frac{\d f_j}{\d y_j}(\phi_j(\vec\theta''))&=\lambda'', \forall j\ne i.
    \end{align*}
    If $\delta\theta_i'>0$, since $\g_i(\vec\theta)=\g_i(\vec\theta'')$, then $\lambda''>\lambda$.
    Thus for all $j\ne i$, $\frac{\d f_j}{\d y_j}(\phi_j(\vec\theta''))<\frac{\d f_j}{\d y_j}(\phi_j(\vec\theta))$ which indicates that $\g_j(\vec\theta'')>\phi_j(\vec\theta)$.
    Then the constraint $\sum_{i=1}^m y_i \geq b$ is not tight any longer.
    If $\delta\theta_i'<0$, then $\lambda''<\lambda$.
    Thus for all $j\ne i$, $\frac{\d f_j}{\d y_j}(\phi_j(\vec\theta''))>\frac{\d f_j}{\d y_j}(\phi_j(\vec\theta))$ which indicates that $\g_j(\vec\theta'')<\phi_j(\vec\theta)$.
    Then the constraint $\sum_{i=1}^m y_i \geq b$ is violated.
    Therefore $\delta\theta_i'=0$ and $\g_i(\vec\theta)$ is monotone.
\end{proof}

\subsection{Proofs for Section~\ref{sec:appcpess}: Partitioned Opinion Sampling}\label{app:osa}
In this section, we first give the definition of the offline problem of partitioned opinion sampling (Definition~\ref{def:osa}), 
    and then propose a greedy algorithm (Algorithm~\ref{alg:greedy}) to solve it.
We analyze the greedy algorithm and show that it outputs the leading optimal solution in Theorem~\ref{thm:OSA}.
Finally, we show that the leading optimal solution satisfies the bi-monotonicity in Lemma~\ref{lem:bimonotonicity}.

\begin{definition}[Optimal Sample Allocation (OSA)]\label{def:osa}
    Given (a) parameter $\vec{\theta}=(\theta_1, \theta_2, \dots, \theta_m) \in [0,1]^m$ where $\theta_i=\Var\left[X_i\right]$, 
    (b) $n_1, n_2, \ldots, n_m$ where $n_i = |V_i|$, and
    (c) a positive integer $k\in \mathbb{Z}_+$ as the sample size budget,
    the {\em optimal sample allocation (OSA)} problem is to find
    an optimal allocation 	$\vec y^{g} = (y_1^g, y_2^g, \dots, y_m^g)\in \mathbb{Z}_+^m$ such that 
    $\sum_{i=1}^m y_i^g \leq k$ and the total variance
    $\Var \left[\hat f^{(t)}\right] = \sum_{i=1}^{m} \frac{n_i^2 \theta_i}{n^2 y_i^g} $ is minimized,
    i.e., $\vec{y}^g \in \argmin_{\vec{y}\in \mathbb{Z}_+^m \land \norm{\vec{y}}_1 \le k} \sum_{i=1}^{m} \frac{n_i^2 \theta_i}{n^2 y_i} $.
\end{definition}

In the OSA problem, the factor of $1/n^2$ is a constant and immaterial for the optimization task, 
and henceforth we remove this factor from our discussion.
Let $h(\vec{\theta}; \vec{y}) = \sum_{i=1}^{m} \frac{n_i^2 \theta_i}{y_i} $ be the objective 
function. 
By the Cauchy-Schwartz inequality, we know that the optimal real-valued solution for minimizing 
$h(\vec{\theta}; \vec{y})$ is $y_i = \frac{n_i \sqrt{\theta_i}}{\sum_{j=1}^m n_j \sqrt{\theta_j}} \cdot k=Z n_i \sqrt{\theta_i}\cdot k$,
where $Z$ denotes the normalization factor for convenience.
Prior studies (e.g., \cite{CarpentierM11,etore2010adaptive}), either 
stop at this point or only consider simple rounding to the closest
integer solution.
However, we want to find the exact optimal {\em integral} solution as required by the OSA problem.
It is not trivial to transfer from a real-valued solution to an integral solution, because (a) even only
considering simple rounding, there are an exponential number of options for rounding up or down
each real value	to maintain the sample budget constraint, and (b) only rounding to
the closest integer may be quite away from the optimal solution.

\begin{algorithm}[hbt]
    \caption{\alg{GreedyOSA}: Greedy algorithm for the offline OSA problem} \label{alg:greedy}
    \KwIn{$m$, $\vec{\theta} \in [0,1]^m$, $n_1, n_2, \ldots, n_m$,
        integer $k\geq m$}
    \KwOut{$\vec y^g \in \mathbb{Z}_+^m$}
    
    $Z \leftarrow 1 / \sum_{j=1}^m n_j \sqrt{\theta_j}$\;
    \For{$i = 1, 2, \dots, m $}{ \label{line:deltab}
        \eIf{$\left\lceil Z  n_i \sqrt{\theta_i} \cdot k \right\rceil 
            \left (\left\lceil Z  n_i \sqrt{\theta_i} \cdot k \right\rceil - 1\right)\ge (Z  n_i \sqrt{\theta_i} \cdot k)^2$}{
            $\delta_i \leftarrow 0$\;
        }{
            $\delta_i \leftarrow \left\lceil Z  n_i \sqrt{\theta_i} \cdot k \right\rceil - Z  n_i \sqrt{\theta_i} \cdot k $\; 
            \label{line:deltae}
        }
    }
    \For{$i = 1, 2, \dots, m $}{ 
        $y^{(0)}_i \leftarrow \max(1, \left\lfloor Z n_i \sqrt{\theta_i} \cdot k - \sum_{j\ne i} \delta_j \right\rfloor)$\; \label{line:initialization} 
    }
    \eIf{ $\sum_{i=1}^m y_i^{(0)} = k$}{
        $\vec{y}^g \leftarrow \vec y^{(0)}$\;
    }{
        \For{$t = 1, 2, \dots$ \label{line:greedyb}}{
            $C^{(t)} \leftarrow
            \argmax_{i\in [m]} \left( \frac{n_i^2 \theta_i}{y^{(t-1)}_i} - \frac{n_i^2 \theta_i}{ (y^{(t-1)}_i+1)}\right) $\;  \label{line:max} 
            \eIf{ $|C^{(t)}| \geq k- \sum_{i=1}^m y_i^{(t-1)} $}{
                $C_{s} \leftarrow $ the largest $k-\sum_{i=1}^m y_i^{(t-1)}$ elements of $C^{(t)}$\;\label{line:tiebreaking}
                $\vec{y}^g \leftarrow \vec y^{(t-1)}+\sum_{j\in C_s}\vec e_j$ 
                \tcp*{$\vec e_j$ is the unit vector}
                {\bf break}\;
                
            }{
                $\vec y^{(t)} \leftarrow \vec y^{(t-1)} + \sum_{j\in C^{(t)}}\vec e_j$\;\label{line:greedye} 
        }}        
    }
    {\bf return} $\vec y^g$\;
\end{algorithm}

We use the greedy approach shown in Algorithm~\ref{alg:greedy} to find the optimal integral solution. 
We start from a base vector $\vec{y}^{(0)}$ (line~\ref{line:initialization}) with ${y}^{(0)}_i\ge 1$ and $\sum_{i=1}^m {y}^{(0)}_i \le k$.
In every step, we only increment $y_i$ by one for the dimension $i$ that 
leads to the most decrease in the objective function  (lines~\ref{line:greedyb}--\ref{line:greedye}).
We find out that setting the base vector $\vec{y}^{(0)}$ as the floors of the real-valued solution is not safe, and
we may miss the optimal integral solution.
We could set the base vector to all ones to be safe, but that may cause $k-m$ greedy steps.
Since the input $k$
only needs $\log k$ bits, this leads to a running time exponential to the input size.
By a careful analysis, we find that ${y}^{(0)}_i=\max(1, \left\lfloor Z n_i \sqrt{\theta_i} \cdot k - \sum_{j\ne i} \delta_j \right\rfloor )$ is a tight and safe base (computed in lines~\ref{line:deltab}--\ref{line:deltae}) where
$\delta_i < 1$ is the slack that the $i$-th dimension can contribute in pushing down the base 
of other dimensions, and the downward move $\sum_{j\ne i} \delta_j$ could be $\Theta(m)$ (Lemma~\ref{lem:tight}). 
This results in at most $O(\min(m^2, k))$ greedy steps, and each step takes $O(\log m)$ time when using
a priority queue.
Thus, we have a polynomial time algorithm with running time $O(\min(m^2,k) \log m)$.
Finally, if there are multiple choices in the final greedy step, we choose the one containing the largest
indices (line~\ref{line:tiebreaking}).
Before showing that \alg{GreedyOSA} outputs the leading optimal solution (Theorem~\ref{thm:OSA}), we first introduce a simple lemma as follows.

\begin{lemma} \label{lem:floorminus1}
    There does not exist
    an optimal solution $\tilde{\vec{y}}$ such that there exist $i,j\in [m]$, 
    $\tilde{y}_i \le \left\lfloor Z  n_i \sqrt{\theta_i} \cdot k \right\rfloor - 1$, and
    either (a) $\tilde{y}_j \ge \left\lceil Z  n_j \sqrt{\theta_j} \cdot k \right\rceil + 1$, or
    (b)  $\tilde{y}_j = \left\lceil Z  n_j \sqrt{\theta_j} \cdot k \right\rceil$ and
    $\left\lceil Z  n_j \sqrt{\theta_j} \cdot k \right\rceil 
    \left (\left\lceil Z  n_j \sqrt{\theta_j} \cdot k \right\rceil - 1\right)\ge
    (Z  n_j \sqrt{\theta_j} \cdot k)^2$.
\end{lemma}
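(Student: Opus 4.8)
The plan is to prove the lemma by a one-step exchange (swap) argument: assuming such an optimal $\tilde{\vec y}$ together with indices $i,j$ exists, I will move one unit of budget from coordinate $j$ to coordinate $i$ and show this strictly decreases the objective $h(\vec\theta;\vec y)=\sum_{l=1}^m n_l^2\theta_l/y_l$, contradicting the optimality of $\tilde{\vec y}$. Throughout I abbreviate the continuous optimizer coordinate by $a_l := Z n_l\sqrt{\theta_l}\cdot k$ and the numerator by $c_l := n_l^2\theta_l$; the key algebraic fact I will exploit is $c_l = a_l^2/(Zk)^2$, so that ratios of the $c_l$ are exactly ratios of the $a_l^2$ and the common factor $(Zk)^{-2}$ cancels.

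First I would set $\vec y' := \tilde{\vec y} + \vec e_i - \vec e_j$. For feasibility, the budget $\sum_l y'_l=\sum_l\tilde y_l\le k$ is preserved, and I must check $\tilde y_j\ge 2$ so that $y'_j=\tilde y_j-1\ge 1$: in case (a) this holds because $\tilde y_j\ge\lceil a_j\rceil+1\ge 2$, and in case (b) it holds because $\lceil a_j\rceil(\lceil a_j\rceil-1)\ge a_j^2>0$ forces $\lceil a_j\rceil\ge 2$. A direct computation then gives
\begin{align*}
h(\vec\theta;\vec y')-h(\vec\theta;\tilde{\vec y})
= \frac{c_j}{\tilde y_j-1}-\frac{c_j}{\tilde y_j}
-\Big(\frac{c_i}{\tilde y_i}-\frac{c_i}{\tilde y_i+1}\Big)
= \frac{c_j}{\tilde y_j(\tilde y_j-1)}-\frac{c_i}{\tilde y_i(\tilde y_i+1)}.
\end{align*}
Hence, after substituting $c_l=a_l^2/(Zk)^2$, it suffices to establish the strict inequality $\dfrac{a_i^2}{\tilde y_i(\tilde y_i+1)}>\dfrac{a_j^2}{\tilde y_j(\tilde y_j-1)}$.

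The core of the argument is to bound the two sides separately around the value $1$. For the left side, the hypothesis $\tilde y_i\le\lfloor a_i\rfloor-1$ yields $\tilde y_i+1\le\lfloor a_i\rfloor\le a_i$, and since $\tilde y_i<\tilde y_i+1\le a_i$ we get $\tilde y_i(\tilde y_i+1)<a_i^2$, i.e.\ the left side is strictly greater than $1$. For the right side I split on the two cases: in case (a), $\tilde y_j\ge a_j+1$ together with $\tilde y_j-1\ge\lceil a_j\rceil\ge a_j$ give $\tilde y_j(\tilde y_j-1)>a_j^2$, so the right side is strictly below $1$; in case (b), the hypothesis reads exactly $\tilde y_j(\tilde y_j-1)=\lceil a_j\rceil(\lceil a_j\rceil-1)\ge a_j^2$, so the right side is at most $1$. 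In either case the left side exceeds the right, so $h(\vec\theta;\vec y')<h(\vec\theta;\tilde{\vec y})$, contradicting optimality.

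The main obstacle is the bookkeeping at the integrality boundary: I must make sure the left-side inequality is strict even when $a_i$ happens to be an integer (it is, because then $\tilde y_i\le a_i-1$ forces $\tilde y_i(\tilde y_i+1)\le(a_i-1)a_i<a_i^2$), and case (b) is precisely the borderline situation $\tilde y_j=\lceil a_j\rceil$ where the extra algebraic condition on $\lceil a_j\rceil$ is exactly what is needed to push the right side down to at most $1$ while the strict margin on the left side still secures a net decrease. This is why the lemma separates cases (a) and (b) rather than stating a single uniform bound.
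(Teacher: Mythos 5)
Your proposal is correct and follows essentially the same route as the paper's proof: a one-unit exchange $\tilde{\vec y}\mapsto\tilde{\vec y}+\vec e_i-\vec e_j$, the observation that $n_l^2\theta_l/(Z n_l\sqrt{\theta_l}\cdot k)^2$ is the same constant for all $l$, and the two bounds $\tilde y_i(\tilde y_i+1)<(Z n_i\sqrt{\theta_i}\cdot k)^2$ and $\tilde y_j(\tilde y_j-1)\ge (Z n_j\sqrt{\theta_j}\cdot k)^2$ yielding a strict improvement that contradicts optimality. If anything, you are slightly more careful than the paper, which never checks that the swapped vector is feasible (i.e., $\tilde y_j\ge 2$); both arguments share the same implicit non-degeneracy assumption $\theta_j>0$.
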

\begin{proof}
    Suppose, for a contradiction, that such $\tilde{\vec{y}}$ exists.
    Since $\tilde{\vec y}$ is an optimal solution, $\vec y+ \vec e_i-\vec e_j$ should be no better than 
    $\tilde{\vec y}$, that is
    \begin{align*}
    \frac{n_i^2\theta_i}{ \tilde{y}_i}+\frac{n_j^2\theta_j}{ \tilde{y}_j} \leq \frac{n_i^2\theta_i}{ (\tilde{y}_i+1)}+\frac{n_j^2\theta_j}{ (\tilde{y}_j-1)}.
    \end{align*}
    Thus
    \begin{align}\label{eq3}
    \frac{n_i^2\theta_i}{ \tilde{y}_i(\tilde{y}_i+1)}\leq\frac{n_j^2\theta_j}{ \tilde{y}_j(\tilde{y}_j-1)}.
    \end{align}
    From the condition on  $\tilde{y}_j$ given in the lemma, we know that: either 
    (a) if $\tilde{y}_j \ge \left\lceil Z  n_j \sqrt{\theta_j} \cdot k \right\rceil + 1$, then
    $(Z  n_j \sqrt{\theta_j} \cdot k)^2 \le \left(\lceil Z  n_j \sqrt{\theta_j} \cdot k \rceil +1 \right)
    \lceil Z  n_j \sqrt{\theta_j} \cdot k \rceil \le \tilde{y}_j (\tilde{y}_j-1)$; or
    (b) $(Z  n_j \sqrt{\theta_j} \cdot k)^2 \le \left\lceil Z  n_j \sqrt{\theta_j} \cdot k \right\rceil 
    \left (\left\lceil Z  n_j \sqrt{\theta_j} \cdot k \right\rceil - 1\right) = \tilde{y}_j (\tilde{y}_j-1)$.
    That is, we always have  $(Z  n_j \sqrt{\theta_j} \cdot k)^2 \le \tilde{y}_j (\tilde{y}_j-1)$.
    Then
    \begin{align*}
    \frac{n_i^2\theta_i}{ \tilde{y}_i(\tilde{y}_i+1)}
    \geq   \frac{n_i^2\theta_i}{ \left( \lfloor Z  n_i \sqrt{\theta_i} \cdot k \rfloor -1\right)
        \lfloor Z  n_i \sqrt{\theta_i} \cdot k \rfloor}
    >\frac{n_i^2\theta_i}{ \left( Z  n_i \sqrt{\theta_i} \cdot k\right)^2}
    =\frac{n_j^2\theta_j}{ \left( Z  n_j \sqrt{\theta_j} \cdot k \right)^2}
    \ge \frac{n_j^2\theta_j}{ \tilde{y}_j (\tilde{y}_j-1)},
    \end{align*}
    which contradicts to Inequality~(\ref{eq3}).
    Therefore, the lemma holds.
\end{proof}

\begin{restatable}{theorem}{OSA} \label{thm:OSA}
    Algorithm~\ref{alg:greedy} solves the OSA problem for any input $\vec{\theta}\in [0,1]^m$, 
    i.e., 
    $\vec{y}^g \in \argmin_{\vec{y}\in \mathbb{Z}_+^m \land \norm{\vec{y}}_1 \le k} \sum_{i=1}^{m} \frac{n_i^2 \theta_i}{ y_i}$. 
    Moreover, $\vec{y}^g$ is the leading optimal solution.
    The running time of the algorithm is $O(\min(m^2, k) \log m)$.
\end{restatable}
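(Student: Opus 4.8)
The plan is to establish three things in turn: that \alg{GreedyOSA} returns an optimal integral allocation, that its deterministic tie-breaking makes this the \emph{leading} optimal solution, and that it runs in $O(\min(m^2,k)\log m)$ time. Throughout I write $r_i = Z n_i \sqrt{\theta_i}\, k$ for the budget-feasible continuous optimum, so $\sum_i r_i = k$, and I record the two structural facts that drive everything: the objective $h(\vec\theta;\vec y)=\sum_i n_i^2\theta_i/y_i$ is separable with each term positive, strictly decreasing, and discretely convex, since the marginal decrease of the $(y{+}1)$-th unit, $n_i^2\theta_i/(y(y{+}1))$, is positive and strictly decreasing in $y$; and the common ``threshold'' marginal value at the continuous optimum equals $n_i^2\theta_i/r_i^2 = 1/(Z k)^2$ for every $i$, which is exactly the quantity the ceiling test in lines~\ref{line:deltab}--\ref{line:deltae} compares against. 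As preliminaries I would assume $\theta_i>0$ for all $i$ (a zero-variance arm contributes a vanishing term, is optimally fixed to $y_i=1$, and may be removed while decrementing $k$), and note that any optimal $\vec y^*$ exhausts the budget, $\sum_i y_i^* = k$, since otherwise a spare unit added to any coordinate strictly decreases $h$.

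The crux---and the step I expect to be the main obstacle---is to show that the base vector $\vec y^{(0)}$ never overshoots an optimum, i.e.\ $y_i^{(0)} \le y_i^*$ for every $i$ and every optimal $\vec y^*$. I would argue by contradiction: if $y_i^* < y_i^{(0)}$, then since $y_i^* \ge 1$ the $\max$ in line~\ref{line:initialization} must be attained by its floor term, so $y_i^{(0)} = \lfloor r_i - \sum_{j\ne i}\delta_j\rfloor \le \lfloor r_i\rfloor$ (as $\delta_j\ge 0$), whence $y_i^* \le \lfloor r_i\rfloor - 1$. This is precisely the hypothesis of Lemma~\ref{lem:floorminus1}, which then forbids any coordinate $j$ from exceeding $\lceil r_j\rceil$ and from reaching $\lceil r_j\rceil$ when its ceiling test holds; unpacking the two cases of the definition of $\delta_j$, this is exactly the uniform bound $y_j^* \le r_j + \delta_j$ for all $j$. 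Combining with budget tightness $\sum_j(y_j^*-r_j)=0$ yields $y_i^* = r_i - \sum_{j\ne i}(y_j^*-r_j) \ge r_i - \sum_{j\ne i}\delta_j$, contradicting $y_i^* < \lfloor r_i - \sum_{j\ne i}\delta_j\rfloor \le r_i - \sum_{j\ne i}\delta_j$. Hence the base is safe.

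With a safe, feasible base in hand, I would invoke the classical optimality of the marginal-allocation rule for separable convex resource allocation: starting from any coordinatewise lower bound of an optimum and repeatedly spending a unit on a coordinate of currently-maximal marginal decrease produces an optimum. A short exchange argument suffices---if the greedy output $\vec y^g$ and an optimum $\vec y^*\ge\vec y^{(0)}$ differ, pick $a$ with $y_a^g>y_a^*$ and $b$ with $y_b^g<y_b^*$; the unit greedy placed on $a$ at level $\ge y_a^*$ was preferred over $b$'s unit at a level $<y_b^*$, so by convexity shifting it from $a$ to $b$ cannot increase $h$, and iterating turns $\vec y^*$ into $\vec y^g$ with no loss of optimality. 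Batching all currently-tied maximizers into one set $C^{(t)}$ is consistent with this rule because those coordinates strictly dominate every other coordinate's next unit. For the leading-optimal claim I would simply observe that \alg{GreedyOSA} is fully deterministic---the base, the marginal values, and the final-step tie-break to the largest indices (line~\ref{line:tiebreaking}) are all determined by $\vec\theta$---so it returns one fixed optimum on each input and therefore qualifies as the tie-breaking oracle $\g$ whose output \emph{defines} the leading optimal solution.

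Finally, for the running time I would bound the number of unit increments after the base by $k - \sum_i y_i^{(0)}$. Using $y_i^{(0)} \ge \lfloor r_i - \sum_{j\ne i}\delta_j\rfloor \ge r_i - \sum_{j\ne i}\delta_j - 1$ together with $\sum_i r_i = k$ gives $k - \sum_i y_i^{(0)} \le (m-1)\sum_j\delta_j + m < m^2$, since each $\delta_j < 1$, and it is trivially at most $k-m \le k$; hence $O(\min(m^2,k))$ increments. Maintaining the marginal decreases in a max-heap makes each increment (extract-max, update, reinsert) cost $O(\log m)$, while building the heap and the base costs $O(m)$, so the total is $O(\min(m^2,k)\log m)$ as claimed. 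The genuinely delicate point remains the safe-base argument of the second paragraph: the precise form of the slack $\delta_j$ and the floor in $y_i^{(0)}$ are engineered so that the undershoot hypothesis triggers exactly Lemma~\ref{lem:floorminus1}, and aligning the rounding directions and the ceiling test with the two conclusions $y_j^*\le\lfloor r_j\rfloor$ and $y_j^*\le\lceil r_j\rceil$ is where the real work lies.
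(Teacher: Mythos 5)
Your proposal is correct, and on the step you rightly identify as the crux---that the base vector $\vec y^{(0)}$ never overshoots an optimum---it is essentially the paper's own argument: both reduce the undershoot hypothesis to $y_i^* \le \lfloor Z n_i\sqrt{\theta_i}\,k\rfloor - 1$, invoke Lemma~\ref{lem:floorminus1}, and close with a budget count; your version (deriving the uniform bound $y_j^* \le Z n_j\sqrt{\theta_j}\,k + \delta_j$ for $j\ne i$ and appealing to budget tightness) is a streamlined reorganization of the paper's, which instead shows $\sum_j \tilde y_j \le k-1$ so that the budget is under-utilized, contradicting optimality. Where you genuinely diverge is the greedy-optimality step: the paper rewrites the residual problem as choosing the $k-\sum_i y_i^{(0)}$ largest elements of the family $\{M_{i,j}\}$ of marginal decreases (strictly decreasing in $j$ for each $i$), which makes greedy's correctness immediate and, importantly, yields for free the characterization that the largest-index tie-break in line~\ref{line:tiebreaking} outputs the lexicographically first optimal solution; you instead run a classical exchange argument transforming an arbitrary optimum into the greedy output without increasing the objective. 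Both are valid, and your exchange argument correctly uses the safe base (greedy must have \emph{placed} the unit on the over-allocated coordinate, which fails if the base could overshoot). The trade-off is in the ``leading optimal solution'' claim: you only establish that \alg{GreedyOSA} is deterministic and optimal, which does suffice for the theorem as stated (the leading optimal solution is by definition the fixed output of the deterministic oracle), whereas the paper's reformulation pins down \emph{which} optimum is returned, structure that is reused in the bi-monotonicity proof (Lemma~\ref{lem:bimonotonicity}). Your runtime analysis coincides with the paper's. One caveat you share with the paper: coordinates with $\theta_i=0$ (where marginals vanish, budget exhaustion can fail, and strict monotonicity of $M_{i,j}$ in $j$ breaks) are glossed over; your preprocessing remark is a reasonable patch, but note that removing such arms and decrementing $k$ changes the quantities $Z$ and $\delta_j$ that the algorithm actually computes, so aligning the reduced instance with the algorithm's real execution would need an extra sentence.
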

\begin{proof}
    The correctness of the algorithm is derived from the following two claims: 
    (a) for any $\tilde{\vec{y}} \in \argmin_{\vec{y}\in \mathbb{Z}_+^m, \norm{\vec{y}}_1 \le k} \sum_{i=1}^{m} \frac{n_i^2 \theta_i}{ y_i}$, 
    for any $i\in [m]$, $\tilde{y}_i \ge 
    \left\lfloor Z n_i \sqrt{\theta_i}\cdot k -\sum_{j\ne i} \delta_j \right\rfloor$, and
    (b) if the greedy algorithm starts from any $\vec y^{(0)} \in \mathbb{Z}_+^m$ with
    $\sum_{i=1}^m y^{(0)}_i \leq k$, then it will output the leading optimal
    solution for the following optimization problem:
    \begin{align}
    \min_{\vec y \in \mathbb{Z}_+^m}\quad   & \sum_{i=1}^{m} \frac{n_i^2\theta_i}{y_i }, \label{proof: lp}\\
    \mathrm{subject\ to}\quad  & \sum_{i=1}^m y_i \leq k,
    y_i \geq y_i^{(0)}, \forall i\in[m]. \nonumber
    \end{align}
    
    We now prove Claim (a). 
    Suppose, for a contradiction, that there exists an
    optimal solution $\tilde{\vec y}= (\tilde{y}_1, \tilde{y}_2, \dots, \tilde{y}_m)$, 
    and some $i\in [m]$ such that $\tilde{y}_i \leq \left
    \lfloor Z n_i \sqrt{\theta_i}\cdot k -\sum_{j\ne i} \delta_j  \right\rfloor - 1$.
    Then there must exist some $j\neq i$ such that $\tilde{y}_j \geq 
    \left\lceil Z  n_j \sqrt{\theta_j} \cdot k \right\rceil$.
    Otherwise, $\sum_{j=1}^m \tilde{y}_j < \sum_{j=1}^m Z  n_j \sqrt{\theta_j} \cdot k -1 = k - 1$.
    This means the budget is not fully utilized.
    Then
    $\forall l\in[m]$, $\tilde{\vec y} +\vec e_l$ will be a strictly better solution than $\tilde{\vec y}$, 
    which contradicts to that $\tilde{\vec y}$ is an optimal solution.
    
    Consider every $j$ with  $\tilde{y}_j \geq 
    \left\lceil Z  n_j \sqrt{\theta_j} \cdot k \right\rceil$.
    Since  $\tilde{y}_i \leq \left
    \lfloor Z n_i \sqrt{\theta_i}\cdot k -\sum_{j\ne i} \delta_j  \right\rfloor - 1
    \le \lfloor Z n_i \sqrt{\theta_i}\cdot k \rfloor - 1$, 
    by Lemma~\ref{lem:floorminus1}, it must be that
    $\tilde{y}_j = \left\lceil Z  n_j \sqrt{\theta_j} \cdot k \right\rceil$ and
    $\left\lceil Z  n_j \sqrt{\theta_j} \cdot k \right\rceil 
    \left (\left\lceil Z  n_j \sqrt{\theta_j} \cdot k \right\rceil - 1\right) <
    (Z  n_j \sqrt{\theta_j} \cdot k)^2$.
    By the definition of $\delta_j$, we know that  $\delta_j = \tilde{y}_j - Z  n_j \sqrt{\theta_j} \cdot k $.
    Then we have
    \begin{align*}
    \sum_{j=1}^m \tilde{y}_j 
    & =   \tilde{y}_i + 
    \sum_{j: \tilde{y}_j \geq 
        \left\lceil Z  n_j \sqrt{\theta_j} \cdot k \right\rceil} \tilde{y}_j +
    \sum_{j\ne i: \tilde{y}_j < 
        \left\lceil Z  n_j \sqrt{\theta_j} \cdot k \right\rceil} \tilde{y}_j \\
    & < \left\lfloor Z n_i \sqrt{\theta_i}\cdot k -\sum_{j\ne i} \delta_j  \right\rfloor - 1 + 
    \sum_{j: \tilde{y}_j \geq 
        \left\lceil Z  n_j \sqrt{\theta_j} \cdot k \right\rceil} 
    (Z  n_j \sqrt{\theta_j} \cdot k + \delta_j)  
    + \sum_{j\ne i: \tilde{y}_j < 
        \left\lceil Z  n_j \sqrt{\theta_j} \cdot k \right\rceil} Z  n_j \sqrt{\theta_j} \cdot k\\
    & \le \sum_{j=1}^m Z  n_j \sqrt{\theta_j} \cdot k -1- \sum_{j\ne i} \delta_j + 
    \sum_{j: \tilde{y}_j \geq 
        \left\lceil Z  n_j \sqrt{\theta_j} \cdot k \right\rceil} \delta_j \\
    & \le k-1-\sum_{j\ne i: \tilde{y}_j < 
        \left\lceil Z  n_j \sqrt{\theta_j} \cdot k \right\rceil} \delta_j \\
    & \le k-1.
    \end{align*}
    This again means that the budget is not fully utilized by $\tilde{\vec y}$, and thus $\tilde{\vec y}$
    cannot be an optimal solution, a contradiction.
    
    We now prove Claim (b). 
    We define 
    \begin{align*}
    M_{i,j}=h(\vec{\theta}; \vec y^{(0)}+(j-1)\vec e_i)-h(\vec{\theta}; \vec y^{(0)}+j\vec e_i)=\frac{n_i^2\theta_i}{ \left(y_i^{(0)}+j-1\right)}-\frac{n_i^2\theta_i}{\left(y_i^{(0)}+j\right)}.
    \end{align*}
    Then
    $M_{i,1}>M_{i,2}>M_{i,3}>\cdots$ for any $i\in [m]$.
    
    For any $\vec y$ with $y_i>y_i^{(0)}$ for all $i\in [m]$,
    \begin{align*}
    h(\vec{\theta}; \vec y)=h(\vec{\theta}; \vec y^{(0)})-\sum_{i=1}^m \sum_{j=1}^{y_i-y_i^{(0)}}M_{i,j}.
    \end{align*}
    Thus Problem~(\ref{proof: lp}) can be written as the following problem:
    \begin{align*}
    \max\quad   & \sum_{i=1}^m \sum_{j=1}^{z_i}M_{i,j}\\
    \mathrm{subject\ to}\quad  & \sum_{i=1}^m z_i \leq k-\sum_{i=1}^m y_i^{(0)}\\
    & z_i \geq 0, \forall i\in[m]. 
    \end{align*}
    The solution $\vec{y}$ to Problem~(\ref{proof: lp}) satisfies $y_i = z_i + y_i^{(0)}$ for each $i\in [m]$,
    where $\vec{z}=(z_1, \ldots, z_m)$ is a solution to the above problem.
    
    Since $M_{i,1}>M_{i,2}>M_{i,3}>\cdots$ for every $i\in [m]$, the 
    above problem is equivalent to find the $k-\sum_{i=1}^m y_i^{(0)}$ maximum elements in $\{M_{i,j}\}$. 
    We can first find the maximum elements in $(M_{1,1}, M_{2,1},\ldots, M_{m,1})$, i.e.,
    $C^{(1)} = \argmax_{i\in [m]} M_{i,1}$, and then replace all elements $M_{i,1}$ where $i\in C^{(1)}$ with
    $M_{i,2}$ and find the new maximum elements $C^{(2)}$ in the second iteration, and continue this process
    until we find enough elements. 
    This is exactly the \alg{GreedyOSA} algorithm given in Algorithm~\ref{alg:greedy}.
    
    When there are more than one optimal solutions, in the tie-breaking 
    step (line~\ref{line:tiebreaking}), any subset $C_s'$ of size $k-\sum_{i=1}^m y_i^{(t-1)}$ leads
    to one optimal solution, and every optimal solution is from such a subset.
    Since in line~\ref{line:tiebreaking} we take the elements with the largest values, that means
    the largest $k-\sum_{i=1}^m y_i^{(t-1)}$ dimensions have their $\vec{y}^{(t-1)}$ values incremented
    by $1$, while any other such subset will cause the increment of some other dimension with a smaller index.
    Therefore, the output by \alg{GreedyOSA} is the one lexicographically ordered the first among
    all optimal solutions.
    
    Finally, as for the running time, we know that $\delta_j < 1$, and thus $\sum_{j\ne i} \delta_j = O(m)$.
    Thus, the \alg{GreedyOSA} algorithm starts from a budget of $k - O(m^2)$ and ends when the budget
    $k$ is used up.
    Therefore, the greedy algorithm needs
    at most $O(\min\{m^2, k\})$ steps.
    When we use a priority queue for selecting the maximum value in line~\ref{line:max}, each greedy step
    takes $O(\log m)$ time.
    Therefore, the running time of \alg{GreedyOSA} is $O(\min\{m^2, k\} \log m)$.
\end{proof}

\begin{lemma}\label{lem:tight}
    There exists some problem instance
    in which the optimal solution has $y^{(0)}_i = 
    \left\lfloor Z n_i \sqrt{\theta_i} \cdot k - \sum_{j\ne i} \delta_j \right\rfloor$ for some $i\in [m]$,
    and $\sum_{j\ne i} \delta_j = 0.5(m-1) = \Theta(m)$.
\end{lemma}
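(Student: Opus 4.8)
The plan is to exhibit an explicit OSA instance (in the sense of Definition~\ref{def:osa}) on $m$ groups in which the correction term $\sum_{j\ne i}\delta_j$ is activated in full. For $j=1,\dots,m-1$ I would take $n_j=2$ and $\theta_j=9/16$, so that $n_j\sqrt{\theta_j}=3/2$. For the last group I would choose $n_m\in\mathbb{Z}_+$ and $\theta_m\in[0,1]$ so that $v_m:=n_m\sqrt{\theta_m}$ is a large number and $k:=v_m+\tfrac32(m-1)$ is a positive integer, and take this $k$ as the sample budget. Setting $i:=m$, this gives $Z=1/\sum_l n_l\sqrt{\theta_l}=1/k$ and hence $v_l:=Z\,n_l\sqrt{\theta_l}\,k=n_l\sqrt{\theta_l}$ for every $l$; thus $v_j=3/2$ for $j<m$ while $v_m=k-\tfrac32(m-1)$ is large (and $\sum_l v_l=k$ holds identically).

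First I would compute the slacks. For each $j<m$ we have $\lceil v_j\rceil=2$ and $\lceil v_j\rceil(\lceil v_j\rceil-1)=2<9/4=v_j^2$, so the test defining $\delta_j$ in Algorithm~\ref{alg:greedy} fails and $\delta_j=\lceil v_j\rceil-v_j=1/2$. Hence $\sum_{j\ne i}\delta_j=(m-1)/2=\Theta(m)$. Since $v_i=v_m$ is large, the base takes the floor branch, so $y^{(0)}_i=\lfloor v_i-(m-1)/2\rfloor=k-2(m-1)$, which is an integer.

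The core step is to verify that the integral optimum is the point $\vec y$ with $y_j=2$ for $j<m$ and $y_i=k-2(m-1)$, so that its $i$-th coordinate meets the base $y^{(0)}_i$ exactly and the bound from Claim~(a) in the proof of Theorem~\ref{thm:OSA} is tight. Here I would invoke the exchange/threshold characterization underlying that proof: writing $g_l(y)=n_l^2\theta_l/((y-1)y)=v_l^2/((y-1)y)$ for the marginal gain of the $y$-th unit (these are the strictly decreasing $M$-values of that argument), a budget-saturating feasible point is optimal iff $\min_l g_l(y_l)\ge\max_l g_l(y_l+1)$, since each term $v_l^2/y_l$ is convex and decreasing. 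For $j<m$ we have $g_j(2)=9/8$ and $g_j(3)=3/8$, while for the last coordinate $g_i(y_i)>g_i(y_i+1)=v_i^2/(y_i(y_i+1))$ with $y_i=v_i-(m-1)/2$, and $g_i(y_i+1)\to 1^{+}$ as $v_m\to\infty$. Therefore, for $v_m$ (equivalently $k$) large enough, $g_i(y_i+1)\in(3/8,9/8)$, whence $\min_l g_l(y_l)=\min(9/8,g_i(y_i))\ge g_i(y_i+1)=\max_l g_l(y_l+1)$; no unit transfer improves $\vec y$, so it is a global optimum with $y_i=k-2(m-1)=y^{(0)}_i$, giving the lemma. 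The main obstacle is precisely this optimality check: one must confirm that all $m-1$ unit ``grabs'' of the small groups remain simultaneously profitable \emph{even after} coordinate $i$ has been driven all the way down to $v_i-(m-1)/2$, and taking $v_m$ large is exactly what keeps coordinate $i$'s marginal value just below the small groups' grab value $9/8$.
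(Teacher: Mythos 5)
Your proposal is correct, and your instance is essentially the paper's own construction in different clothing: the paper takes $a_1=c(m-1)$, $a_2=\cdots=a_m=1$ and budget $k=1.5(c+1)(m-1)$, so that after scaling the $m-1$ small coordinates sit at real-valued optimum $1.5$ (each contributing slack $\delta_j=0.5$) and one huge coordinate absorbs the $\Theta(m)$ correction; your normalization $Zk=1$ with $v_j=3/2$ and $v_m$ large is the same instance family, including the implicit parity requirement (the paper takes $m$ odd so that $k$ is an integer; you need the analogous adjustment to make $v_m+\tfrac32(m-1)\in\mathbb{Z}$). Where you genuinely diverge is the optimality verification. The paper argues locally: it checks all three types of single-unit swaps $\tilde{\vec y}-\vec e_i+\vec e_j$ (one case delegated to Lemma~\ref{lem:floorminus1}, one by an explicit computation valid for large $c$), and then needs an extra path-of-local-moves argument to upgrade local optimality to global optimality and uniqueness. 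You instead invoke the prefix-greedy structure already established in Claim~(b) of the proof of Theorem~\ref{thm:OSA}: a budget-saturating point is globally optimal once every taken marginal gain dominates every untaken one, which collapses the whole verification to the single inequality $\min\bigl(9/8,\,g_i(y_i)\bigr)\ge g_i(y_i+1)$, checked by letting $v_m\to\infty$ so $g_i(y_i+1)\to 1^+<9/8$. Your route is shorter, avoids the case analysis, and uses the machinery the paper already built; the paper's route buys an explicit uniqueness statement, which matters because the lemma says ``\emph{the} optimal solution'' and because tightness of the base is only fully demonstrated when \emph{all} optima sit at the base. That said, your strict inequalities ($9/8>g_i(y_i+1)$ and $g_i(y_i)>g_i(y_i+1)$, both strict for $v_m$ large) already imply uniqueness by the same exchange argument, so you should simply state and harvest that consequence rather than stopping at ``a global optimum.''
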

\begin{proof}
    For convenience, let $a_i = n_i \sqrt{\theta_i}$.
    For some positive integer $c$, we set
    $a_1 = c(m-1)$, and $a_2 = \cdots =a_m = 1$.
    This can be achieved by properly setting $\{n_i\}$'s and $\{\theta_i\}$'s.
    Set the sample budget $k = 1.5(c+1)(m-1)$.
    We could set $m$ as an odd number so $k$ is an integer.
    For this instance, the real-valued optimal solution based on the Cauchy-Schwartz Inequality is
    $\alpha_i = Z n_i \sqrt{\theta_i} \cdot k = 
    \frac{a_i}{(c+1)(m-1)}\cdot k$.
    For $i\ge 2$, we have $\alpha_i = 1.5$.
    Note that $\lceil \alpha_i \rceil (\lceil \alpha_i \rceil - 1) = 2 < \alpha_i^2$, so
    $\delta_i = \lceil \alpha_i \rceil - \alpha_i = 0.5$ for all $i\ge 2$.
    For $i=1$, $\alpha_1 = \frac{c}{c+1} \cdot (1.5(c+1)(m-1) ) = 1.5c(m-1)$, and thus
    $\lfloor \alpha_1 \rfloor = \alpha_1 = 1.5c(m-1)$.
    We now claim that $\tilde{\vec y} = (\lfloor \alpha_1 - \sum_{j\ne 1} \delta_j \rfloor, 2, 2, \ldots, 2)
    = (\alpha_1 - 0.5(m-1), 2, 2, \ldots, 2)$ is the unique integral
    optimal solution for the above problem instance.
    This would prove the lemma.
    
    First, we have $\sum_{i=1}^m \tilde{y}_i = \alpha_1 -  0.5(m-1) + 2(m-1)
    = 1.5 c(m-1) + 1.5 (m-1) = 1.5(c+1)(m-1) = k$.
    Thus $\tilde{\vec y}$ is a feasible solution and it fully utilizes the budget $k$.
    
    We next prove that for any $i,j \in [m]$, $i\ne j$, $\tilde{\vec y}$ is strictly better than
    $\vec{y}' = \tilde{\vec y} - \vec{e}_i + \vec{e}_j$, when $c$ is large enough.
    In the first case, we consider $i\ne 1$ and $j \ne 1$. 
    Then 
    \begin{align*}
    h(\vec{\theta}; \tilde{\vec y} ) - h(\vec{\theta}; \tilde{\vec y} - \vec{e}_i + \vec{e}_j) 
     = \frac{a_i^2}{\tilde{y}_i} + \frac{a_j^2}{\tilde{y}_j} - 
    \left(  \frac{a_i^2}{\tilde{y}_i - 1} + \frac{a_j^2}{\tilde{y}_j + 1} \right) 
     = \frac{1}{2} + \frac{1}{2} - (1 + \frac{1}{3} )  < 0.
    \end{align*}
    Thus, $\tilde{\vec y}$ is a strictly better solution than $\tilde{\vec y} - \vec{e}_i + \vec{e}_j$.
    In the second case, we have $i=1$ and $j \ne 1$.
    Then $y'_1 = \lfloor \alpha_1 \rfloor - 0.5(m-1) - 1$, and $y'_j = 3 = \lceil \alpha_j \rceil + 1$.
    By Lemma~\ref{lem:floorminus1}, we know that $\vec{y}'$ cannot be the optimal solution, so
    $h(\tilde{\vec y} ) < h(\tilde{\vec y} - \vec{e}_i + \vec{e}_j)$.
    In the third and final case, we have $i \ne 1$ and $j = 1$.
    Then we have
    \begin{align*}
     h(\vec{\theta}; \tilde{\vec y} ) - h(\vec{\theta}; \tilde{\vec y} - \vec{e}_i + \vec{e}_j) 
    & = \frac{a_i^2}{\tilde{y}_i} + \frac{a_j^2}{\tilde{y}_j} - 
    \left(  \frac{a_i^2}{\tilde{y}_i - 1} + \frac{a_j^2}{\tilde{y}_j + 1} \right) \\
    & = \frac{a_1^2}{\tilde{y}_1(\tilde{y}_1+1)} - \frac{a_i^2}{\tilde{y}_i(\tilde{y}_i-1)} \\
    & = \frac{c^2(m-1)^2}{(1.5c(m-1) - 0.5(m-1))(1.5c(m-1) - 0.5(m-1) + 1)} - \frac{1}{2}  \\
    & = \frac{8c^2(m-1)^2 - \left( 9c^2(m-1)^2 -6c(m-1)^2 + (m-1)^2 + (6c-2)(m-1)\right) }{2(3c(m-1) - (m-1))(3c(m-1) - (m-1) + 2)} \\
    & = \frac{- c^2(m-1)^2  + 6c(m-1)^2 - (m-1)^2 - (6c-2)(m-1)}{2(3c(m-1) - (m-1))(3c(m-1) - (m-1) + 2)}.
    \end{align*}
    It is clear that for a large enough $c$, the above formula is negative, which means
    $h(\vec{\theta}; \tilde{\vec y} ) < h(\vec{\theta}; \tilde{\vec y} - \vec{e}_i + \vec{e}_j)$.
    
    Therefore, we know that, for a large enough $c$, making any local change on $\tilde{\vec y}$ by
    decrementing one dimension by $1$ and incrementing another dimension by $1$ will always lead to strictly worse
    solutions.
    Then we claim that $\tilde{\vec y}$ must be the unique optimal solution.
    Suppose, there is another optimal solution $\vec{y}'$.
    We can move from $\tilde{\vec y}$ to $\vec{y}'$ by a series of local change steps.
    In each step, only two dimensions change, and one is incremented by $1$ and the other is decremented by $1$.
    Furthermore, on every dimension $i$, if $\tilde{y}_i > y'_i$, then only decrements occur on this dimension, and
    if $\tilde{y}_i < y'_i$, only increments occur in this dimension.
    Since $\vec{y}'$ is another optimal solution, we have
    $h(\vec{\theta}; \vec{y}') \le h(\vec{\theta}; \tilde{\vec y})$.
    Thus, in at least one local step in the above series, the objective function does not increase.
    Suppose this local step is from $\vec y$ to $\vec y - \vec{e}_i + \vec{e}_j$.
    Thus, we have $h(\vec{\theta}; \vec y) \ge h(\vec{\theta}; \vec y - \vec{e}_i + \vec{e}_j)$.
    By the above argument, we also have $\tilde{y}_i \ge y_i > y_i - 1 \ge y'_i$, and
    $\tilde{y}_j \le y_j < y_j + 1 \le y'_j$.
    Then we have
    \begin{align*}
    0  \le h(\vec{\theta}; \vec y) - h(\vec{\theta}; \vec y - \vec{e}_i + \vec{e}_j) 
     = \frac{a_j^2}{{y}_j({y}_j+1)} - \frac{a_i^2}{{y}_i({y}_i-1)} 
     \le \frac{a_j^2}{\tilde{y}_j(\tilde{y}_j+1)} - \frac{a_i^2}{\tilde{y}_i(\tilde{y}_i-1)} 
     = h(\vec{\theta}; \tilde{\vec y}) - h(\vec{\theta}; \tilde{\vec y} - \vec{e}_i + \vec{e}_j).
    \end{align*}
    However, the above contradicts the claim we proved above that any local change on $\tilde{\vec y}$
    strictly increases the objective function.
    Therefore, we cannot find any other optimal solution, and $\tilde{\vec y}$ is the unique optimal
    solution.
\end{proof}

\begin{restatable}{lemma}{bimonotonicityofosa}\label{lem:bimonotonicity}
    The leading optimal solution $\g(\vec\theta)$ of OSA satisfies the \monotonicity.
\end{restatable}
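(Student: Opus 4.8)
The plan is to reduce the statement to a monotonicity property of a separable convex resource-allocation problem and then read off the bi-monotonicity pattern from a threshold (discrete KKT) characterization of its optimum. As in the proof of Theorem~\ref{thm:OSA}, I drop the immaterial factor $1/n^2$ and work with the minimization objective $h(\vec\theta;\vec y)=\sum_{\ell=1}^m n_\ell^2\theta_\ell/y_\ell$ over $\vec y\in\mathbb{Z}_+^m$ with $\sum_\ell y_\ell\le k$. The basic object is the marginal gain of raising $y_\ell$ from $v$ to $v+1$,
\[
M_\ell(v)\;=\;\frac{n_\ell^2\theta_\ell}{v}-\frac{n_\ell^2\theta_\ell}{v+1}\;=\;\frac{n_\ell^2\theta_\ell}{v(v+1)},
\]
which is strictly decreasing in $v$ and strictly increasing in $\theta_\ell$. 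Since $k\ge m$ and raising any $y_\ell$ with $\theta_\ell>0$ strictly decreases $h$, every optimal solution uses the full budget $\sum_\ell y_\ell=k$ (coordinates with $\theta_\ell=0$ sit at the minimal allocation $y_\ell=1$ and may be set aside). The goal is to show that $\g(\vec\theta)$ obeys the second bi-monotonicity pattern: $\g_\ell(\vec\theta)$ is non-decreasing in $\theta_\ell$ and non-increasing in $\theta_j$ for every $j\ne\ell$, which is exactly \monotonicity{}.

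First I would record the threshold characterization implied by the greedy rule of Theorem~\ref{thm:OSA}. Because $\g(\vec\theta)$ keeps selecting the largest marginal gains until the budget is exhausted, in the generic case where all the values $\{M_\ell(v)\}_{\ell,v}$ are distinct the optimum is unique and there is a threshold $\lambda=\lambda(\vec\theta)$ with
\[
\g_\ell(\vec\theta)-1\;=\;\bigl|\{\,v\ge 1:\,M_\ell(v)>\lambda\,\}\bigr|\qquad\text{for all }\ell\in[m].
\]
This count is non-increasing in $\lambda$ and non-decreasing in $\theta_\ell$, and $\lambda$ is pinned down by the budget equation $\sum_\ell \g_\ell(\vec\theta)=k$.

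Next I would prove the monotonicity in this generic case. Fix $i$ and take $\theta_i'>\theta_i$ with $\theta_j'=\theta_j$ for $j\ne i$; write $\lambda,\lambda'$ for the two thresholds. If $\lambda'\ge\lambda$, then for each $j\ne i$ the unchanged $\theta_j$ together with the weakly larger threshold give $\g_j(\vec\theta')\le\g_j(\vec\theta)$, and the budget identity $\sum_\ell\g_\ell(\vec\theta')=\sum_\ell\g_\ell(\vec\theta)=k$ forces $\g_i(\vec\theta')\ge\g_i(\vec\theta)$. If instead $\lambda'<\lambda$, then each $j\ne i$ gets $\g_j(\vec\theta')\ge\g_j(\vec\theta)$, while for coordinate $i$ both effects (larger $\theta_i$ and smaller threshold) enlarge the selected set, so $\g_i(\vec\theta')\ge\g_i(\vec\theta)$; summing, $\sum_\ell(\g_\ell(\vec\theta')-\g_\ell(\vec\theta))\ge 0$ with total zero, whence every term vanishes and $\g(\vec\theta')=\g(\vec\theta)$. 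In either case $\g_i$ rises while every $\g_j$ ($j\ne i$) falls, and since $i$ is arbitrary this is precisely \monotonicity{}.

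Finally I would remove the genericity assumption, which I expect to be the main obstacle, since ties at the threshold are resolved by the lexicographic rule of line~\ref{line:tiebreaking} (largest indices first) and the clean threshold argument above does not directly apply. The plan is a tie-breaking-consistent perturbation: replace $\theta_\ell$ by $\theta_\ell(1+\varepsilon\ell)$ for small $\varepsilon>0$, which scales $M_\ell(v)$ by $1+\varepsilon\ell$. For all but finitely many $\varepsilon$ this makes every $M_\ell(v)$ distinct, so the perturbed instance is generic; moreover, among increments that were tied (necessarily from distinct coordinates, as $M_\ell(\cdot)$ is strictly decreasing) the perturbation strictly prefers the larger index, so for small $\varepsilon$ the unique perturbed optimum coincides with the leading optimal solution $\g$ of the original instance. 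The same per-coordinate factor is applied to $\vec\theta$ and to $\vec\theta'$, so the perturbed pair still differs only in coordinate $i$ with $\theta_i'>\theta_i$ preserved; applying the generic result and letting $\varepsilon\to 0^+$ transfers $\g_i(\vec\theta')\ge\g_i(\vec\theta)$ and $\g_j(\vec\theta')\le\g_j(\vec\theta)$ back to the original instance, establishing \monotonicity{} for $\g$. The delicate points to verify in full are that a single small $\varepsilon$ simultaneously de-ties both instances and recovers both leading optima, and that this perturbation direction indeed matches the tie-breaking of line~\ref{line:tiebreaking}.
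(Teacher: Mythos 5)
Your proposal is correct and rests on the same foundation as the paper's proof: claim (b) in the proof of Theorem~\ref{thm:OSA} with base point $\vec y^{(0)}=(1,\dots,1)$, which turns OSA into selecting the $k-m$ largest marginal gains $M_{i,l}=\frac{n_i^2\theta_i}{l}-\frac{n_i^2\theta_i}{l+1}$, combined with the fact that each $M_{i,l}$ is increasing in $\theta_i$. Where you diverge is in how monotonicity is extracted from this characterization. The paper finishes by a direct slot-counting argument: if $\theta_i$ increases, every element of row $i$ increases while all other rows are unchanged, so row $i$ retains at least as many of the top $k-m$ slots and $\g_i$ cannot decrease; if $\theta_j$ decreases ($j\ne i$), row $j$ retains at most as many slots, so every other row, in particular row $i$, retains at least as many. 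This counting argument needs no genericity assumption, so the threshold-plus-budget-identity case analysis and the $\varepsilon$-perturbation in your plan are avoidable overhead. On the other hand, your perturbation step addresses a point the paper glosses over: under ties, ``the $k-m$ maximum elements'' is ambiguous, and the leading optimal solution is pinned down by the largest-index tie-breaking of line~\ref{line:tiebreaking}; your observation that $\theta_\ell\mapsto\theta_\ell(1+\varepsilon\ell)$ orders tied cross-row elements exactly as that rule does is a legitimate way to make the argument rigorous (alternatively, one can check directly that the counting argument survives any fixed consistent tie-breaking). One gap to patch in your version: the multiplicative perturbation fails to de-tie coordinates with $\theta_\ell=0$ (it maps $0$ to $0$). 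This is harmless---when some coordinate is positive, zero rows are never selected since every positive row contributes infinitely many positive gains, and in the all-zero instance bi-monotonicity holds trivially because the comparison instance with a single positive coordinate $i$ has $\g_i=k-m+1$ and $\g_j=1$ for all $j\ne i$---but your write-up should say so rather than setting these coordinates aside informally.
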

\begin{proof}
    Recall the proof of Theorem~\ref{thm:OSA}. In claim (b), if we let the initial point $\vec y^{(0)}$ to be $(1,1,\dots,1)$, then the OSA problem is equivalent to find the $k-m$ maximum elements in $\{M_{i,l}\}$, where $M_{i,l}=\frac{n_i^2\theta_i}{ l}-\frac{n_i^2\theta_i}{l+1}$.
    For any $i\in [m]$, if $\theta_i$ increases,  $M_{i,l}$ for any $l$ will also increase.
    This means the $k-m$ maximum elements will contain at least as many elements of $\{M_{i,\cdot}\}$ as before the increase.
    Thus $\g_i(\vec\theta)$ will not decrease.
    Similarly, 
    if $\theta_j$ decreases where $j\neq i$, $M_{j,l}$ for any $l$ will also decrease.
    This means the $k-m$ maximum elements will contain the same number of less elements of $\{M_{j,\cdot}\}$ and thus the same number or more elements of $\{M_{i,\cdot}\}$.
    Therefore, $\g_i(\vec\theta)$ will not decrease.
\end{proof}

\section{Proofs for Section~\ref{sec:appcpel}: Applying COCI to CPE-L}\label{app:applycpe-l}

To align CPE-L to CPE-CS, we treat $\cY\subseteq \{0,1\}^m$ equivalently as a collection of subsets of $[m]$, and
use $\vec{y} \in \cY$ and $S_{\vec{y}} \in \cY$ interchangeably.
For any vector $\vec{\theta}$ and any subset $S\subseteq [m]$, we define
$\vec{\theta}(S) = \sum_{i\in S} \theta_i$.
To prove Lemma~\ref{lem:LambdaDelta}, we first introduce the terminologies used in \cite{chen2014combinatorial}. 

An exchange set $b$ is an ordered pair of disjoint sets $b=(b_+,b_-)$ where $b_+\cap b_- = \emptyset$ and $b_+,b_-\subseteq [m]$.
Then, we define operator $\oplus$ such that, for any set $S \subseteq [m]$ and any exchange set $b=(b_+,b_-)$, we have $S\oplus b \triangleq (S \setminus b_-) \cup b_+$.
Similarly, we also define operator $\ominus$ such that $S \ominus b \triangleq (S \setminus b_+) \cup b_-$.
Define $\width(b) = |b_-| + |b_+|$.

A collection of exchange sets $\cB$ is an \emph{exchange class for $\cY$} if $\cB$ satisfies the following property.
For any $S,S'\in \cY$ such that $S\not = S'$  and for any $i \in (S\setminus S')$, there exists an exchange set $b =(b_+,b_-)\in \cB$ which satisfies five constraints: \textbf{(a)} $i \in b_-$, \textbf{(b)} $b_+ \subseteq S'\setminus S$, \textbf{(c)} $b_- \subseteq S \setminus S'$, \textbf{(d)} $(S\oplus b) \in \cY$ and \textbf{(e)} $(S'\ominus b) \in \cY$.
The width of the exchange class $\cB$ is defined as $\width(\cB) = \max_{b  \in \cB} \width(b)$.
%
%
Let $\Exchange(\cY)$ denote the family of all possible exchange classes for $\cY$.
Then the width of $\cY$ is defined as $\width(\cY) = \min_{\cB \in \Exchange(\cY)} \width(\cB)$.

{\LambdaDelta*}
\begin{proof}
    For any $\vec{\theta}$ with $\g_i(\vec{\theta}) \ne \g_i(\Realtheta)$, we claim that
    $\norm{\vec{\theta} - \Realtheta}_\infty \ge \Delta_i/\width(\cY)$.
    If this claim holds, by the definition of $\Lambda_i$ (Definition~\ref{def:Lambdai}) and $\hardL$, 
    we have $\Lambda_i = \inf_{\vec\theta: \g_i(\vec\theta) \neq \g_i(\Realtheta)} \left\| \vec\theta-\Realtheta \right\|_\infty
    \ge \Delta_i/\width(\cY)$, and thus $\hardL \le \hardD \cdot \width(\cY)^2$ holds.
    
    We now prove the claim by contradiction, that is, we assume that 
    there exists some $\vec\theta$ with $\g_i(\vec{\theta}) \ne \g_i(\Realtheta)$ such that
    $\norm{\vec{\theta} - \Realtheta}_\infty < \Delta_i/\width(\cY)$.
    Let $\cB \in \argmin_{\cB' \in \Exchange(\cY)} \width(\cB')$.
    We discuss two cases separately: $i \in S_{\vec{y}^*}$ and $i \notin S_{\vec{y}^*}$.
    
    {\bf (Case 1)} If $i \in S_{\vec{y}^*}$, then $\g_i(\Realtheta)=1$, and thus $\g_i(\vec{\theta}) = 0$.
    Let $\vec{y} = \g(\vec{\theta})$.
    Then we have $i\not\in S_{\vec{y}}$.
    By the definition of the exchange class, we know that for $S_{\vec{y}^*}$ and $S_{\vec{y}}$, there exists
    an exchange set $b=(b_+,b_-)\in \cB$ such that \textbf{(a)} $i \in b_-$, 
    \textbf{(b)} $b_+ \subseteq S_{\vec{y}}\setminus S_{\vec{y}^*}$, 
    \textbf{(c)} $b_- \subseteq S_{\vec{y}^*} \setminus S_{\vec{y}}$, 
    \textbf{(d)} $(S_{\vec{y}^*}\oplus b) \in \cY$, and
    \textbf{(e)} $(S_{\vec{y}}\ominus b) \in \cY$.
    
    Let $\vec{y}'$ be the binary vector corresponding to $S_{\vec{y}^*}\oplus b$, i.e., $S_{\vec{y}^*}\oplus b = S_{\vec{y}'}$.
    Then $r(\Realtheta; \vec{y}') = r(\Realtheta; \vec{y}^*) - \Realtheta(b_-) + \Realtheta(b_+)$.
    Since $S_{\vec{y}^*}$ is the unique optimal solution under $\Realtheta$ while $i\in S_{\vec{y}^*}$, 
    $i \not\in  S_{\vec{y}'}$, we have $\Delta_i = r(\Realtheta; \vec{y}^*)- 
    \max_{\vec{y}\in {\cY}, i\not\in S_{\vec y} } r(\Realtheta; \vec y) 
    \le r(\Realtheta; \vec{y}^*) - r(\Realtheta; \vec{y}') = \Realtheta(b_-) - \Realtheta(b_+)$.
    We remark that this is also the interpolation lemma (Lemma 2) in \cite{chen2014combinatorial}, supplementary material.
    On the other hand, we have 
    \begin{align}
    \Realtheta(b_-) - \Realtheta(b_+) & = \sum_{i\in b_-} \realtheta_i - \sum_{i\in b_+} \realtheta_i \nonumber \\
    & = \sum_{i\in b_-} (\realtheta_i - \theta_i) + \sum_{i\in b_-} \theta_i - 
    \sum_{i\in b_+} (\realtheta_i - \theta_i) - \sum_{i\in b_+} \theta_i \nonumber \\
    & \le \sum_{i \in b_- \cup b_+} |\realtheta_i - \theta_i| + \sum_{i\in b_-} \theta_i - \sum_{i\in b_+} \theta_i 
    \nonumber \\
    & < \frac{\Delta_i}{\width(\cY) } \cdot \width(b) + \sum_{i\in b_-} \theta_i - \sum_{i\in b_+} \theta_i 
    \label{eq:Deltaassumption} \\
    & \le \Delta_i + \sum_{i\in b_-} \theta_i - \sum_{i\in b_+} \theta_i
    \label{eq:widthdef} \\
    & \le \Delta_i. \label{eq:exchangelast}
    \end{align}
    
    Note that Inequality~\eqref{eq:Deltaassumption} is by the assumption of 
    $\norm{\vec{\theta} - \Realtheta}_\infty < 	\Delta_i/\width(\cY)$, 
    Inequality~\eqref{eq:widthdef} is because $\width(b) \le \width(\cB) = \width(\cY)$, and
    Inequality~\eqref{eq:exchangelast} is  because
    $\sum_{i\in b_-} \theta_i - \sum_{i\in b_+} \theta_i = \vec{\theta}(S_{\vec{y}}\ominus b) - \vec{\theta}(S_{\vec{y}})$,
    and since $S_{\vec{y}}$ is the optimal solution under $\vec{\theta}$, 
    $\vec{\theta}(S_{\vec{y}}\ominus b) \le \vec{\theta}(S_{\vec{y}})$.
    Therefore, we have $\Realtheta(b_-) - \Realtheta(b_+) < \Delta_i$, which contradicts the conclusion of
    $\Delta_i \le \Realtheta(b_-) - \Realtheta(b_+)$ reached on paragraph earlier.

    {\bf (Case 2)} If $i \not\in S_{\vec{y}^*}$, then $ \g_i(\Realtheta)=0$, and thus $\g_i(\vec{\theta}) = 1$.
    Let $\vec{y} = \g(\vec{\theta})$.
    Then we have $i \in S_{\vec{y}}$.
    Again, by the definition of the exchange class, we know that for $S_{\vec{y}}$ and $S_{\vec{y}^*}$, there exists
    an exchange set $b=(b_+,b_-)\in \cB$ such that \textbf{(a)} $i \in b_-$, 
    \textbf{(b)} $b_+ \subseteq S_{\vec{y}^*}\setminus S_{\vec{y}}$, 
    \textbf{(c)} $b_- \subseteq S_{\vec{y}} \setminus S_{\vec{y}^*}$, 
    \textbf{(d)} $(S_{\vec{y}}\oplus b) \in \cY$, and
    \textbf{(e)} $(S_{\vec{y}^*}\ominus b) \in \cY$.
    Let $\vec{y}'$ be the binary vector corresponding to $S_{\vec{y}^*}\ominus b$, i.e., $S_{\vec{y}^*}\ominus b = S_{\vec{y}'}$.
    Then $r(\Realtheta; \vec{y}') = r(\Realtheta; \vec{y}^*) - \Realtheta(b_+) + \Realtheta(b_-)$.
    Since $S_{\vec{y}^*}$ is the unique optimal solution under $\Realtheta$ while $i \not\in S_{\vec{y}^*}$, 
    $i \in  S_{\vec{y}'}$, we have $\Delta_i = r(\Realtheta; \vec{y}^*)- 
    \max_{\vec{y}\in {\cY}, i \in S_{\vec y} } r(\Realtheta; \vec y) 
    \le r(\Realtheta; \vec{y}^*) - r(\Realtheta; \vec{y}') = \Realtheta(b_+) - \Realtheta(b_-)$.
    
    On the other hand, we have 
    \begin{align*}
    \Realtheta(b_+) - \Realtheta(b_-) & = \sum_{i\in b_+} \realtheta_i - \sum_{i\in b_-} \realtheta_i \\
    & = \sum_{i\in b_+} (\realtheta_i - \theta_i) + \sum_{i\in b_+} \theta_i - 
    \sum_{i\in b_-} (\realtheta_i - \theta_i) - \sum_{i\in b_-} \theta_i \\
    & \le \sum_{i \in b_- \cup b_+} |\realtheta_i - \theta_i| + \sum_{i\in b_+} \theta_i - \sum_{i\in b_-} \theta_i \\
    & < \frac{\Delta_i}{\width(\cY) } \cdot \width(b) + \sum_{i\in b_+} \theta_i - \sum_{i\in b_-} \theta_i \\
    & \le \Delta_i + \sum_{i\in b_+} \theta_i - \sum_{i\in b_-} \theta_i \\
    & \le \Delta_i.
    \end{align*}
    
    Note that the last inequality is because 
    $\sum_{i\in b_+} \theta_i - \sum_{i\in b_-} \theta_i 
    = \vec{\theta}(S_{\vec{y}}\oplus b) - \vec{\theta}(S_{\vec{y}})$, and since
    $S_{\vec{y}}$ is the optimal solution under $\vec{\theta}$, we have $\vec{\theta}(S_{\vec{y}}\oplus b) \le \vec{\theta}(S_{\vec{y}})$.
    Again we reach a contradiction.
\end{proof}

}

\end{document}